\theoremstyle{plain}
\newcommand{\pistar}{\pi^\star}
\newcommand{\Mstar}{M^\star}
\newcommand{\pihat}{\hat{\pi}}
\newcommand{\Mhat}{\hat{M}}
\newcommand{\expec}{\mathop{{}\mathbb{E}}}
\newcommand{\gexpec}{\mathop{{}\gamma\mathbb{E}}}
\newcommand{\ouralg}{\textsc{LAMPS}}
\newcommand{\ouralgmm}{\textsc{LAMPS-MM}}
\newcommand{\sysid}{\textsc{SysID}}
\newcommand{\mbpo}{\textsc{MBPO}}
\newcommand{\mbposysid}{\textsc{MBPO-SysID}}
\newcommand{\mbposysidcompute}{\textsc{MBPO-SysID (2x)}}
\newcommand{\sprime}{s^{\prime}}
\newcommand{\sdprime}{s^{\prime\prime}}
\newcommand{\pdam}{\textsc{PDAM}}
\definecolor{green2}{rgb}{0.1,0.62,0.46}
\definecolor{orange2}{rgb}{0.85,0.37,0.0}
\newcommand{\pistarc}{ {\pistar}}
\newcommand{\pihatc}{ {\pihat}}
\icmltitlerunning{~\hfill The Virtues of Laziness in Model-based RL\hfill \thepage}
\begin{document}

\twocolumn[
\icmltitle{The Virtues of Laziness in Model-based RL: A
  Unified Objective and Algorithms}





\begin{icmlauthorlist}
\icmlauthor{Anirudh Vemula}{aurora}
\icmlauthor{Yuda Song}{cmu}
\icmlauthor{Aarti Singh}{cmu}
\icmlauthor{J. Andrew Bagnell}{aurora,cmu}
\icmlauthor{Sanjiban Choudhury}{cornell}
\end{icmlauthorlist}

\icmlaffiliation{aurora}{Aurora Innovation, Pittsburgh, PA USA}
\icmlaffiliation{cmu}{School of Computer Science, Carnegie Mellon
  University, Pittsburgh, PA USA}
\icmlaffiliation{cornell}{Computer Science, Cornell University, Ithaca
NY USA}

\icmlcorrespondingauthor{Anirudh Vemula}{vvanirudh@gmail.com}

\icmlkeywords{reinforcement learning, model-based reinforcement learning}

\vskip 0.3in
]



\printAffiliationsAndNotice{} 

\begin{abstract}
We propose a novel approach to addressing two fundamental challenges
in Model-based Reinforcement Learning (MBRL): the computational
expense of repeatedly finding a good policy in the learned model, and the
objective mismatch between model fitting and policy computation.  Our
``lazy'' method leverages a novel unified objective,
\textit{Performance Difference via Advantage in Model}, to capture the
performance difference between the learned policy and expert policy
under the true dynamics. This objective demonstrates that optimizing the
expected policy advantage in the learned model under an exploration
distribution is sufficient for policy computation, resulting in a
significant boost in computational efficiency compared to traditional
planning methods. Additionally, the unified objective uses a value
moment matching term for model fitting, which is aligned with the
model's usage during policy computation. We present two no-regret
algorithms to optimize the proposed objective, and demonstrate their
statistical and computational gains compared to existing MBRL methods
through simulated benchmarks.


\end{abstract}

\section{Introduction}
Model-based Reinforcement Learning (MBRL) methods show great promise
for real world applicability as they often require remarkably fewer
number of real world interactions compared to model-free
counterparts~\cite{schrittwieser2020mastering,hafner2023mastering}. The
key idea is that, in contrast to model-free RL that computes a policy
directly from real world data, we can perform the following iterative
procedure~\cite{sutton2018reinforcement}: we fit a model that
accurately predicts the dynamics on the data collected so far using
the learned policy.  Subsequently, we compute a policy through optimal
planning in the learned model, and use it to collect more data in the
real world. This procedure is repeated until a satisfactory policy is
learned. Theoretical studies, such as
\citet{ross2012agnostic}, have shown that this procedure can find a
near-optimal policy in a statistically efficient manner under certain
conditions, such as access to a good exploration distribution and a
rich enough model class, and this has been validated by its good
performance in practice.


However, there are two major challenges with the above procedure. The
policy computation step in each iteration relies on solving the computationally
expensive problem of finding the best policy in the learned
model. This can require a number of interactions in the model that is
exponential in the task
horizon~\cite{kearns1999approximate}. Furthermore, past
literature including~\citet{ross2012agnostic,jiang2018imperfect,vemula2020planning} among others have shown that
optimal planning in learned models can result in policies that exploit
inaccuracies in the learned model hindering fast learning and
statistical efficiency.

The second challenge pertains to the objective mismatch between model fitting and
policy computation that is extensively studied in recent
literature~\cite{farahmand2017value,lambert2020mismatch}. The model
fitting objective of minimizing prediction error is not necessarily related
to the objective of maximizing the performance of the policy, derived
from the model, in the real world. This results in a mismatch of
objectives used to fit the model and how the model is used when
computing the policy through planning. This is exacerbated in cases
where the model class is not realizable, i.e. no model in the model class
can perfectly explain true dynamics, which is often the case in real
world tasks~\cite{joseph2013rl}.

In this work, we propose a new decomposition of the performance difference between the learned policy and expert policy under true dynamics, which we coin as \textit{Performance
  Difference via Advantage in Model}. 
This leads to a unified objective that informs two major changes to the
existing MBRL procedure. Instead
of computing the optimal policy in the learned model at each
iteration, we optimize the expected policy advantage in the model
under an exploration distribution which only requires a number of
interactions in the model that is polynomial in the task horizon. For
model fitting, our new objective measures the similarity of predicted
and observed next states in terms of their value function in the
learned model. This ensures that the model is updated to be accurate
in states that are critical for policy computation, and allows for
inaccuracy in states that are irrelevant. Therefore, our proposed
unified objective encourages ``laziness'' in both steps of the MBRL
procedure, solving both the \textbf{computational expense} and
\textbf{objective mismatch} challenges.

Our contributions in this paper are as follows:
\begin{itemize}
\item A unified objective for MBRL that is both computationally more
  efficient in policy computation and resolves the objective mismatch
  in model fitting.
\item Two algorithms that leverage the laziness in the proposed
  objective to achieve tighter performance bounds than those of \citet{ross2012agnostic}.
\item An empirical demonstration through simulated benchmarks that our
  proposed algorithms result in both statistical and computational
  gains compared to existing MBRL methods.
\end{itemize}



\section{Related Work}\label{sec:related}

\textbf{Model-based RL}
Model-based Reinforcement Learning and Optimal Control have been
extensively researched in the literature, with a long line of
works~\cite{ljung1998system, morari1999model, sutton1991dyna}. Recent
work has made significant achievements in tasks with both
low-dimensional state spaces~\cite{levine2014learning, chua2018deep,
  schrittwieser2020mastering} and high-dimensional state
spaces~\cite{hafner2020mastering, wu2022daydreamer}. Theoretical
studies have also been conducted to analyze the performance guarantees
and sample complexity of model-based methods~\cite{abbasi2011regret,
  ross2012agnostic, tu2019gap, sun2019model}. However, there is a
common requirement among previous works to compute the optimal policy
from the learned model at each iteration, using methods that range
from value iteration~\cite{azar2013minimax} to black-box policy
optimization~\cite{kakade2020information, song2021pc}. In this work,
we show that computing the optimal policy in the learned model is not
necessary and propose a computationally efficient alternative that
does not compromise performance guarantees.

\textbf{RL with exploration distribution}
In this work, as well as in previous works such
as \citet{kakade2002approximate,bagnell2003psdp,ross2014aggrevate}, we
assume access to an
exploration distribution that allows us to exploit any prior
knowledge of the task to learn good policies
quickly. We also leverage a similar model-free policy search algorithm
in this work within the MBRL framework.
Recent works in the field of Hybrid Reinforcement
Learning~\cite{rajeswaran2017learning,vecerik2017leveraging,nair2018overcoming,hester2018deep,xie2021policy,song2022hybrid}
consider a related setting where both an offline dataset and online
access to interact with environment are available. However, most of these
works are in the model-free setting and require that the offline
dataset is collected from an expert policy while our model-based
setting only requires an exploration distribution that covers
the expert distribution.

\textbf{Objective Mismatch in MBRL}
Recent
works~\cite{farahmand2017value,lambert2020mismatch,voloshin2021minimax,eysenbach2021mismatched}
 identified an objective mismatch issue in MBRL, where there is a
mismatch between the
training objective (finding the maximum likelihood estimate
model) and the true objective (finding the optimal policy in real
world). To
address this issue, several works have proposed to incorporate
value-aware objectives during model
fitting. \citet{farahmand2017value,grimm2020value,voloshin2021minimax}
proposed to
find models that can correctly predict the expected successor values
over a pre-defined set of value functions and policies. \citet{modhe2021model}
used model advantage under the learned policy as the objective for model
fitting and use planning to compute the policy. \citet{ayoub2020model} present a similar approach where model fitting uses a value targeted regression objective and leverage optimism to only choose models that are consistent with the data collected so far. However, their approach assumes realizability in the model class, and requires solving an optimistic planning
problem with the constructed set of models.
Instead, we propose a unified objective for both policy and model
learning from first principles that is both value-aware and feasible
to optimize using no-regret algorithms.


\section{Preliminaries}
\label{sec:setup}
We assume the real world behaves according to an infinite horizon
discounted Markov Decision Process (MDP)
$(\Scal, \Acal, M^\star, \omega, c, \gamma)$, where $\Scal$ is the
state space, $\Acal$ is the action space,
$\Mstar: \Scal \times \Acal \to \Delta(\Scal)$ is the transition
dynamics, $c: \Scal \times \Acal \to [0,1]$ is the cost function,
$\gamma$ is the discount factor, and $\omega \in \Delta(\Scal)$ is the
initial state distribution with $\Delta(\Scal)$ defining the set of
probability distributions on set $\Scal$. The true dynamics $\Mstar$
is unknown but we can collect data in real world. We assume cost
function $c$ is known, but our results can be extended to the case
where $c$ is unknown.

For any policy $\pi: \Scal \to \Delta(\Acal)$, we denote
$D^h_{\omega, \pi}$ as state-action distribution at time $h$ if we
started from an initial state sampled from $\omega$ and executed $\pi$
until time $h-1$ in $\Mstar$. This can be generalized using
$D_{\omega, \pi} = (1 - \gamma)\sum_{h=1}^{\infty}
\gamma^{h-1}D_{\omega, \pi}^h$ which is the state-action distribution
over the infinite horizon.
In a similar fashion, we will use the notation
$d_{\omega, \pi}$ to denote the infinite horizon state distribution. 
 We denote the value function of policy $\pi$ under any transition
 function $M$ as $V^\pi_M(s)$, the state-action value function as
 $Q^\pi_M(s, a) = c(s, a) + \expec_{s' \sim M(s, a)}V^\pi_M(s')$, and
 the performance is defined as
 $J_M^\omega(\pi) = \mathbb{E}_{s \sim \omega}[V^\pi_M(s)]$. The goal
 is to find a policy
 $\pistar = \argmin_{\pi \in \Pi}J_{\Mstar}^\omega(\pi)$.

Similar to \citet{ross2012agnostic}, our approach assumes access to a
state-action exploration distribution
$\nu$ to sample from and allows us to guarantee small regret
against any policy with a state-action distribution close to $\nu$. If
$\nu$ is close to $D_{\omega, \pistar}$,
then our approach guarantees near-optimal performance. Good
exploration distributions can often be obtained in practice either
from expert demonstrations, domain knowledge, or from a desired
trajectory that we want the system to follow.




\subsection{MBRL Framework}\label{sec:framework}

{\begin{algorithm}[t]
\caption{Meta algorithm for MBRL}
\begin{algorithmic}[1]
\REQUIRE Number of iterations $T$, model class $\Mcal$, Policy class
$\Pi$, exploration distribution $\nu$
\STATE Initialize model $M_1 \in \Mcal$
\STATE Compute policy $\pihat_1$ using \textbf{ComputePolicy}
\FOR{$t = 1, \dots, T$}
\STATE Collect data in $\Mstar$ by rolling out $\pihat_t$ or sampling
from $\nu$ (with equal prob.) and add to dataset $\Dcal_t$
\STATE Fit model $\hat M_{t+1}$ to $\Dcal_t$ using \textbf{FitModel}
\STATE Compute policy $\pihat_{t+1}$ using \textbf{ComputePolicy}
\ENDFOR
\STATE \textbf{Return} Sequence of policies $\{\pihat_t\}_{t=1}^{T+1}$
\end{algorithmic}\label{alg:meta}
\end{algorithm}}

The MBRL framework of \citet{ross2012agnostic} is described as a meta
algorithm in \pref{alg:meta}. Starting with an exploration
distribution, at each iteration we collect data using both the
learned policy and the exploration
distribution, fit a model to the data collected so far, and compute a
policy using the newly learned model. Note that the model fitting and
policy computation procedures in \pref{alg:meta} are abstracted for
now.

To understand why \pref{alg:meta} would result in a policy that has
good performance in the real world $\Mstar$, let us revisit the
objective presented in \citet{ross2012agnostic}. This objective is a
result of applying an essential tool in MBRL
analysis,~\cite{kearns2002near} the \textit{Simulation Lemma} (see
\pref{lem:simulation},) twice to compute the performance difference of
any two policies $\hat{\pi}, \pistar$ in the real world $\Mstar$,
which is the quantity of interest we would like to optimize. In other
words, we would like to find a policy $\pihat$ whose performance in
$\Mstar$ is close to that of the expert $\pistar$ in
$\Mstar$.

\begin{lemma}[Performance Difference via Planning in Model]\label{lem:double}
    For any start state distribution $\omega$, policies $\hat{\pi}$,
    $\pistar$, and transition functions $\Mhat, \Mstar$ we have,
    {\small\begin{align}
        (1-&\gamma)[J_{\Mstar}^\omega(\pihatc) -
          J_{\Mstar}^\omega(\pistarc)] =  \nonumber\\
             &\underbrace{(1-\gamma)\expec_{s \sim \omega}[V^{\pihatc}_{\Mhat}(s) -
          V^{\pistarc}_{\Mhat}(s)]\label{term:dlm_1}}_{\text{Performance
               difference in the Model}}\\
        &+\underbrace{\gexpec_{\substack{(s, a) \sim D_{\omega, \pihatc} \\
             \sprime \sim \Mstar(s, a)}}[V_{\Mhat}^{\pihatc}(\sprime) - \expec_{
      \sdprime \sim \Mhat(s,
          a)}[V_{\Mhat}^{\pihatc}(\sdprime)]] \label{term:dlm_2}}_{\text{Value
             difference on states visited by learned policy}}\\
        &+ \underbrace{\gexpec_{\substack{(s, a) \sim D_{\omega, \pistarc} \\
             \sprime \sim \Mstar(s, a)}}[\expec_{
          \sdprime \sim \Mhat(s,
          a)}[V_{\Mhat}^{\pistarc}(\sdprime)] -
             V_{\Mhat}^{\pistarc}(\sprime)]}_{\text{ (Expert) Value
             difference on states visited by expert}} \label{term:dlm_3}
    \end{align}}%
\end{lemma}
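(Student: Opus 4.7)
The plan is to prove the identity by adding and subtracting two intermediate ``performance in the wrong model'' terms and then invoking the Simulation Lemma twice in complementary forms. Concretely, I would begin with the telescoping decomposition
\[
J^\omega_{\Mstar}(\pihat) - J^\omega_{\Mstar}(\pistar)
= \bigl[J^\omega_{\Mstar}(\pihat) - J^\omega_{\Mhat}(\pihat)\bigr]
+ \bigl[J^\omega_{\Mhat}(\pihat) - J^\omega_{\Mhat}(\pistar)\bigr]
+ \bigl[J^\omega_{\Mhat}(\pistar) - J^\omega_{\Mstar}(\pistar)\bigr].
\]
The middle bracket, once multiplied by $(1-\gamma)$, is literally the ``performance difference in the model'' term \eqref{term:dlm_1} (after rewriting $J^\omega_{\Mhat}$ as an expectation of $V_{\Mhat}$ under $\omega$). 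So the real work reduces to handling the first and third brackets, each of which compares the performance of a \emph{single} policy in two different transition models.

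For each of these two brackets I would apply the Simulation Lemma. The key subtlety is that the Simulation Lemma admits two symmetric variants depending on which model the rollout distribution is taken under and which model the value function lives in. For the first bracket, using $M=\Mstar$ and $M'=\Mhat$ with $\pi=\pihat$, I pick the variant that produces the rollout distribution $D_{\omega,\pihat}$ under $\Mstar$ together with the value function $V^{\pihat}_{\Mhat}$; this immediately yields \eqref{term:dlm_2}. For the third bracket, where the ``true'' and ``hat'' models have been swapped, I would use the mirrored variant so that the rollout distribution is $D_{\omega,\pistar}$ under $\Mstar$ (matching the paper's convention that $D_{\omega,\pi}$ is always under true dynamics) while the value function is still $V^{\pistar}_{\Mhat}$; this is exactly \eqref{term:dlm_3}, with the sign of the integrand flipped relative to \eqref{term:dlm_2} because the roles of $\Mstar$ and $\Mhat$ are reversed. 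Multiplying through by $(1-\gamma)$ absorbs the $\frac{1}{1-\gamma}$ in the occupancy measure and produces the $\gamma$ prefactor encoded in the \texttt{gexpec} macro.

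The main obstacle is the bookkeeping in choosing the correct form of the Simulation Lemma for each of the two ``off-policy-in-model'' brackets. Both terms \eqref{term:dlm_2} and \eqref{term:dlm_3} must end up with occupancy measures under $\Mstar$ and value functions under $\Mhat$, which forces an asymmetric choice of which form of the lemma to invoke, and this is what produces the opposite signs in the two integrands. Once this choice is made carefully and the $\gamma/(1-\gamma)$ factors are tracked, the three pieces combine exactly to the claimed identity; no further calculation beyond unrolling the Bellman recursion inside the Simulation Lemma is required.
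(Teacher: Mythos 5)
Your proposal is correct and matches the paper's proof: the paper uses exactly the same add-and-subtract telescoping into the three brackets $[J^\omega_{\Mhat}(\pihat)-J^\omega_{\Mhat}(\pistar)]$, $[J^\omega_{\Mstar}(\pihat)-J^\omega_{\Mhat}(\pihat)]$, and $[J^\omega_{\Mhat}(\pistar)-J^\omega_{\Mstar}(\pistar)]$, then applies the Simulation Lemma to the two single-policy brackets, handling the expert bracket by simple negation of the lemma (which is what your ``mirrored variant'' must reduce to, since both resulting terms keep the occupancy under $\Mstar$ and the value function under $\Mhat$). No gaps.
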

The above lemma tells us that the performance difference can be
decomposed into a sum of
three terms: term~\eqref{term:dlm_1} is the performance difference
between the two policies in the learned model $\Mhat$, and
terms~\eqref{term:dlm_2} and~\eqref{term:dlm_3} capture the difference in
values of the next states induced by the learned model and real world along
trajectories sampled from $\pihat$ and $\pistar$ in the real world
$\Mstar$ respectively. Term~\eqref{term:dlm_1} can be made small by
ensuring that the learned policy $\pihat$ achieves low costs in the
learned model $\Mhat$ by, for example, running optimal planning
in $\Mhat$ such that
\begin{equation}
  \label{eq:oc}
  \expec_{s \sim \omega} [V^{\hat \pi}_{\hat M}(s)] - \min_{\pi \in
    \Pi}\expec_{s \sim \omega}[V^{\pi}_{\Mhat}(s)] \leq
  \epsilon_{oc}
\end{equation}
Terms~\eqref{term:dlm_2} and~\eqref{term:dlm_3} can be made small
if the model has a low prediction error. This is formalized in the
corollary below by applying H\"older's inequality to these terms:

\begin{corollary}\label{cor:tv}
    For any start state distribution $\omega$,
    transition functions $\Mhat, \Mstar$, and policies $\pistar, \pihat$
    such that $\pihat$ satisfies~\eqref{eq:oc}, we have,
    {\begin{align*}
        (1-\gamma)&[J_{\Mstar}^\omega(\pihat) - J_{\Mstar}^\omega(\pistar)] \leq
                                     \epsilon_{oc}\\
                                   &+\gamma\hat{V}_{\max}\expec_{(s,
                                                          a) \sim
                                                          D_{\omega,
                                                          \pihat}}\left\|\Mhat(s,
                                                          a)-\Mstar(s,
                                                          a)\right\|_1
      \\&+\gamma V_{\max}\expec_{(s, a) \sim
      D_{\omega, \pistar}}\left\|\Mhat(s, a)-\Mstar(s, a)\right\|_1,
    \end{align*}}%
where $\hat V_{\max}=\|V_{\hat M}^{\hat \pi}\|_{\infty},
V_{\max} = \|V^{\pi^\ast}_{\hat M}\|_{\infty}$.
\end{corollary}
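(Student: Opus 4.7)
The plan is to invoke Lemma~\ref{lem:double} to decompose the quantity $(1-\gamma)[J_{\Mstar}^\omega(\pihat) - J_{\Mstar}^\omega(\pistar)]$ into the three labeled terms, and then bound each of them separately.

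For term~\eqref{term:dlm_1}, I would apply the near-optimality assumption~\eqref{eq:oc}. Since $\pistar \in \Pi$, the minimum there is no larger than $\mathbb{E}_{s\sim\omega}[V^{\pistar}_{\Mhat}(s)]$, so $\mathbb{E}_{s\sim\omega}[V^{\pihat}_{\Mhat}(s) - V^{\pistar}_{\Mhat}(s)] \le \epsilon_{oc}$, and multiplying by $(1-\gamma) \le 1$ preserves the inequality.

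The main work is on terms~\eqref{term:dlm_2} and~\eqref{term:dlm_3}. For each fixed $(s,a)$, I would rewrite the inner bracket as the integral of the value function against the signed measure $\Mstar(\cdot \mid s,a) - \Mhat(\cdot \mid s,a)$, i.e.\ $\mathbb{E}_{\sprime \sim \Mstar(s,a)}[V_{\Mhat}^{\pihat}(\sprime)] - \mathbb{E}_{\sdprime \sim \Mhat(s,a)}[V_{\Mhat}^{\pihat}(\sdprime)] = \int V_{\Mhat}^{\pihat}(s')\bigl(\Mstar(s'\mid s,a) - \Mhat(s'\mid s,a)\bigr)\,ds'$. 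H\"older's inequality then bounds this by $\|V_{\Mhat}^{\pihat}\|_\infty \cdot \|\Mstar(s,a) - \Mhat(s,a)\|_1 = \hat V_{\max}\|\Mhat(s,a) - \Mstar(s,a)\|_1$, and an identical argument with $V_{\Mhat}^{\pistar}$ handles term~\eqref{term:dlm_3} with constant $V_{\max}$. Taking outer expectations over $(s,a) \sim D_{\omega,\pihat}$ and $(s,a)\sim D_{\omega,\pistar}$ respectively, and pulling the discount factor $\gamma$ out front (as in the $\gexpec$ notation), then summing the three bounds yields the stated inequality.

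There is no real obstacle here: the result is essentially a bookkeeping corollary of Lemma~\ref{lem:double} plus one application of H\"older's inequality per dynamics-mismatch term. The only subtle point worth checking is the sign/direction in term~\eqref{term:dlm_3}, where the roles of $\sprime$ and $\sdprime$ are swapped relative to term~\eqref{term:dlm_2}; since H\"older's bound is symmetric in the two measures (both absorbed into the $\ell_1$ norm), this swap is harmless and produces the same functional form with $V_{\max}$ in place of $\hat V_{\max}$.
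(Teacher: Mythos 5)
Your proposal is correct and follows essentially the same route as the paper's own proof: decompose via Lemma~\ref{lem:double}, bound term~\eqref{term:dlm_1} by the near-optimality condition~\eqref{eq:oc} (using $\pistar \in \Pi$), and apply H\"older's inequality with $\|V_{\Mhat}^{\pihat}\|_\infty$ and $\|V_{\Mhat}^{\pistar}\|_\infty$ to terms~\eqref{term:dlm_2} and~\eqref{term:dlm_3} respectively. Your extra remarks on the $(1-\gamma)$ factor and the sign swap in term~\eqref{term:dlm_3} are both accurate and simply make explicit what the paper leaves implicit.
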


\begin{algorithm}[t]
\caption{MLE \textbf{FitModel}($\Dcal_t, \{\ell_i\}_{i=1}^{t-1}$)}
\begin{algorithmic}[1]
\REQUIRE Data $\Dcal_t$, model class $\Mcal$, previous losses $\{\ell_i\}_{i=1}^{t-1}$
\STATE Define loss $\ell_t(M)=\expec_{(s, a, s') \sim \Dcal_t}\log
M(s'|s, a)$
\STATE Compute model $\Mhat_{t+1}$ using an online no-regret
algorithm, such as Follow-the-Leader (FTL)~\cite{hazan2019oco},
\begin{align*}
    \hat M_{t+1} \leftarrow \argmin_{M \in \Mcal} \sum_{\tau=1}^t  \ell_\tau(M).
\end{align*}
\STATE \textbf{Return} $\hat M_{t+1}$
\end{algorithmic}\label{alg:model_mle}
\end{algorithm}

Most MBRL methods including \citet{ross2012agnostic} use maximum
likelihood estimation (MLE)~\pref{alg:model_mle} to bound the total
variation loss terms in Corollary~\ref{cor:tv}\footnote{We can further bound
  the total variation terms using KL divergence through Pinsker's
  inequality. Then maximizing likelihood of observed data under
  learned model would
  minimize the KL divergence.} and use optimal planning approaches to satisfy equation~\eqref{eq:oc}. Combining \pref{alg:meta} with
\pref{alg:model_mle} and equation~\eqref{eq:oc} gives us a template for
understanding existing MBRL methods.

\begin{figure}[t]
  \centering
  \includegraphics[width=0.5\linewidth]{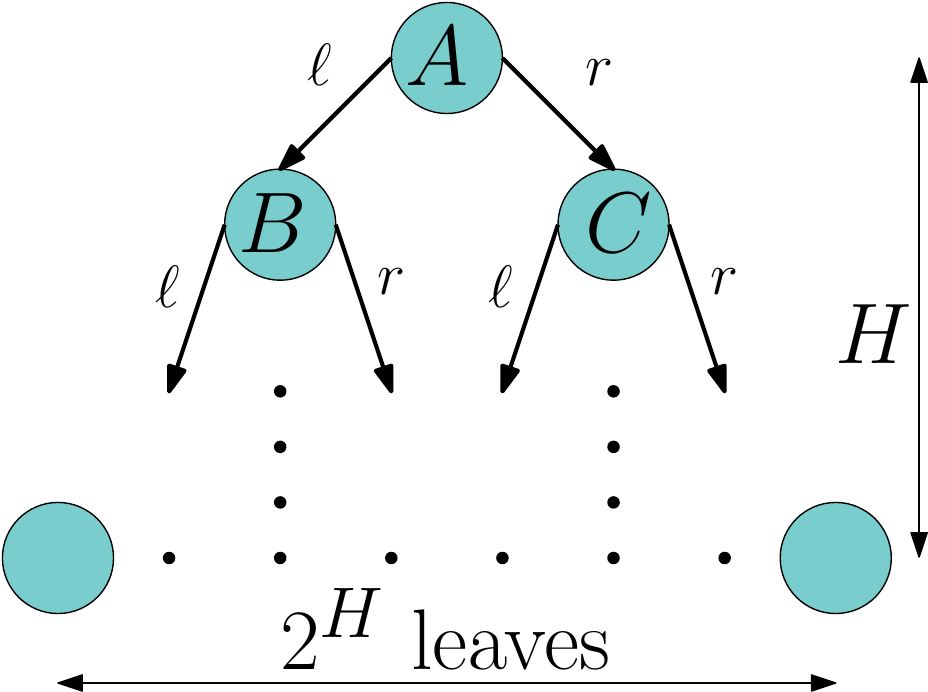}
  \caption{MDP with two actions $\ell$ and
    $r$, and the true dynamics $\Mstar$ are shown in the figure. The
    cost $c(s,a) = \epsilon<<1$ at any $s \neq B$ and
    $c(B, a) = 1$, for any action $a$. Thus, the action taken at $A$ is
    critical. Model class $\Mcal$ contains only two models:
    $M^{\mathsf{good}}$ which captures dynamics at $A$ correctly but makes
    mistakes everywhere else, while $M^{\mathsf{bad}}$ makes mistakes only at
    $A$ but captures true dynamics everywhere else.}
\label{fig:toy}
\end{figure}

\subsection{Challenges in MBRL}
\label{sec:challenges}


We make two important observations from the previous section which
directly manifest as the two fundamental challenges in MBRL. We will
use~\pref{fig:toy} to motivate these challenges. First,
ensuring that the learned policy $\pihat_t$ satisfies~\eqref{eq:oc} in
\pref{alg:meta} requires performing optimal planning in the learned
model $\Mhat_t$ at \textit{every iteration}. Note that optimal
planning can require a number of interactions in $\Mhat_t$ that is
exponential in the effective task horizon
$\frac{1}{1-\gamma}$~\cite{kearns1999approximate}. For example in
Figure~\ref{fig:toy}, solving for optimal policy requires $\Ocal(2^H)$
operations. We term this as \textbf{C1:~computational expense}
challenge in MBRL.

Second, \pref{lem:double} indicates that optimizing
terms~\eqref{term:dlm_2} and~\eqref{term:dlm_3} (along
with~\eqref{eq:oc}) is guaranteed to optimize the performance
difference. However, we cannot directly optimize
term~\eqref{term:dlm_3} as it requires access to the value function of
the expert in the model $V^{\pistar}_{\Mhat}$ which is unknown. To
avoid this, MBRL methods bound these terms using model prediction
error in Corollary~\ref{cor:tv} as an upper bound that is easy to
optimize but very loose, especially due to the unknown scaling term
$\|V_{\Mhat}^{\pistar}\|_\infty$ which can be as large as
$\frac{1}{1-\gamma}$. We term this as \textbf{C2: objective mismatch}
challenge in MBRL. The model fitting objective of minimizing
prediction error is not a good approximation for
terms~\eqref{term:dlm_2} and~\eqref{term:dlm_3}, which are able to
capture the relative importance of transition $(s,a)$ in policy
computation through the value of the resulting successor.
For example in~\pref{fig:toy}, $M^{\mathsf{bad}}$ has lower
prediction error than $M^{\mathsf{good}}$ even though it makes a
mistake at the crucial state $A$ where the value difference between
predicted and true successor is high, while $M^{\mathsf{good}}$ is
better according to objective in~\pref{lem:double}.


\section{Performance Difference via Advantage in Model}\label{sec:simulation}


To overcome the challenges \textbf{C1} and \textbf{C2} presented in
the previous section, let us revisit the performance difference
in~\pref{lem:double} and introduce a \textit{new decomposition} for it that
results in a unified objective which is more
feasible to optimize. This decomposition is the primary contribution
of this paper and we name it as \textit{Performance
  Difference via Advantage in Model} (\pdam{}). The detailed proof can
be found in~\pref{app:proofs}.

\begin{lemma}[Performance Difference via Advantage in Model]\label{lem:simulation-pdl}
    Given any start state distribution $\omega$, policies $\pihat,
    \pistar$, and transition functions $\Mhat, \Mstar$ we have:
    \begin{align}
        (1-&\gamma)[J_{\Mstar}^\omega(\pihat) -
          J_{\Mstar}^\omega(\pistarc)] = \nonumber\\
             &\underbrace{\expec_{s \sim d_{\omega, \pistarc}}[
          V^{\pihatc}_{\Mhat}(s) - \expec_{a \sim
          \pistarc(s)}[Q^{\pihatc}_{\Mhat}(s,
               a)]]}_{\text{Disadvantage on states visited by expert}} \label{term:policy}\\
        &+\underbrace{\gexpec_{\substack{(s, a) \sim D_{\omega,
          \pihatc} \\ \sprime \sim \Mstar(s, a)}}[
          V_{\Mhat}^{\pihatc}(s') - \expec_{s'' \sim \Mhat(s,
          a)}[V_{\Mhat}^{\pihatc}(s'')]]}_{\text{Value difference on
      states visited by learned policy}} \label{term:model_learned}\\
        &+\underbrace{\gexpec_{\substack{(s, a) \sim D_{\omega,
          \pistarc} \\ \sprime \sim \Mstar(s, a)}}[\expec_{s'' \sim \Mhat(s,
          a)}[V^{\pihatc}_{\Mhat}(s'')] -
          V^{\pihatc}_{\Mhat}(s')]}_{\text{Value difference on states
      visited by expert}}\label{term:model_expert}
    \end{align}
\end{lemma}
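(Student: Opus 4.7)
The plan is to prove the identity by splitting the left hand side via the ``anchor'' quantity $(1-\gamma)\expec_{s\sim\omega}V^{\pihat}_{\Mhat}(s)$:
\[(1-\gamma)[J_{\Mstar}^{\omega}(\pihat)-J_{\Mstar}^{\omega}(\pistar)] = \underbrace{(1-\gamma)[J_{\Mstar}^{\omega}(\pihat)-\expec_{s\sim\omega}V^{\pihat}_{\Mhat}(s)]}_{(\mathrm{I})} + \underbrace{(1-\gamma)[\expec_{s\sim\omega}V^{\pihat}_{\Mhat}(s)-J_{\Mstar}^{\omega}(\pistar)]}_{(\mathrm{II})}.\]
I would show that $(\mathrm{I})$ equals \eqref{term:model_learned} by the Simulation Lemma applied to the single policy $\pihat$, and that $(\mathrm{II})$ equals \eqref{term:policy} $+$ \eqref{term:model_expert} by a single telescoping argument along a $\pistar$-rollout in $\Mstar$.

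For $(\mathrm{I})$, applying \pref{lem:simulation} to $\pihat$ under the two dynamics $\Mstar$ and $\Mhat$ immediately yields \eqref{term:model_learned}; this is the same step that appears inside the proof of \pref{lem:double}, so I would not rederive it.

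The heart of the argument is $(\mathrm{II})$. Draw a trajectory $(s_1,a_1,s_2,a_2,\dots)\sim(\omega,\pistar,\Mstar)$ and use the standard telescope
\[V^{\pihat}_{\Mhat}(s_1) - \sum_{t=1}^{\infty}\gamma^{t-1}c(s_t,a_t) = \sum_{t=1}^{\infty}\gamma^{t-1}\bigl[V^{\pihat}_{\Mhat}(s_t) - c(s_t,a_t) - \gamma V^{\pihat}_{\Mhat}(s_{t+1})\bigr].\]
Substituting the Bellman identity $c(s_t,a_t)=Q^{\pihat}_{\Mhat}(s_t,a_t)-\gamma\expec_{\sprime\sim\Mhat(s_t,a_t)}V^{\pihat}_{\Mhat}(\sprime)$ into each summand decomposes it into (a) a disadvantage-style piece $V^{\pihat}_{\Mhat}(s_t)-Q^{\pihat}_{\Mhat}(s_t,a_t)$ evaluated at $a_t\sim\pistar(s_t)$, and (b) a one-step model-mismatch piece $\gamma[\expec_{\sprime\sim\Mhat(s_t,a_t)}V^{\pihat}_{\Mhat}(\sprime)-V^{\pihat}_{\Mhat}(s_{t+1})]$ with $s_{t+1}\sim\Mstar(s_t,a_t)$. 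Taking expectations over the rollout, collapsing $(1-\gamma)\sum_{t\ge 1}\gamma^{t-1}D^{t}_{\omega,\pistar}=D_{\omega,\pistar}$ (and similarly for $d_{\omega,\pistar}$), and multiplying through by $(1-\gamma)$ then produces exactly \eqref{term:policy} $+$ \eqref{term:model_expert}.

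The main obstacle is conceptual rather than computational: a naive invocation of the performance-difference lemma inside $\Mhat$ would produce the state distribution $d^{\pistar,\Mhat}_{\omega}$ under the \emph{learned} dynamics, not the real-world $d_{\omega,\pistar}$ that actually appears in the statement and is the only object we can sample from. The telescope above threads this needle by performing all Bellman bookkeeping under $\Mhat$ while running the outer expectation under a $(\pistar,\Mstar)$ rollout; the discrepancy that builds up along the expert trajectory is precisely \eqref{term:model_expert}, now expressed in terms of the \emph{learned} policy's value function $V^{\pihat}_{\Mhat}$ rather than the unknown $V^{\pistar}_{\Mhat}$ of \pref{lem:double}, which is exactly the ``laziness'' the paper wants to exploit for model fitting.
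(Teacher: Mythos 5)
Your proposal is correct and follows essentially the same route as the paper's proof: both anchor the decomposition at $\expec_{s\sim\omega}V^{\pihat}_{\Mhat}(s)$, dispatch the $\pihat$-part via the Simulation Lemma to obtain term~\eqref{term:model_learned}, and handle the remaining $\expec_{s\sim\omega}V^{\pihat}_{\Mhat}(s)-J^{\omega}_{\Mstar}(\pistar)$ by inserting $\expec_{a\sim\pistar(s)}Q^{\pihat}_{\Mhat}(s,a)$ and unrolling along a $(\pistar,\Mstar)$ rollout. Your single closed-form telescope is just the explicit solution of the one-step recurrence the paper sets up and then solves to the infinite horizon, so the two arguments coincide.
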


The above lemma presents a novel unified objective for joint model and
policy learning. We can make a few important remarks. First, bounding
term~\eqref{term:policy} does not require computing the optimal policy
in the learned model $\Mhat$, unlike term~\eqref{term:dlm_1}.
Instead, we need a policy that has small
``disadvantage'' over the optimal policy in the learned model at
states sampled along $\pistar$ trajectory in $\Mstar$. This
disadvantage term was popularly used as an objective for policy search in
several model-free
works~\cite{kakade2002approximate,bagnell2003psdp,ross2014aggrevate}. Given access to
an exploration distribution $\nu$ that covers $D_{\omega, \pistar}$,
computing such a policy requires computation that is polynomial in the
effective task horizon~\cite{kakade2003sample}, compared
to~\eqref{eq:oc} which can require
computation that is exponential in the task horizon. In other words, by being
``lazy'' in the policy computation step we can solve challenge \textbf{C1} while
still optimizing the performance difference between
$\pihat$ and $\pistar$.

Second, while \pdam{} looks similar
to~\pref{lem:double} for the model fitting
terms~\eqref{term:model_learned} and~\eqref{term:model_expert}, there
is one crucial
difference: we only need $V_{\Mhat}^{\pihat}$ in the new objective,
which is feasible to compute using any policy evaluation method in the
learned model~\cite{sutton2018reinforcement}. On the other hand,~\pref{lem:double}
required access to $V_{\Mhat}^{\pistar}$ where $\pistar$ is
unknown and hence, we had to upper bound the objective in
Corollary~\ref{cor:tv} using model prediction error. Optimizing
prediction error requires the learned model to capture dynamics
everywhere equally well. However, the unified objective
\pdam{} offers a ``lazy'' alternative for model fitting
that focuses only on transitions that are critical for policy
computation by being value-aware, solving challenge \textbf{C2}.


\section{\ouralg{}: Lazy Model-based Policy Search}
\label{sec:alg_mle}

\begin{algorithm}[t]
\caption{Minimize Disadvantage \textbf{ComputePolicy}($\Mhat_t$)}
\begin{algorithmic}[1]
\REQUIRE Exploration distribution $\nu$, learned model $\Mhat_t$,
policy class $\Pi$.
\STATE Find $\pihat_t \in \Pi$ using cost-sensitive
classification on states sampled from $\nu$
in $\Mhat_t$~\cite{ross2014aggrevate} such that
\begin{align}\label{eq:policy}
    \expec_{s \sim \nu}\left[V^{\pihat_t}_{\Mhat_t}(s) - \min_{a \in
  \Acal}[Q^{\pihat_t}_{\Mhat_t}(s,a)]\right] \leq \epsilon_{po}
\end{align}
\STATE \textbf{Return} $\pihat_t$
\end{algorithmic}\label{alg:policy}
\end{algorithm}

In this section, we present our first algorithm \ouralg{} that uses
the unified objective \pdam{} to solve
challenge \textbf{C1}.~\pref{alg:policy} performs policy optimization
along the exploration distribution similar to previous policy search
methods~\cite{kakade2002approximate,bagnell2003psdp,ross2014aggrevate}.
Note that \pref{eq:policy} requires performing one-step cost-sensitive
classification only at states sampled from the exploration
distribution $\nu$ making~\pref{alg:policy} computationally
efficient. On the other hand, optimal planning requires minimizing disadvantage at all
states under the learned policy $d_{\omega, \pihat_t}$\footnote{To see this, apply the performance difference lemma~\cite{kakade2002approximate} to term~\eqref{term:dlm_1} in~\pref{lem:double}.} which changes
as the learned policy is updated leading to the exponential dependence
on horizon.

To understand how~\pref{alg:policy} can help optimize \pdam{}, we use
H\"older's inequality on terms~\eqref{term:model_learned}
and~\eqref{term:model_expert} to bound \pdam{} as, \looseness=-1
\begin{corollary}\label{cor:tv_new}
        For any start state distribution $\omega$, transition
        functions $\Mhat, \Mstar$, and policies $\pihat,
        \pistar$ such that $\pihat$ satisfies~\pref{eq:policy}, we have,
\begin{align}
    (1-\gamma)&[J_{\Mstar}^\omega(\pihat) -
                J_{\Mstar}^\omega(\pistar)] \leq \epsilon_{po}
                                \nonumber\\
                                &+\gamma \hat V_{\max}\expec_{(s,
          a) \sim D_{\omega, \pihat}}\|\Mhat(s, a) - \Mstar(s, a)\|_1
          \nonumber\\
        &+ \gamma \hat V_{\max}\expec_{(s, a) \sim
          D_{\omega, \pistar}}\|\Mhat(s, a) - \Mstar(s, a)\|_1
          \nonumber
\end{align}
where $\hat{V}_{\max} = \|V_{\Mhat}^{\pihat}\|_\infty$.
\end{corollary}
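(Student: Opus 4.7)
The plan is to start from the PDAM decomposition in \pref{lem:simulation-pdl} and then bound each of the three terms \eqref{term:policy}, \eqref{term:model_learned}, \eqref{term:model_expert} separately. Since the identity in \pref{lem:simulation-pdl} is exact, all slackness in the final inequality must come from either the policy-optimization assumption \pref{eq:policy} or from an $\ell_1$/$\ell_\infty$ Hölder step applied to the two model-error terms.

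For the policy term \eqref{term:policy}, first I would note the pointwise inequality $\min_{a \in \Acal} Q_{\Mhat}^{\pihat}(s,a) \leq \expec_{a \sim \pistar(s)} Q_{\Mhat}^{\pihat}(s,a)$, which gives $V_{\Mhat}^{\pihat}(s) - \expec_{a \sim \pistar(s)} Q_{\Mhat}^{\pihat}(s,a) \leq V_{\Mhat}^{\pihat}(s) - \min_{a} Q_{\Mhat}^{\pihat}(s,a)$. Taking expectation over $s \sim d_{\omega,\pistar}$ and invoking the assumption that $\nu$ covers this distribution (so expectations under $d_{\omega,\pistar}$ are controlled by expectations under $\nu$), the assumed bound \pref{eq:policy} yields that \eqref{term:policy} is at most $\epsilon_{po}$.

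For the two model terms \eqref{term:model_learned} and \eqref{term:model_expert}, I would rewrite the inner expectations as a difference of expectations of $V_{\Mhat}^{\pihat}$ under the two next-state distributions $\Mstar(\cdot\mid s,a)$ and $\Mhat(\cdot\mid s,a)$, i.e.\ $\sum_{s'} (\Mstar(s'|s,a) - \Mhat(s'|s,a)) V_{\Mhat}^{\pihat}(s')$. Applying Hölder's inequality pointwise in $(s,a)$ bounds each difference by $\|V_{\Mhat}^{\pihat}\|_\infty \cdot \|\Mhat(s,a) - \Mstar(s,a)\|_1 = \hat V_{\max} \cdot \|\Mhat(s,a)-\Mstar(s,a)\|_1$. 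Taking outer expectations under $D_{\omega,\pihat}$ and $D_{\omega,\pistar}$ respectively and multiplying by the $\gamma$ factor produces the last two terms in the corollary.

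The only genuinely delicate step will be the policy term: \pref{eq:policy} controls the one-step disadvantage only under $\nu$, while \eqref{term:policy} needs control under $d_{\omega,\pistar}$. Handling this cleanly requires either an explicit coverage/density-ratio assumption on $\nu$ relative to $d_{\omega,\pistar}$, or simply identifying $\nu$ with $D_{\omega,\pistar}$ (which is consistent with the preliminaries). Everything else is mechanical: two Hölder applications and collecting terms.
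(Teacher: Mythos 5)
Your proposal is correct and follows essentially the same route as the paper: apply the PDAM decomposition of \pref{lem:simulation-pdl} and then H\"older's inequality pointwise in $(s,a)$ to the two model-error terms, pulling out $\hat V_{\max}=\|V^{\pihat}_{\Mhat}\|_\infty$. The delicate point you flag about \pref{eq:policy} controlling the disadvantage only under $\nu$ rather than $d_{\omega,\pistar}$ is real, and the paper resolves it exactly as you anticipate: the appendix restatement of this corollary is deliberately phrased in terms of the expert distribution (leaving the disadvantage term unbounded there), and the transfer from $\nu$ via the coverage coefficient $\Ccal$ is deferred to Corollary~\ref{cor:simulation} and Corollary~\ref{cor:regret_cov}.
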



Corollary~\pref{cor:tv_new} indicates a simple modification to
existing MBRL methods where we use MLE-based~\pref{alg:model_mle} for
model fitting and~\pref{alg:policy} for policy computation in the
framework of~\pref{alg:meta}. We refer to this new algorithm as
\ouralg. Note that by upper bounding the model fitting terms
in~\pref{lem:simulation-pdl} using model prediction error,
\ouralg{} is only able to solve challenge \textbf{C1} but not
\textbf{C2}.


Our algorithm only requires an exploration distribution $\nu$ that
\textit{covers} the expert policy state-action distribution
$D_{\omega, \pistar}$ as described in~\pref{sec:setup}. To capture how
well $\nu$ covers $D_{\omega, \pistar}$, we define the coverage
coefficient
$\Ccal = \sup_{s,a} \frac{D_{\omega,
    \pi^\ast}(s,a)}{\nu(s,a)}$ similar to~\citet{ross2012agnostic}\footnote{Note that there are also
  coverage notions in the offline and hybrid RL
  setting~\cite{xie2021bellman,song2022hybrid}, but their coverage
  coefficient is not directly applicable in the model-based setting.}.
We now present the regret bound for \ouralg{} using this coverage
coefficient and proof can be found in \pref{app:proofs}.

\begin{theorem}\label{thm:policy}
  Let $\{\hat \pi_t\}^T_{t=1}$ be the sequence of returned policies of
  \ouralg. We have:~\footnote{We use $\tilde O$ to omit logarithmic
    dependencies on terms.}
    \begin{align*}
      \frac{1}{T}\sum_{t=1}^T &J_{\Mstar}^\omega(\pihat_t) - J_{\Mstar}^\omega(\pistar) \leq \\
                              &\tilde O\left(\Ccal \epsilon_{po} + \frac{\Ccal \hat
                                V_{\max}}{1-\gamma}\left(\sqrt{\epsilon^{KL}_{model}} +
                                \frac{1}{\sqrt{T}}\right)\right),
    \end{align*}
    where $\hat V_{\max} = \|V^{\hat \pi}_{\hat M}\|_{\infty}$,
    $\Ccal = \sup_{s,a} \frac{D_{\omega, \pi^\ast}(s,a)}{\nu(s,a)}$ is
    the coverage coefficient, $\epsilon_{po}$ is the policy advantage error,
    and
    $\epsilon^{KL}_{model} = \min_{M \in \Mcal}\expec_{s,a \sim \bar
      \Dcal_T} \mathsf{KL}(M(s,a), M^{\ast}(s,a))$ is the agnostic
    model error\footnote{Here we use $\mathsf{KL}(\cdot,\cdot)$ to
      denote the Kullback–Leibler divergence and  denote the
      training data distribution as
      $\bar \Dcal_T = \frac{1}{T}\sum_{t=1}^T\Dcal_t$}.
\end{theorem}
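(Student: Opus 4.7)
The plan is to apply Corollary \ref{cor:tv_new} per iteration, average over $t$, and then use the structure of Algorithm \ref{alg:meta} (mixture data collection) together with the FTL regret guarantee for log-loss to bound the cumulative model error in expectation. Concretely, I first invoke Corollary \ref{cor:tv_new} at iteration $t$ using $\hat M = \hat M_t$ and $\hat\pi = \hat\pi_t$, giving
\begin{align*}
(1-\gamma)[J^{\omega}_{M^\star}(\hat\pi_t) - J^{\omega}_{M^\star}(\pi^\star)]
 &\le \epsilon_{po} \\
 &\quad + \gamma\hat V_{\max}\,\mathbb{E}_{(s,a)\sim D_{\omega,\hat\pi_t}}\|\hat M_t(s,a) - M^\star(s,a)\|_1 \\
 &\quad + \gamma\hat V_{\max}\,\mathbb{E}_{(s,a)\sim D_{\omega,\pi^\star}}\|\hat M_t(s,a) - M^\star(s,a)\|_1.
\end{align*}
Summing over $t$ and dividing by $T$ reduces the theorem to controlling the time-averaged $L_1$ model error under the two distributions $D_{\omega,\hat\pi_t}$ and $D_{\omega,\pi^\star}$.

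Next I translate both expectations into expectations under the training data distribution $\bar{\mathcal D}_T = \tfrac{1}{T}\sum_{t=1}^T \mathcal D_t$. Because the per-iteration data distribution in Algorithm \ref{alg:meta} is an equal mixture of $\nu$ and $D_{\omega,\hat\pi_t}$, we have $D_{\omega,\hat\pi_t}(s,a) \le 2\,\mathcal D_t(s,a)$ pointwise, and by the coverage definition $D_{\omega,\pi^\star}(s,a) \le \mathcal C\,\nu(s,a) \le 2\mathcal C\,\mathcal D_t(s,a)$. Hence, up to a factor $2\mathcal C$, the entire cost reduces to
\[
\frac{1}{T}\sum_{t=1}^T \mathbb{E}_{(s,a)\sim \mathcal D_t}\|\hat M_t(s,a) - M^\star(s,a)\|_1.
\]
Applying Pinsker followed by Jensen's inequality (concavity of $\sqrt{\cdot}$) passes from $L_1$ to the expected KL divergence:
\[
\frac{1}{T}\sum_t \mathbb{E}_{\mathcal D_t}\|\hat M_t - M^\star\|_1
 \;\le\; \sqrt{\frac{2}{T}\sum_t \mathbb{E}_{\mathcal D_t}\mathsf{KL}(M^\star(s,a),\hat M_t(s,a))}.
\]

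The remaining step is the statistical analysis of the MLE/FTL loop. Because Algorithm \ref{alg:model_mle} runs Follow-the-Leader on the log-losses $\ell_\tau$, the standard FTL regret for log-loss (or the exponentially weighted aggregation equivalent, cf.\ \cite{hazan2019oco}) yields
\[
\sum_{t=1}^T \ell_t(\hat M_t) - \min_{M \in \mathcal M}\sum_{t=1}^T \ell_t(M) \;\le\; O(\log T).
\]
Rewriting the loss gap as a sum of expected KL divergences under the data distributions, this gives a per-round average KL that is at most the in-class optimum $\epsilon_{model}^{KL}$ plus $O(\log T / T)$. A martingale concentration argument (e.g.\ Freedman's inequality on the difference between the empirical $\ell_t$ and its conditional expectation, absorbed into the $\tilde O$ notation) is needed to pass from the empirical losses the algorithm observes to the population KL under $\mathcal D_t$. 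Substituting back gives the $\sqrt{\epsilon_{model}^{KL} + 1/T}$ factor, which combined with the $\mathcal C$ and $\hat V_{\max}/(1-\gamma)$ factors above yields the stated bound after dividing by $(1-\gamma)$.

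The main obstacle, and the step I expect to handle most carefully, is this last conversion from the FTL regret on empirical log-losses to a bound on the average expected KL divergence $\tfrac{1}{T}\sum_t \mathbb{E}_{\mathcal D_t}\mathsf{KL}(M^\star,\hat M_t)$. The data $\mathcal D_t$ is itself random and depends on previously learned policies, so a clean application of online-to-batch requires a martingale concentration argument rather than an i.i.d.\ one. Once that is in place, the rest of the proof is a direct book-keeping exercise chaining Corollary \ref{cor:tv_new}, coverage, Pinsker, and Jensen.
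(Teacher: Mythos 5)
Your proposal follows essentially the same route as the paper's proof: apply Corollary~\ref{cor:tv_new} at each iteration, relate $D_{\omega,\hat\pi_t}$ and $D_{\omega,\pistar}$ to the mixture data distribution $\Dcal_t$ (picking up the coverage coefficient $\Ccal$), convert the $L_1$ model error to KL via Pinsker and Jensen, and invoke the $O(\log T)$ FTL regret for the log-loss. The only cosmetic differences are that the paper first proves an expert-distribution version and defers the coverage/importance-sampling step to Corollary~\ref{cor:regret_cov} (which is also where the $\Ccal$ multiplying $\epsilon_{po}$ arises -- a step your write-up leaves implicit), and that the paper works with population-level losses throughout, so the martingale concentration you correctly flag is sidestepped rather than carried out.
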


In comparison, \citet{ross2012agnostic}'s regret bound is
\begin{small}
\begin{align*}
  \tilde O\left(\epsilon_{oc} + \frac{\Ccal\max\{V_{\max}, \hat
  V_{\max}\}}{1-\gamma}\left( \sqrt{\epsilon^{KL}_{model}}+\frac{1}{\sqrt{T}}
  \right)  \right),
\end{align*}
\end{small}
where $V_{\max} = \|V^{\pi^\ast}_{\hat M}\|_\infty$ is difficult to
optimize as $\pistar$ is unknown, and can be as large as the effective
horizon $\frac{1}{1-\gamma}$. On the other hand, our bound only has
$\hat{V}_{\max} = \|V^{\pihat}_{\Mhat}\|_\infty$ which is optimized
in~\pref{alg:policy} for states that are sampled from $\nu$. Thus, we
expect that there exists cases where $\hat{V}_{\max}$ is much smaller
when compared to $V_{\max}$\footnote{Note that in the worst case,
  $\hat{V}_{\max}$ can also be as large as
  $\frac{1}{1-\gamma}$. However, we show an experiment
  in~\pref{sec:experiment} where $\hat{V}_{\max}$ is smaller than
  $V_{\max}$ in practice.}, leading to a tighter regret bound
in~\pref{thm:policy}. Hence, while \ouralg{} primarily solves
challenge \textbf{C1} lending computational gains, we also observe
statistical gains in practice as we show in our experiments
in~\pref{sec:experiment}. Another crucial difference is that the
coverage coefficient $\Ccal$ shows up for the policy optimization error in
\ouralg{} regret bound which suggests that \ouralg{} is relatively
more sensitive to the quality of exploration distribution $\nu$. We
show an example of this in~\pref{app:failure}.

\begin{algorithm}[t]
\caption{Moment Matching~\textbf{FitModel}($\Dcal_t, \{\ell_i\}_{i=1}^{t-1}$)}
\begin{algorithmic}[1]
\REQUIRE Data $\Dcal_t$, model class $\Mcal$, previous losses
$\{\ell_i\}_{i=1}^{t-1}$, Strongly convex regularizer $\mathcal{R}$
\STATE Define loss
\begin{align}\label{eq:absolute}
  \ell_t(M) = \expec_{(s, a, s') \sim
  \Dcal_t}\left|V^{\pihat_t}_{\Mhat_t}(s') - \expec_{s'' \sim M(s, a)}[V^{\pihat_t}_{\Mhat_t}(s'')]\right|
\end{align}
\STATE Compute model $\Mhat_{t+1}$ using an online no-regret
algorithm, such as FTRL~\cite{hazan2019oco},
\begin{align}\label{eq:ftl}
    \hat M_{t+1} \leftarrow \argmin_{M \in \Mcal} \sum_{\tau=1}^t  \ell_\tau(M) + \mathcal{R}(M).
\end{align}
\STATE \textbf{Return} $\hat M_{t+1}$
\end{algorithmic}\label{alg:model_mm}
\end{algorithm}

\section{\ouralgmm{}: Lazy Model-based Policy Search via Value Moment
  Matching}\label{sec:moment}

The previous section introduced an algorithm \ouralg{} that, while
being more computationally efficient than existing MBRL methods, did
not reap the full benefits of our proposed unified objective
\pdam{}. \ouralg{} optimizes the objective presented in
Corollary~\ref{cor:tv_new} which is a weak upper bound of the unified
objective (as explained in~\pref{sec:challenges}.) A natural question
would be to ask whether there exists an algorithm that directly
optimizes the unified objective without constructing a weak upper
bound. We present such an algorithm in this section.  The key idea is
to formulate it as a moment matching
problem~\cite{sun2019ilfo,swamy2021moments} where our learned model is
matching the true dynamics in expectation using value moments.


\pref{alg:model_mm} minimizes the value moment difference between true
and predicted successors on a given dataset of transitions. The objective $\ell_t$ in~\eqref{eq:absolute} upper bounds the terms~\eqref{term:model_learned} and~\eqref{term:model_expert}, and using a Follow-the-Regularized-Leader (FTRL) approach with a strongly convex regularizer $\mathcal{R}(M)$ allows us to achieve a no-regret guarantee.
We refer to the new MBRL algorithm that uses value moment matching
based~\pref{alg:model_mm} for model fitting and~\pref{alg:policy} for
policy computation in the framework of~\pref{alg:meta} as \ouralgmm{}.

\ouralgmm{}, by virtue of optimizing \pdam{}, does not suffer
challenge \textbf{C2} as the model fitting objective helps learn
models that are useful for policy computation by focusing on critical
states where any mistake in action can lead to large value
differences. Going back to the example in~\pref{fig:toy}, \ouralgmm{}
would pick $M^{\mathsf{good}}$ over $M^{\mathsf{bad}}$ as the latter
incurs high loss $\ell_t$~\eqref{eq:absolute} at state $A$. Thus,
\ouralgmm{} solves both challenges \textbf{C1} and \textbf{C2}. This
results in improved statistical efficiency as indicated by its tighter
regret bound:
\begin{theorem}\label{thm:mmregret}
  Let $\{\hat \pi_t\}^T_{t=1}$ be the sequence of returned policies of
  \ouralgmm, we have:
    \begin{align*}
      \frac{1}{T}\sum_{t=1}^T J_{\Mstar}^\omega(\pihat_t)& -
                                                           J_{\Mstar}^\omega(\pistar)
                                                           \leq  \\
                                                         &\tilde O\left(\Ccal \epsilon_{po} +  \frac{
                                                           \Ccal}{1-\gamma} \left(\epsilon^{mm}_{model} +
                                                           \frac{1}{\sqrt{T}}\right)\right),
    \end{align*}
    where
    $\epsilon^{mm}_{model} = \min_{M \in \Mcal}
    \frac{1}{T}\sum_{t=1}^T \ell_t(M)$ is the agnostic model error,
    $\epsilon_{po}$ is the policy advantage error, and
    $\Ccal = \sup_{s,a} \frac{D_{\omega, \pi^\ast}(s,a)}{\nu(s,a)}$ is
    the coverage coefficient.
\end{theorem}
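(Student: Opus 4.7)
My plan is to start from the \pdam{} decomposition (\pref{lem:simulation-pdl}) applied to $\Mhat=\Mhat_t$ and $\pihat=\pihat_t$ at each iteration, bound each of the three summands separately, and then aggregate via the no-regret guarantee of the FTRL-based model update in \pref{alg:model_mm}. Since the left-hand side of the lemma is $(1-\gamma)[J^\omega_{\Mstar}(\pihat_t)-J^\omega_{\Mstar}(\pistar)]$, a final division by $(1-\gamma)$ and averaging over $t=1,\dots,T$ produces the target regret bound.

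For the disadvantage term~\eqref{term:policy}, I would use the pointwise inequality $V^{\pihat_t}_{\Mhat_t}(s)-\expec_{a\sim\pistar(s)}[Q^{\pihat_t}_{\Mhat_t}(s,a)]\le V^{\pihat_t}_{\Mhat_t}(s)-\min_a Q^{\pihat_t}_{\Mhat_t}(s,a)$ and then change measure from $d_{\omega,\pistar}$ to the state marginal of $\nu$; since $\Ccal=\sup_{s,a}D_{\omega,\pistar}(s,a)/\nu(s,a)$ implies the same state-marginal ratio by summing over $a$, the term is at most $\Ccal\cdot\epsilon_{po}$ via~\eqref{eq:policy}. For the two model-fitting terms~\eqref{term:model_learned} and~\eqref{term:model_expert}, I would upper bound each by its absolute value to match the moment-matching loss $\ell_t$ in~\eqref{eq:absolute}. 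Term~\eqref{term:model_learned} is under $D_{\omega,\pihat_t}$, one of the two equal components of the data-collection distribution underlying $\Dcal_t$ (see \pref{alg:meta}), so it is at most $2$ times the population counterpart of $\ell_t(\Mhat_t)$; term~\eqref{term:model_expert} is under $D_{\omega,\pistar}$, so a change of measure to $\nu$ with factor $\Ccal$ combined with $\nu$ being the other component of $\Dcal_t$ bounds it by $2\Ccal$ times the same loss. Together, the two value-difference terms contribute $O(\gamma\Ccal)$ times (population) $\ell_t(\Mhat_t)$ per iteration.

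To control $\sum_{t=1}^T \ell_t(\Mhat_t)$ in aggregate, I need two more ingredients: (i) a uniform concentration argument over $\Mcal$ relating empirical $\ell_t$ (evaluated on the random sample $\Dcal_t$) to its population counterpart, contributing a $\tilde O(1/\sqrt{T})$ average term, and (ii) the FTRL no-regret guarantee for~\eqref{eq:ftl} with the strongly convex regularizer $\mathcal{R}$, giving $\sum_{t=1}^T \ell_t(\Mhat_t)-\min_{M\in\Mcal}\sum_{t=1}^T \ell_t(M)\le O(\sqrt{T})$. Together these yield an averaged empirical loss of at most $\epsilon^{mm}_{model}+\tilde O(1/\sqrt{T})$. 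Plugging this back, dividing by $(1-\gamma)$, and folding constants into $\tilde O$ gives the stated regret.

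The main obstacle is the online-learning step for model fitting: the losses $\ell_t$ depend on the current iterates $\pihat_t,\Mhat_t$, themselves functions of the algorithm's history. Because $\Mhat_t$ is committed before $\ell_t$ is observed, this remains a valid online-learning setup, but one must verify that $\ell_t$ is linear (or at least convex) in $M$ and uniformly bounded and Lipschitz so that FTRL with $\mathcal{R}$ achieves the claimed $\sqrt{T}$ regret, and that uniform concentration over $\Mcal$ is tight enough to stay inside $\tilde O(1/\sqrt{T})$ when combined with $T$ iterations.
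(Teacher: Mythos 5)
Your proposal is correct and follows essentially the same route as the paper's proof: apply the \pdam{} decomposition per iteration, bound the disadvantage term by $\Ccal\,\epsilon_{po}$ via a change of measure to $\nu$, upper bound the two value-difference terms by (a constant and $\Ccal$ times) the moment-matching loss $\ell_t(\Mhat_t)$ using the mixture structure of $\Dcal_t$, and invoke the FTRL no-regret guarantee to compare against $\epsilon^{mm}_{model}$ before averaging over $T$. The only difference is that you explicitly flag the empirical-versus-population concentration step and the convexity/boundedness conditions for FTRL, which the paper handles by stating its argument at the population level and deferring finite-sample issues to a separate appendix analysis.
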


Comparing the above regret bound with that of \ouralg{}
in~\pref{thm:policy}, we have improved the bound by getting rid of the
dependency on $\hat{V}_{\max}$\footnote{Note that
  $\epsilon_{model}^{mm}$ implicitly depends on the value
  function. However, if the model class $\Mcal$ is rich enough, we can
  expect this error to be small. For example, if $\Mcal$ contains
  $\Mstar$ then this error goes to zero resulting in a regret bound
  with no dependence on $\hat{V}_{\max}$, whereas the regret bound
  in~\pref{thm:policy} still retains a dependence on $\hat{V}_{\max}$
  even in the realizable case.} which as stated in~\pref{sec:alg_mle},
can be as large as $\frac{1}{1-\gamma}$ in the worst case.

Despite the statistical advantages of \ouralgmm{}, its practical
implementation is difficult as estimating the loss
$\ell_t$~\eqref{eq:absolute} requires evaluating the policy in the
learned model which can only be approximated in large MDPs. A similar
difficulty was also observed in previous
works~\cite{ayoub2020model,grimm2020value,farahmand2017value} that
proposed value-aware objectives for model fitting.
We highlight a few
scenarios in which~\pref{alg:model_mm} is practically
realizable. First, for finite MDPs we can evaluate the policy exactly
avoiding this difficulty. Second, in MDPs such as linear dynamical
systems with quadratic costs where we can compute the value of policy
in closed form, we can estimate $\ell_t$ and use a gradient-based
optimization method to find the best model in the model class in the
optimization problem~\eqref{eq:ftl}. We demonstrate both of these
scenarios in our experiments in~\pref{sec:experiment}.
It is also important to note that solving~\eqref{eq:ftl} using a batch
algorithm, like gradient descent, would require aggregating both data
$\Dcal_t$ and value functions $\hat{V}^{\pihat_t}_{\Mhat_t}$ across
iterations in~\pref{alg:meta}. This is advantageous as our approach does not require us to compute gradients through how changes in model $M$ affect the value estimates used in $\ell_t(M)$ (see~\eqref{eq:absolute}) which can be very difficult to compute.
For completeness, we also give a finite sample analysis for
\ouralgmm{} in Appendix~\ref{sec:finite}. Note that we skip the finite
sample analysis for \ouralg{},
because the analysis takes the same form as \citet{ross2012agnostic}.



\begin{figure*}[t]
  \centering
  \includegraphics[width=0.19\linewidth]{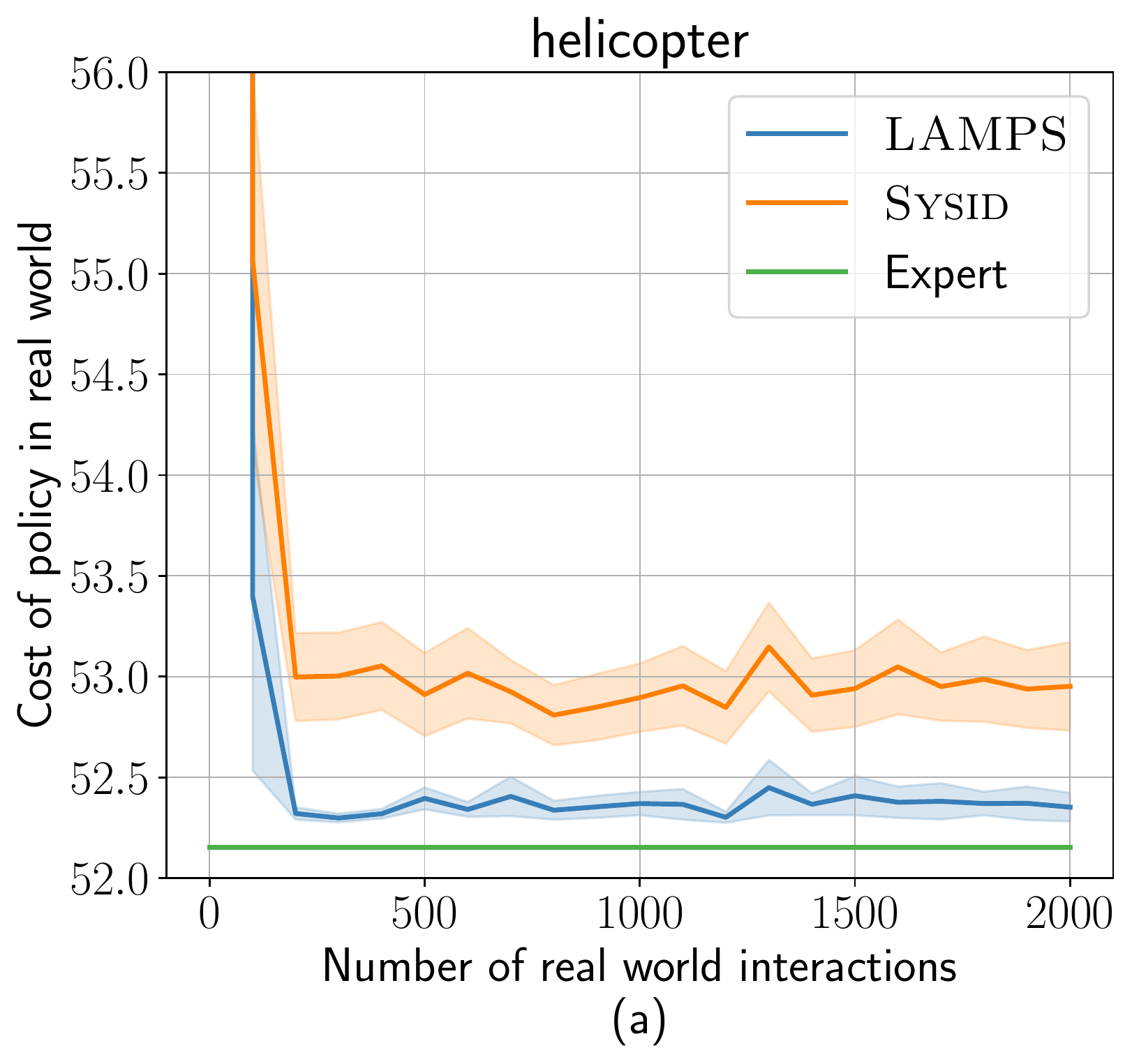}
  \includegraphics[width=0.19\linewidth]{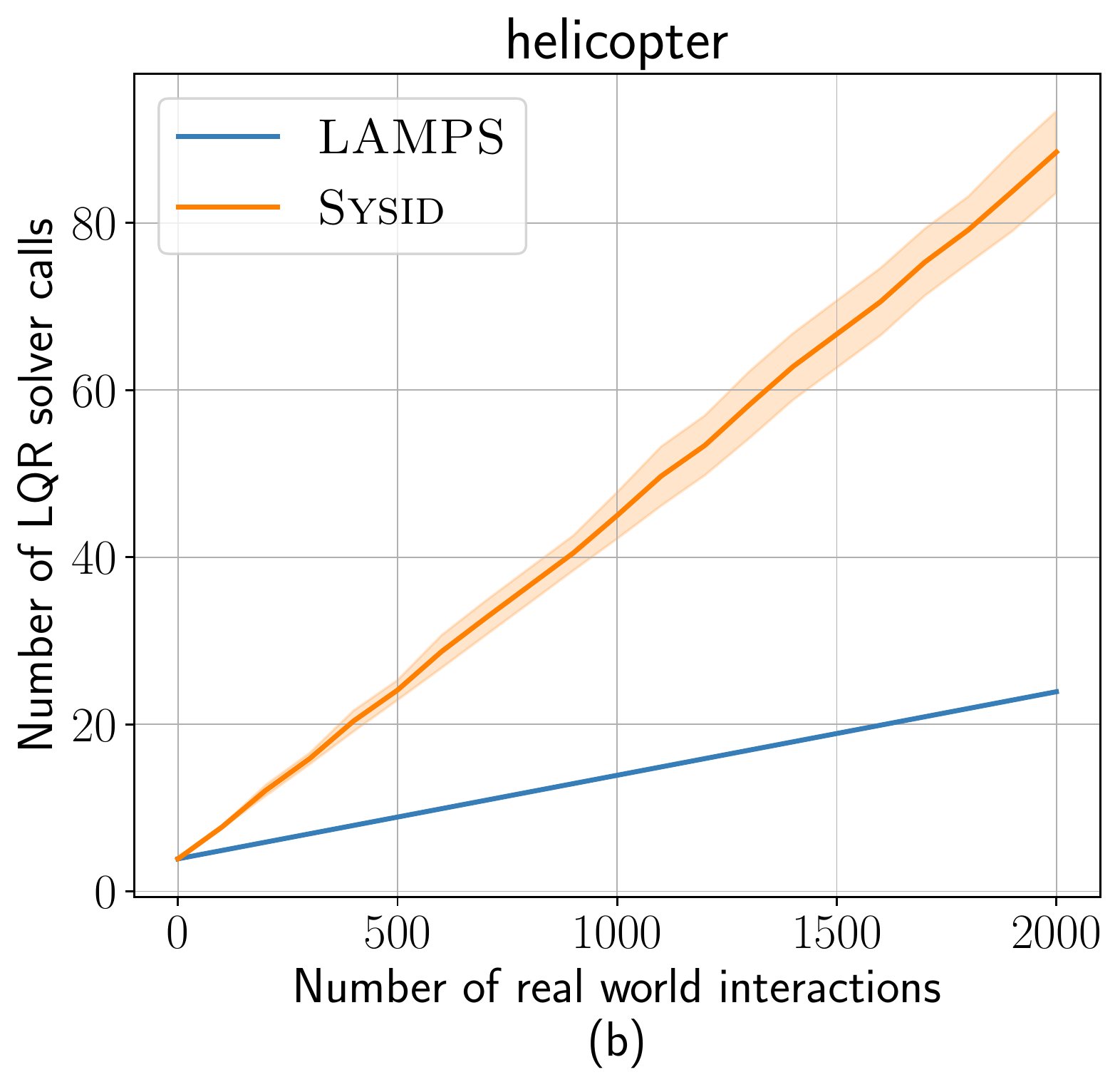}
  \includegraphics[width=0.19\linewidth]{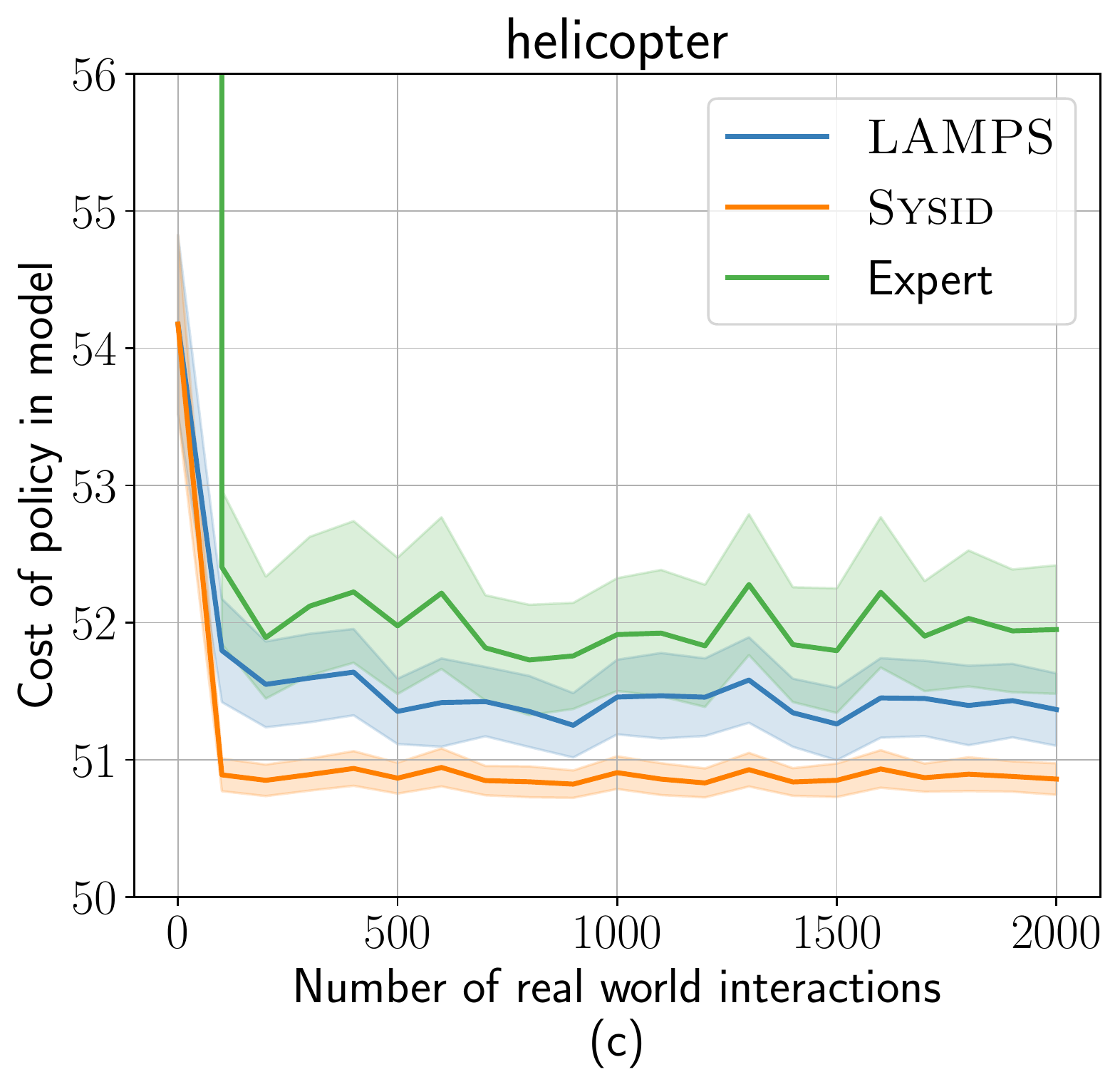}
  \includegraphics[width=0.19\linewidth]{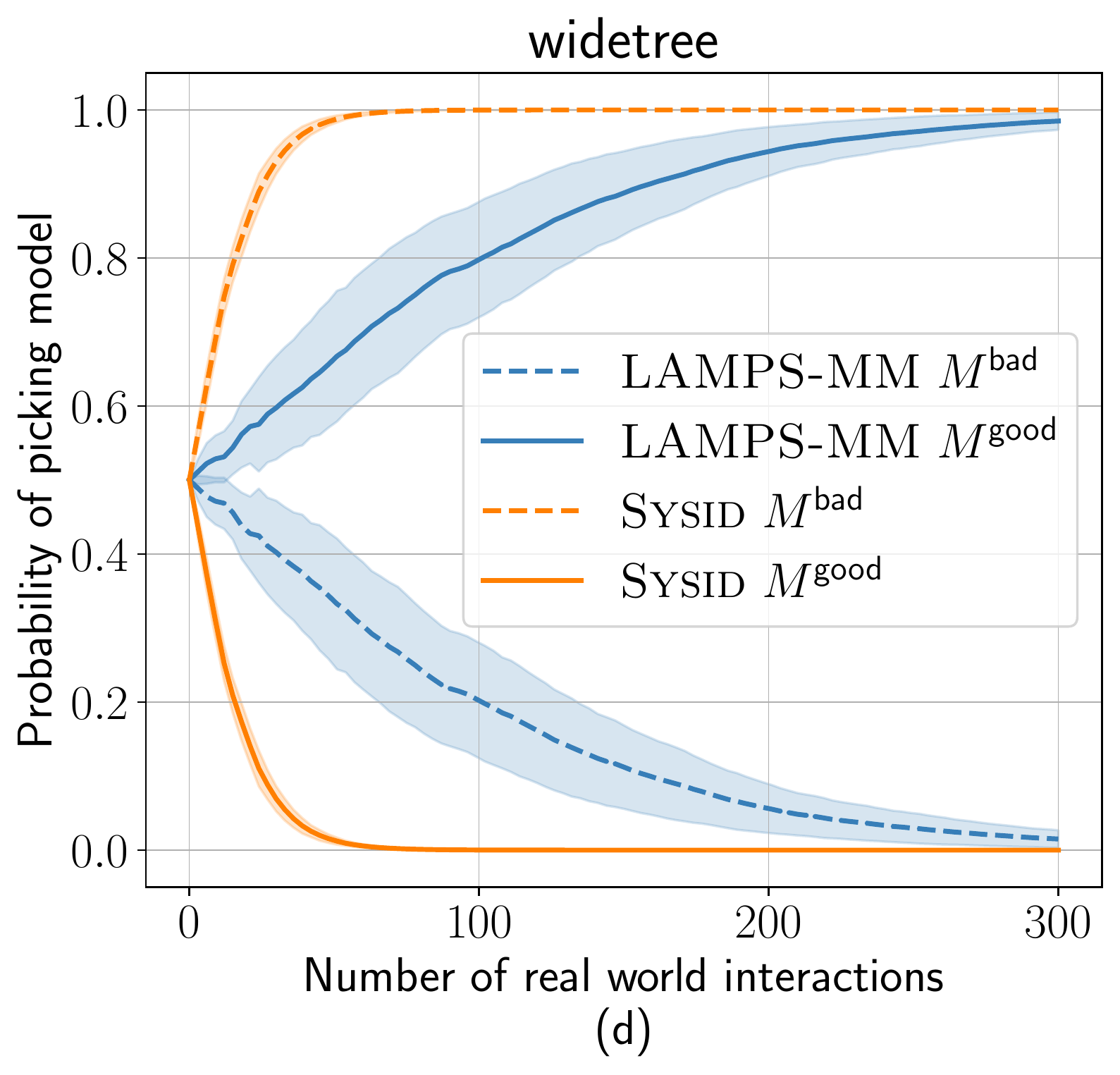}
  \includegraphics[width=0.21\linewidth]{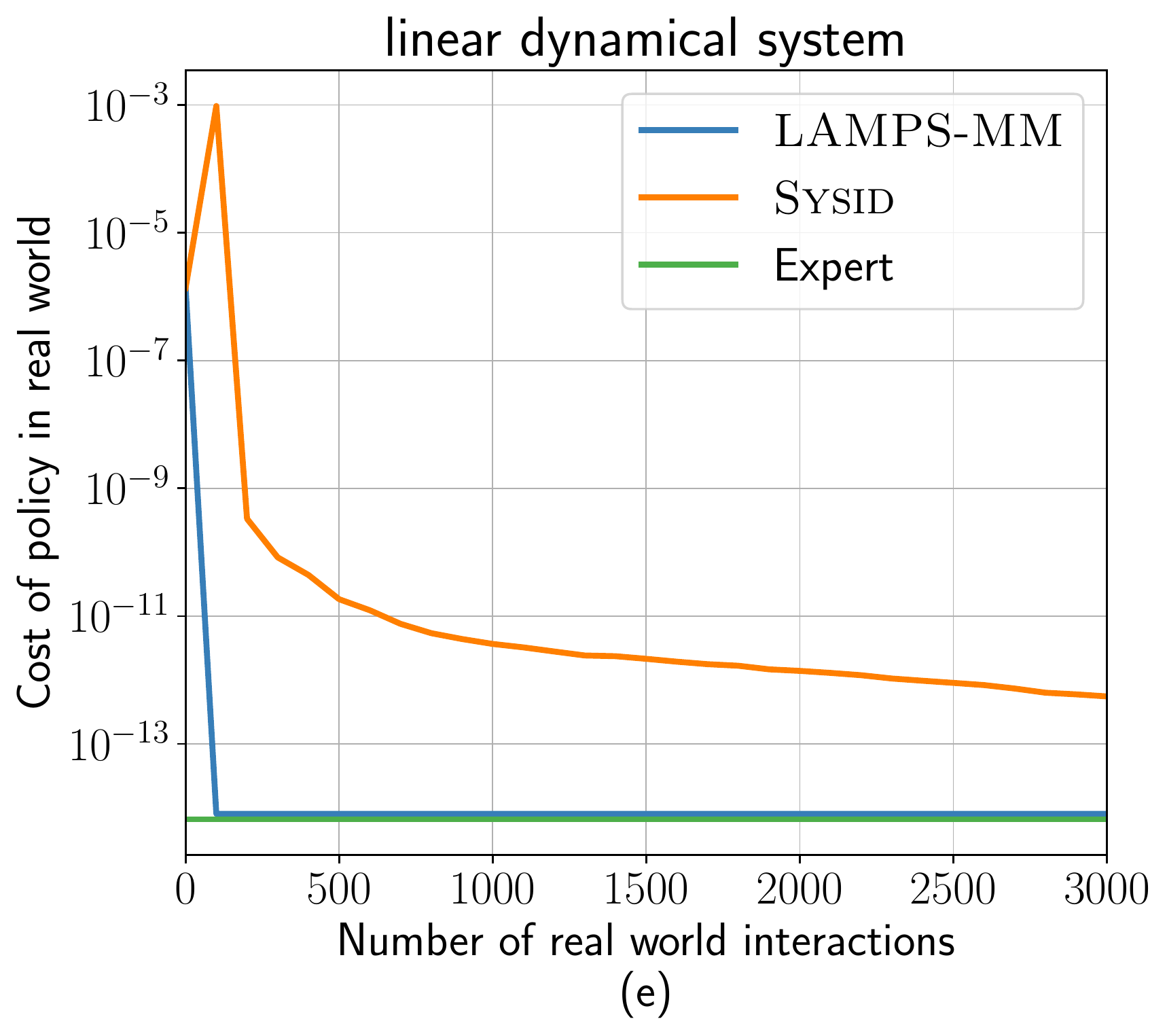}
  \caption{Results on our Helicopter, WideTree, and Linear Dynamical
    System (LDS) Domains. All experiments are done using $10$ random seeds
    and the solid lines show the mean while shaded area depicts
    standard error. For helicopter and LDS experiments at each iteration, we add
    $100$ samples, that are sampled with equal probability from
    exploration distribution and learned policy rollouts, to the dataset.}
  \label{fig:small_mdp}
\end{figure*}
\section{Experiments}\label{sec:experiment}
In this section, we present experiments that test our proposed
algorithms against baselines across five varied domains.
For baselines, we compare with~\citet{ross2012agnostic} that uses MLE
for model fitting and optimal planning for policy computation, and
call this baseline as \sysid{}. In addition to this, we also use
\mbpo{}~\cite{janner2019trust} and design variants of it that utilize
exploration distribution, similar to \sysid{}, to ensure a fair
comparison with our proposed algorithms. We defer details of our
experiment setup, cost functions, and baseline implementation
to~\pref{app:exp}\footnote{Code for all our experiments can be found
  at \url{https://github.com/vvanirudh/LAMPS-MBRL}.}.
\subsection{Helicopter}\label{sec:helicopter}
In this domain from~\citet{ross2012agnostic}, we compare \ouralg{}
with \sysid{}. The objective of the task is for the helicopter to
track a desired trajectory with unknown dynamics under the presence of
noise. For both approaches, we use an exploration distribution $\nu$
that samples from the desired trajectory. For optimal planning in
\sysid{}, we run iLQR~\cite{li2004ilqr} until convergence, and to
implement~\pref{alg:policy} for \ouralg{} we run a single iteration of
iLQR where the forward pass is replaced with the desired trajectory
and we run a single LQR backward pass on it to compute the policy. For
detailed explanation on the dynamics, cost function, and
implementations of \sysid{} and \ouralg{}, refer
to~\pref{app:helicopter}.

\pref{fig:small_mdp}(a) shows that \ouralg{} can learn a better
policy than \sysid{} given the same amount of real world data and the
same exploration distribution, indicating statistical gains. To test our
hypothesis that this is due to the tighter regret bound for \ouralg{}
as $\hat{V}_{\max} < V_{\max}$,~\pref{fig:small_mdp}(c) shows
how the learned policy $\pihat$ performs in the learned model $\Mhat$
for both approaches, and the expert's performance in $\Mhat$. Observe
that both \ouralg{} and \sysid{} are able to optimize
$V^{\pihat}_{\Mhat}$ but the expert's performance
$V_{\Mhat}^{\pistar}$ is not optimized as well leading to a weaker
regret bound for \sysid{} when compared to
\ouralg{}.~\pref{fig:small_mdp}(b) shows the computational
benefits of \ouralg{} where we plot the number of LQR solver calls,
the most expensive operation, made by each approach and we can observe
that by only optimizing on the exploration distribution \ouralg{}
significantly outperforms \sysid{} in the amount of computation used.

\subsection{WideTree}\label{sec:widetree}
We use a variant of the finite MDP domain introduced
in~\citet{ayoub2020model} that is very similar to~\pref{fig:toy} with
$H=3$. The model class consists of two models: $M^{\mathsf{good}}$ and
$M^{\mathsf{bad}}$ as described in~\pref{fig:toy}. For detailed
explanation of the dynamics, refer to ~\pref{fig:widetree}
in~\pref{app:widetree}. We compare \ouralgmm{} and \sysid{}. To
compute the best model to pick at every iteration given the data so
far, we use Hedge~\cite{freund1997hedge} to update the discrete
probability distribution over the two models.~\pref{fig:small_mdp}(d)
shows how the distribution evolves when using MLE-based model fitting
loss in \sysid{} and value moment matching loss in \ouralgmm{}. As
$M^{\mathsf{bad}}$ matches true dynamics everywhere except at the
root, \sysid{} collapses to a distribution that picks the bad model
over a good model always, while \ouralgmm{} which reasons about the
usefulness of transitions in computing good policies converges to a
distribution that picks the good model more often.

\subsection{Linear Dynamical System}\label{sec:lds}

In this experiment, we use a simple linear dynamical system with
quadratic costs over a finite horizon as our domain (similar to LQR)
for which we can compute the value function in closed form. The real
world has time-varying linear dynamics while the model class only has
time-invariant linear models making it agnostic. The cost function
penalizes control input at every timestep and the state only at the
final timestep (no intermediate state costs.) For detailed explanation
on the dynamics, cost functions, and model fitting losses, refer to
~\pref{app:lds}.~\pref{fig:small_mdp}(e)
shows the results for \ouralgmm{} and \sysid{}. \sysid{} converges
slowly to the expert performance trying to match the true dynamics at
every timestep while \ouralgmm{} using the value moment matching
objective quickly realizes that only the final state matters for cost
and finds a simple model which results in controls that bring the
state to zero by the end of the horizon. Thus, we see that \ouralgmm{}
by being value-aware can achieve significant statistical gains over
traditional MBRL methods.

\subsection{Mujoco Locomotion Benchmarks}\label{sec:gym}
\begin{figure*}
    \centering
    \includegraphics[width=0.243\linewidth]{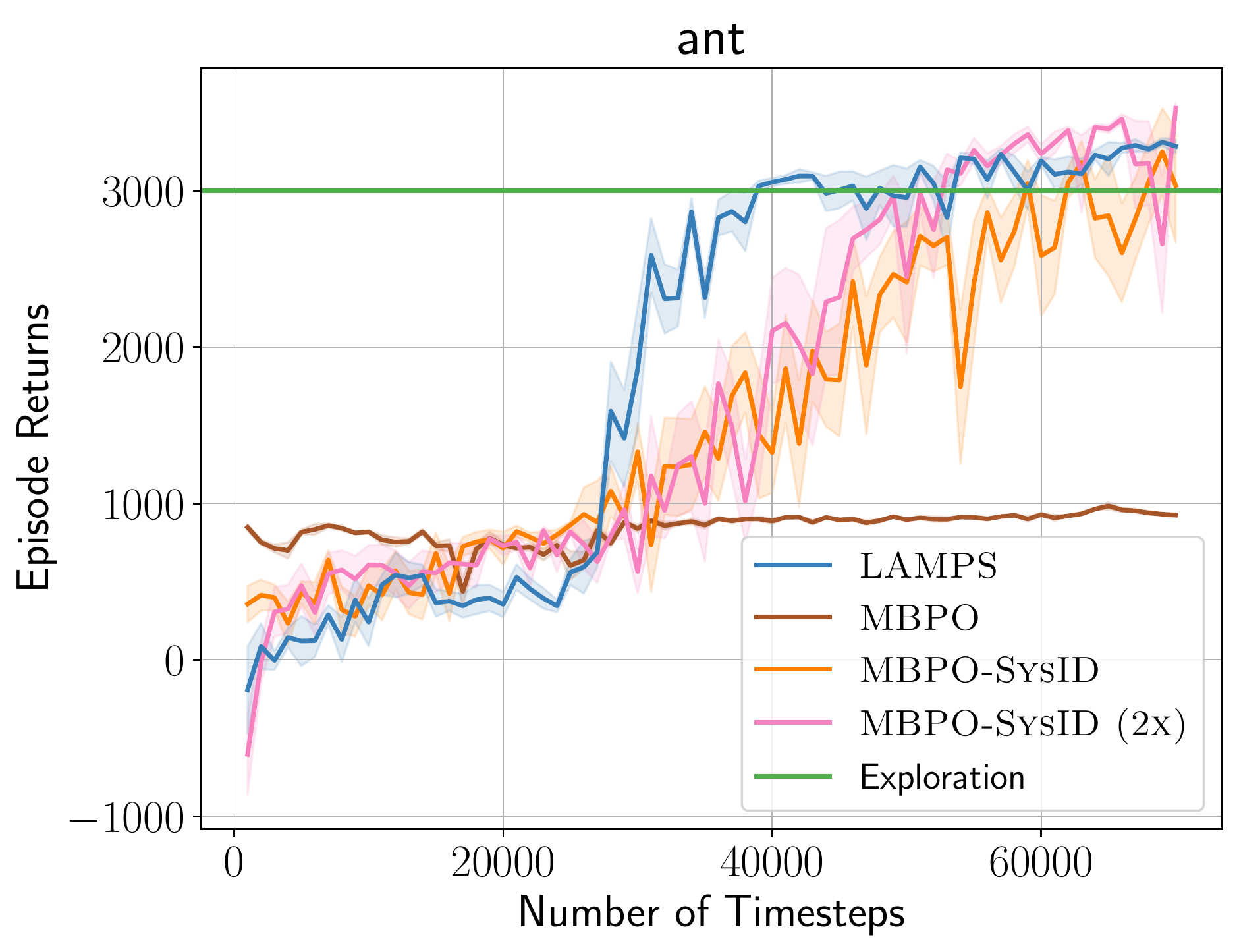}
    \includegraphics[width=0.243\linewidth]{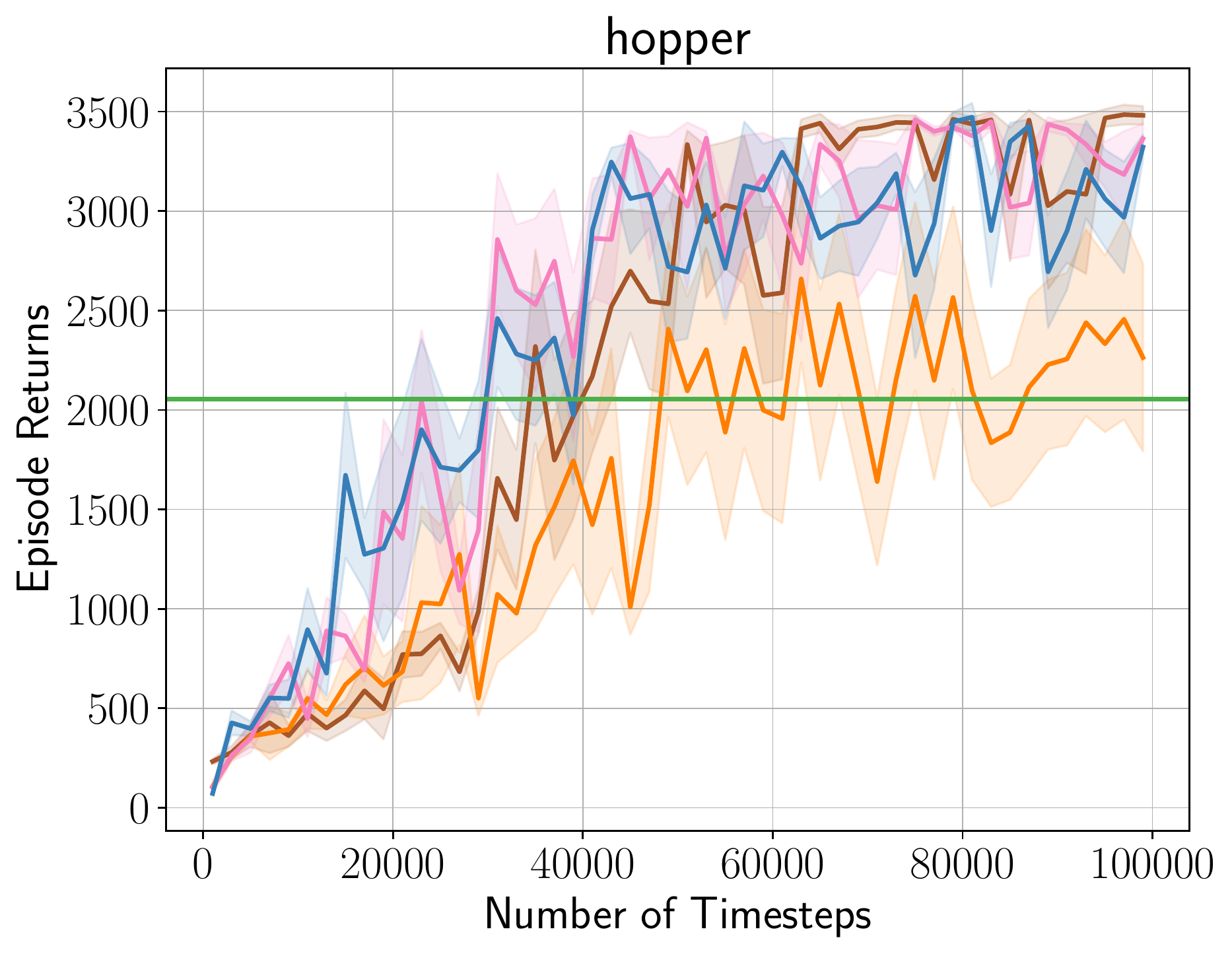}
    \includegraphics[width=0.243\linewidth]{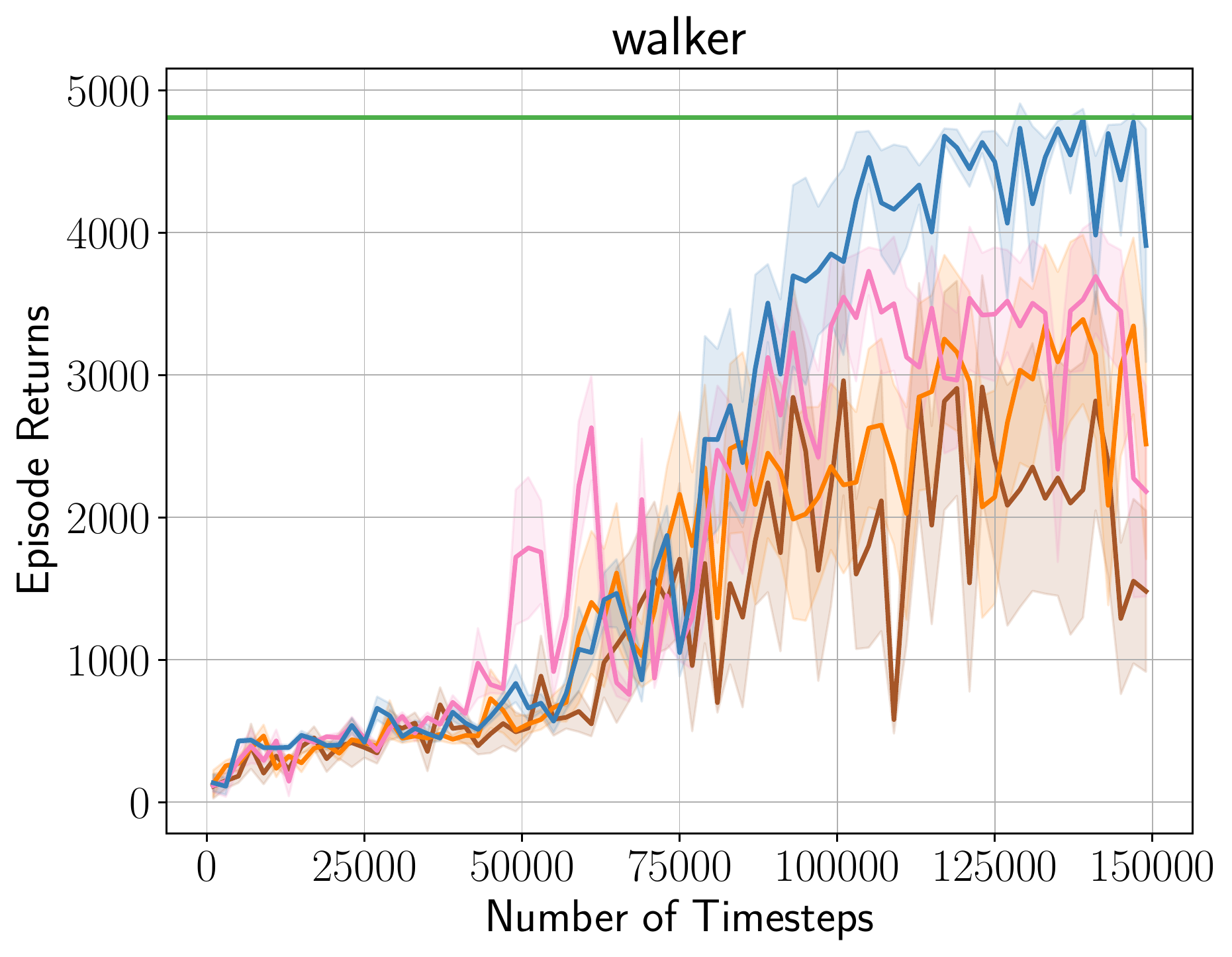}
    \includegraphics[width=0.243\linewidth]{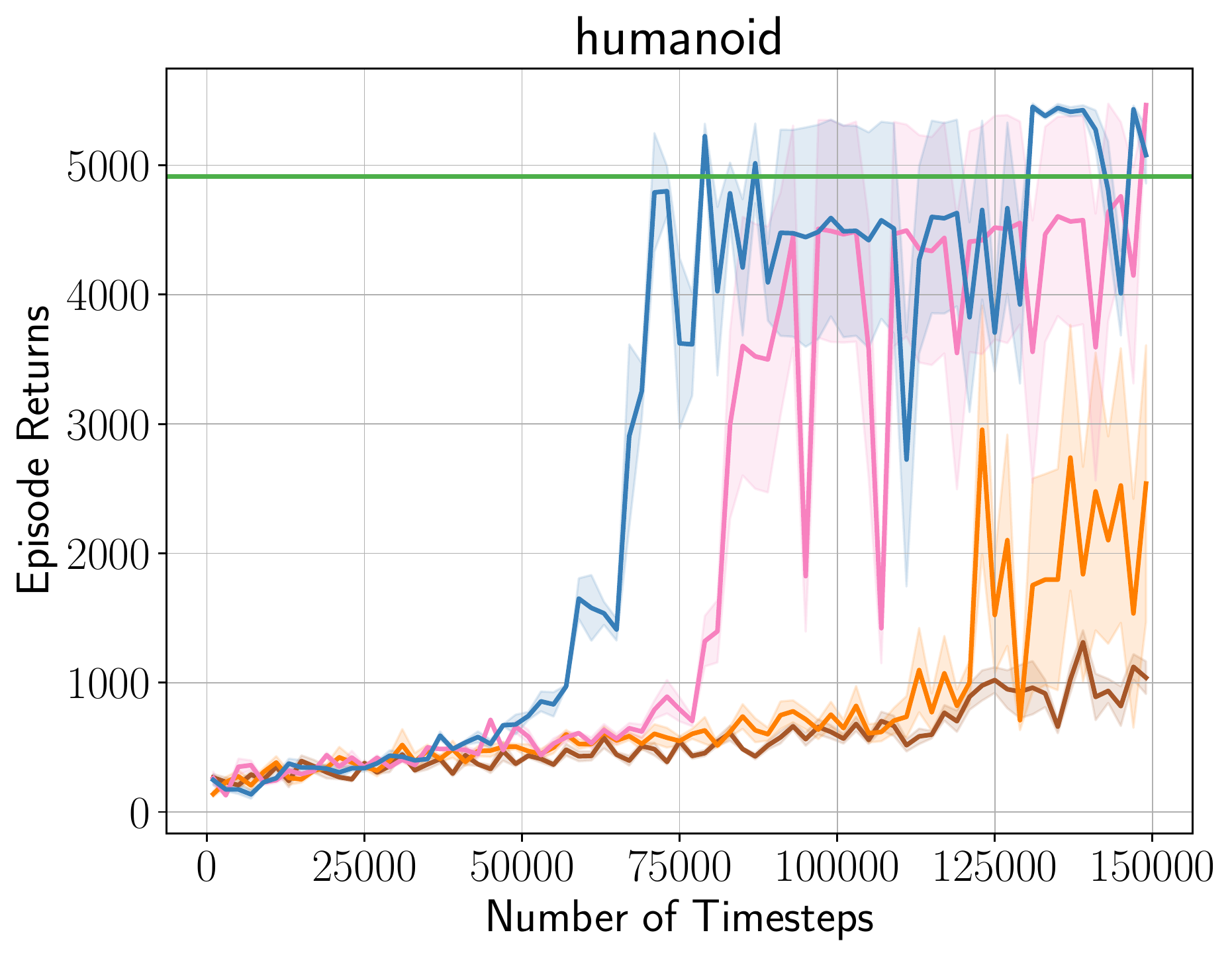}
    \caption{Results on mujoco locomotion benchmarks. All experiments
      are done using 5 random seeds and the shaded area denotes the
      standard error. We use 50000 exploration samples for humanoid
      task, and 10000 samples for all the other tasks. \mbpo{} also uses
      10000 random exploration samples as warm-start. The baseline
      ``Exploration'' denotes the average episodic returns of the
      exploration dataset.}
    \label{fig:gym}
\end{figure*}
In this experiment we test \ouralg{} on the standard dense reward mujoco
benchmarks~\cite{brockman2016openai}.  All baselines are implemented based on
\mbpo{}~\cite{janner2019trust}.
In addition to \mbpo{}, we also design a variant \mbposysid{} which
also uses data from exploration distribution, similar to \sysid{}, for
model fitting. The branched update in \mbpo{} serves as an efficient
surrogate for optimal planning in \mbposysid{}.  Another variant
\mbposysidcompute{} doubles the number of policy updates and the
number of interactions with the learned model used when compared to
\mbposysid{}.  For \ouralg{}, we keep the model fitting procedure the
same as \mbposysid{}
and use states sampled from the exploration distribution for policy
updates, rather than the current policy's visitation distribution.
For the exploration distribution $\nu$, we use an offline dataset and sample
from it every iteration. For more details on implementation such as
hyperparameters, refer to~\pref{app:mujoco}.

We show the results in \pref{fig:gym}. Compared to \mbpo{}, both
\ouralg{} and \mbposysid{} show better statistical efficiency, which
highlights the advantage of exploration
distribution~\cite{ross2012agnostic}. We note that \ouralg{}
consistently finds better policies with less number of
real world interactions than \mbposysid{} across all environments,
especially in humanoid which is the most difficult environment among
the ones used.
The performance of \mbposysidcompute{} shows that even when equipped
with twice the amount of computation as \ouralg{}, \ouralg{} still
outperforms or is competitive in all experiments. This highlights both
the computational and statistical efficiency of \ouralg{}. \looseness=-1

\subsection{Maze}\label{sec:maze}

Our final experiment investigates the performance of \ouralg{} in
sparse reward task by using PointMaze environment~\cite{fu2020d4rl} as
the domain. We use
only a small subset of the offline dataset as the exploration
distribution resulting in partial coverage and a small number of
expert trajectories. More details in~\pref{app:maze}. Since \ouralg{}
uses the exploration distribution in both model fitting and policy
computation steps, we expect it to outperform \mbposysid{}, which only
uses it in model fitting, as intelligent exploration is necessary in
sparse reward settings.~\pref{fig:maze} confirms our hypothesis where
\ouralg{} outperforms \mbposysid{} by a significant margin. By
focusing policy computation only along exploration distribution,
\ouralg{} does not exploit any inaccuracies elsewhere in the learned
model and quickly converges to a good policy.

\begin{figure}
    \centering
    \includegraphics[width=0.45\linewidth]{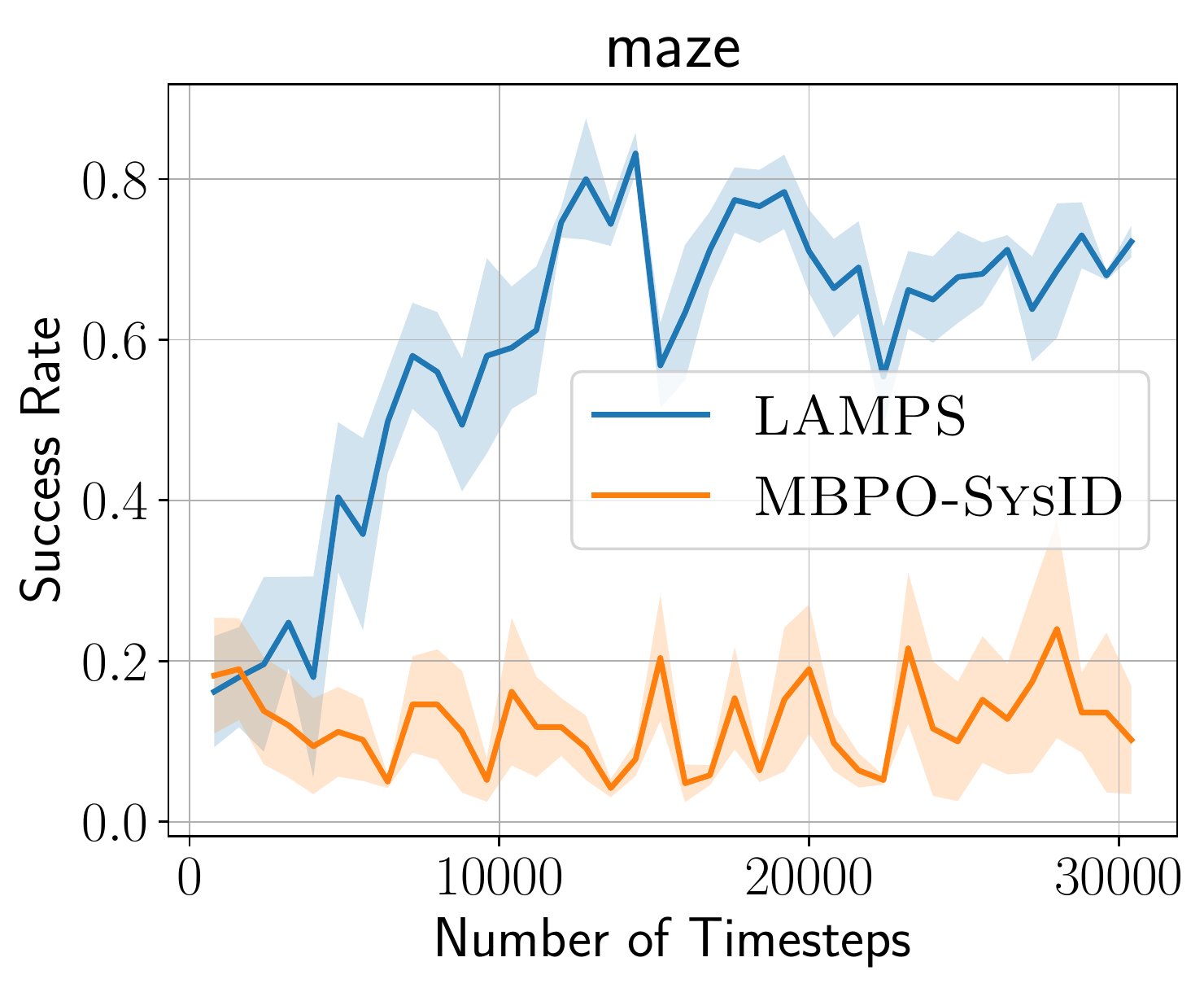}
    \includegraphics[width=0.45\linewidth]{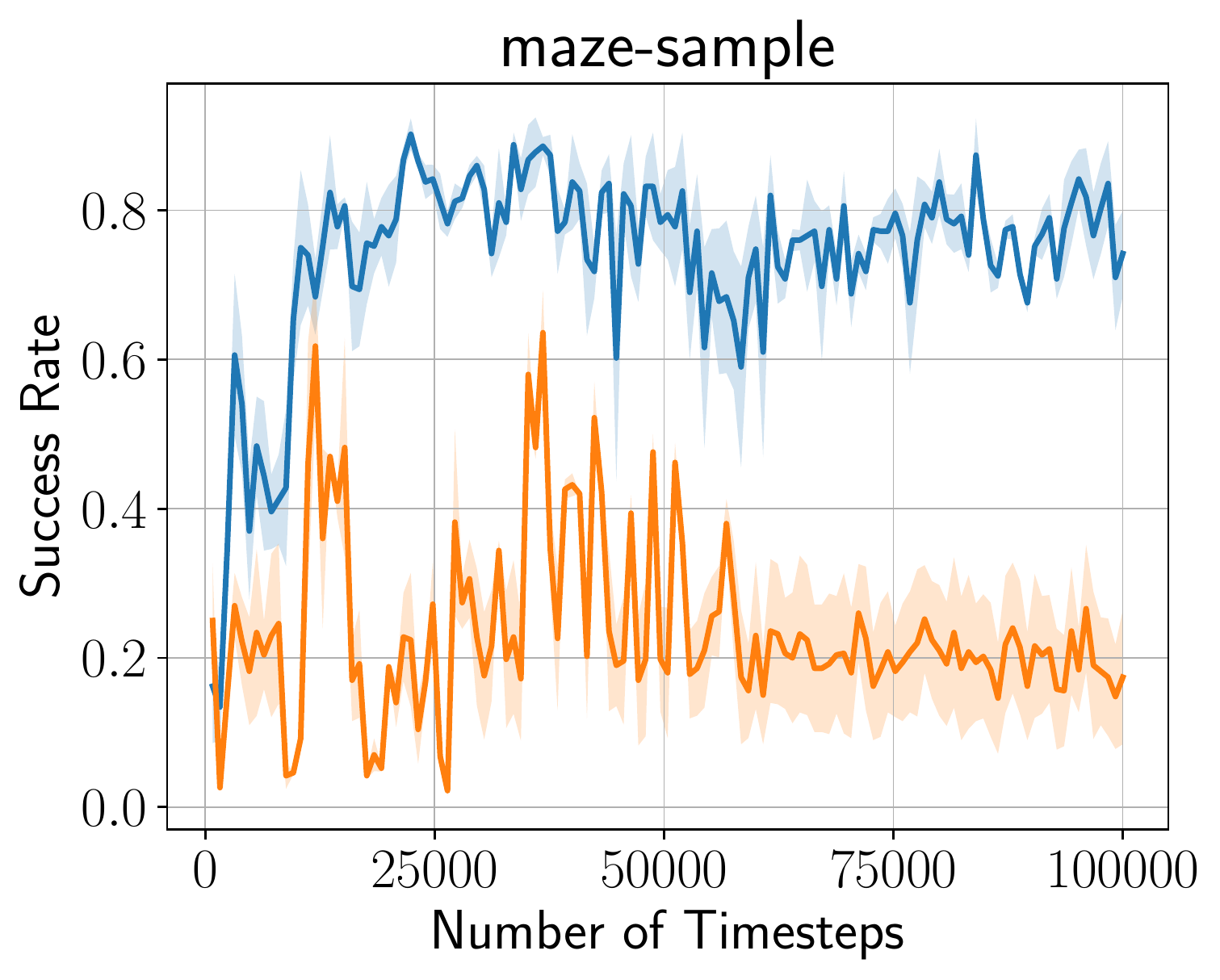}
    \caption{Results on D4RL PointMaze (large) environment. We use 10000
      exploration samples (left) and 50000 samples (right). In both
      setups, our algorithm shows a better exploration ability even
      though neither approach utilizes an explicit exploration
      scheme such as an exploration bonus.
      The results are averaged over 5 random seeds and the
      shaded area denotes the standard error.}
    \label{fig:maze}
\end{figure}


\section{Discussion}\label{sec:discussion}

In this work, we introduce a new unified objective function for
MBRL. The proposed objective function is designed to improve
computational efficiency in policy computation and alleviate the
objective mismatch issue in model fitting. Additionally, we present
two no-regret algorithms, \ouralg{} and \ouralgmm{}, that leverage the
proposed objective function and demonstrate their effectiveness
through statistical and computational gains on simulated
benchmarks. 

However, it should be noted that while \ouralg{} is
relatively straightforward to implement, \ouralgmm{} may be
challenging to apply to large MDPs where exact policy evaluation is
difficult. Additionally, both algorithms are sensitive to the quality
of exploration distribution $\nu$, and can only guarantee small regret
against policies with state-action distribution close to $\nu$. Hence,
we can expect these algorithms to compute a good policy if our prior
knowledge of the task allows us to design good exploration
distributions.

An interesting future work would be to extend this to the latent model setting, where we learn dynamics over an underlying latent state. In such a setting, the typical MLE model fitting objective does not intuitively make sense as we do not observe the underlying state and only have access to raw observations. We would also like to investigate if there exists a ``doubly-robust" version that combines the best of \sysid{} and \ouralg{} where we can take advantage of either having a good exploration distribution or a computationally cheap optimal planner.




\newpage

\bibliographystyle{icml2023}
\bibliography{ref}

\appendix
\onecolumn
\appendix
\section{Proofs}\label{app:proofs}
\subsection{Proofs for \pref{sec:framework}}
The
simulation lemma is useful to relate the performance of any policy
$\pi$, between two models, for example, the learned model $\hat M$ and
the real model $M^\ast$: 

\begin{lemma}[Simulation Lemma]\label{lem:simulation}
    For any start distribution $\omega$, policy $\pi$, and transition functions $\Mhat$, $\Mstar$, we have
    \begin{align}
        &J_{\Mstar}^\omega(\pi) - J_{\Mhat}^\omega(\pi) \nonumber
        =\mathbb{E}_{s \sim \omega}[V^\pi_{\Mstar}(s) - V^\pi_{\Mhat}(s)]  \nonumber\\
        &=\frac{\gamma}{1 - \gamma}\mathbb{E}_{(s, a) \sim D_{\omega, \pi}}\left[ \mathbb{E}_{s' \sim \Mstar(s, a)}[V^\pi_{\Mhat}(s')] - \mathbb{E}_{s'' \sim \Mhat(s, a)}[V^\pi_{\Mhat}(s'')]\right]\label{eq:simulation-lemma}
    \end{align}
\end{lemma}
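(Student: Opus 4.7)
\textbf{Proof plan for Lemma~\ref{lem:simulation} (Simulation Lemma).}

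The first equality is just unpacking the definition $J^\omega_M(\pi) = \mathbb{E}_{s \sim \omega}[V^\pi_M(s)]$ applied to both $\Mstar$ and $\Mhat$, so no work is needed there. The substance is in showing the second equality, and my plan is a one-step Bellman expansion followed by a telescoping/induction argument on the time index.

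The first step will be to write the Bellman equations for the two value functions at a fixed $s$:
\begin{align*}
V^\pi_{\Mstar}(s) &= \expec_{a \sim \pi(s)}\bigl[c(s,a) + \gamma \expec_{s' \sim \Mstar(s,a)} V^\pi_{\Mstar}(s')\bigr],\\
V^\pi_{\Mhat}(s) &= \expec_{a \sim \pi(s)}\bigl[c(s,a) + \gamma \expec_{s'' \sim \Mhat(s,a)} V^\pi_{\Mhat}(s'')\bigr].
\end{align*}
Subtracting, the shared cost term cancels, and I then add and subtract $\expec_{s' \sim \Mstar(s,a)}V^\pi_{\Mhat}(s')$ to split the difference into (i) a ``model-mismatch'' term that measures the one-step gap between $\Mstar$ and $\Mhat$ evaluated with the common value function $V^\pi_{\Mhat}$, and (ii) a recursive ``value-mismatch'' term $V^\pi_{\Mstar} - V^\pi_{\Mhat}$ at the next state drawn from $\Mstar$. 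Taking expectation over $s \sim \omega$ and using that drawing $s \sim \omega$, $a \sim \pi(s)$, $s' \sim \Mstar(s,a)$ produces the time-$h=1$ state-action distribution and the time-$h=2$ state distribution, the one-step identity becomes
\begin{align*}
\expec_{s \sim \omega}[V^\pi_{\Mstar}(s) - V^\pi_{\Mhat}(s)]
&= \gamma \expec_{(s,a) \sim D^1_{\omega,\pi}}\bigl[\expec_{s' \sim \Mstar(s,a)} V^\pi_{\Mhat}(s') - \expec_{s'' \sim \Mhat(s,a)} V^\pi_{\Mhat}(s'')\bigr]\\
&\quad+ \gamma \expec_{s' \sim d^{2}_{\omega,\pi}}[V^\pi_{\Mstar}(s') - V^\pi_{\Mhat}(s')].
\end{align*}

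The second step is to iterate this identity: applying the same decomposition to the residual $\expec_{s' \sim d^2_{\omega,\pi}}[V^\pi_{\Mstar} - V^\pi_{\Mhat}]$ produces a model-mismatch term at time $h=2$ plus a recursive tail at time $h=3$, and so on. Unrolling for $H$ steps and using that $\|V^\pi_{\Mstar} - V^\pi_{\Mhat}\|_\infty \le 1/(1-\gamma)$ (since costs lie in $[0,1]$), the tail $\gamma^{H} \expec_{s \sim d^{H+1}_{\omega,\pi}}[V^\pi_{\Mstar}(s) - V^\pi_{\Mhat}(s)]$ vanishes as $H \to \infty$. What remains is
\[
\expec_{s \sim \omega}[V^\pi_{\Mstar}(s) - V^\pi_{\Mhat}(s)] = \sum_{h=1}^{\infty} \gamma^{h} \expec_{(s,a) \sim D^h_{\omega,\pi}}\bigl[\expec_{s' \sim \Mstar(s,a)} V^\pi_{\Mhat}(s') - \expec_{s'' \sim \Mhat(s,a)} V^\pi_{\Mhat}(s'')\bigr].
\]
The final step is to collapse this sum using the definition $D_{\omega,\pi} = (1-\gamma)\sum_{h=1}^\infty \gamma^{h-1} D^h_{\omega,\pi}$, which immediately converts $\sum_{h=1}^\infty \gamma^h (\cdot)$ into $\frac{\gamma}{1-\gamma}\,\expec_{(s,a) \sim D_{\omega,\pi}}[\cdot]$, matching the stated right-hand side.

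The main obstacle is mostly bookkeeping: being careful that the ``next state'' inside the recursion is sampled from $\Mstar$ (so that rolling out gives the real-world on-policy distribution $D^h_{\omega,\pi}$), and justifying the interchange of limit and expectation when passing from the finite unrolling to the infinite series. Both are handled by the geometric factor $\gamma^h$ together with the uniform bound $\|V^\pi_{\Mhat}\|_\infty \le 1/(1-\gamma)$; no additional regularity on $\Mstar$ or $\Mhat$ is needed.
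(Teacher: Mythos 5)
Your proposal is correct and follows essentially the same route as the paper's proof: a one-step Bellman expansion with the cross term $\expec_{s' \sim \Mstar(s,a)}V^\pi_{\Mhat}(s')$ added and subtracted, yielding a recurrence that is unrolled over the infinite horizon and collapsed via the definition of $D_{\omega,\pi}$. Your added care about the vanishing tail term and the limit interchange is a welcome refinement of the paper's terser ``solve the recurrence'' step, but it is not a different argument.
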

\begin{proof}
    The first equality follows from the definition of $J^\omega_{M}(\pi)$ as defined in \pref{sec:setup}. To prove the second equality we establish a recurrence as follows:
    \begin{align*}
        &\mathbb{E}_{s \sim \omega}[V^\pi_{\Mstar}(s) - V^\pi_{\Mhat}(s)]\\
        &=\expec_{s \sim \omega, a \sim \pi(s)}[c(s, a) + \gamma \expec_{s' \sim \Mstar(s, a)}[V^\pi_{\Mstar}(s')] - c(s, a) - \gamma \expec_{s'' \sim \Mhat(s, a)}[V^\pi_{\Mhat}(s'')]] \\
        &=\gamma\expec_{s \sim \omega, a \sim \pi(s)}[\expec_{s' \sim \Mstar(s, a)}[V^\pi_{\Mstar}(s')] - \expec_{s'' \sim \Mhat(s, a)}[V^\pi_{\Mhat}(s'')]] \\
        &= \gamma\expec_{s \sim \omega, a \sim \pi(s)}[\expec_{s' \sim \Mstar(s, a)}[V^\pi_{\Mstar}(s')] - \expec_{s' \sim \Mstar(s, a)}[V^\pi_{\Mhat}(s')]+\expec_{s' \sim \Mstar(s, a)}[V^\pi_{\Mhat}(s')] - \expec_{s'' \sim \Mhat(s, a)}[V^\pi_{\Mhat}(s'')]] \\
        &= \gamma \expec_{s' \sim d_{\omega, \pi}^1}[V^\pi_{\Mstar}(s') - V^\pi_{\Mhat}(s')] + \gamma\expec_{(s, a) \sim D^0_{\omega, \pi}}\left[ \mathbb{E}_{s' \sim \Mstar(s, a)}[V^\pi_{\Mhat}(s')] - \mathbb{E}_{s'' \sim \Mhat(s, a)}[V^\pi_{\Mhat}(s'')]\right]
    \end{align*}
    Thus, we established a recurrence between the performance difference at time $0$ and the performance difference at time $1$ with the state sampled from the state distribution by following $\pi$ at time $1$. We can solve this recurrence for the infinite horizon to get the lemma statement.
\end{proof}

The Performance difference via Planning in Model (PDPM) lemma is as follows:

\begin{lemma}[PDPM (\pref{lem:double} restate)]
    For any start state distribution $\omega$, policies $\hat{\pi}$, $\pi^\star$, and transition functions $\Mhat, \Mstar$ we have,
    \begin{align*}
        J_{\Mstar}^\omega(\pihat) - J_{\Mstar}^\omega(\pistar) = \mathbb{E}_{s \sim \omega} [V_{M^\star}^{\hat{\pi}}(s) - V_{M^\star}^{\pi^\star}(s)] =& \mathbb{E}_{s \sim \omega}[V^{\pihat}_{\Mhat}(s) - V^{\pistar}_{\Mhat}(s)]\\
        &+ \frac{\gamma}{1-\gamma}\mathbb{E}_{(s, a) \sim D_{\omega, \pihat}}[\expec_{s' \sim \Mstar(s, a)}[V_{\Mhat}^{\pihat}(s')] - \expec_{s'' \sim \Mhat(s, a)}[V_{\Mhat}^{\pihat}(s'')]] \\
        &+ \frac{\gamma}{1-\gamma}\mathbb{E}_{(s, a) \sim D_{\omega, \pistar}}[\expec_{s'' \sim \Mhat(s, a)}[V_{\Mhat}^{\pistar}(s'')] - \expec_{s' \sim \Mstar(s, a)}[V_{\Mhat}^{\pistar}(s')]]
    \end{align*}
\end{lemma}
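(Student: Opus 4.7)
The plan is to prove the decomposition by introducing the two mixed value terms $V^{\pihat}_{\Mhat}$ and $V^{\pistar}_{\Mhat}$ inside the expectation, and then invoking the Simulation Lemma (\pref{lem:simulation}) twice, once for each policy, to convert ``same-policy, different-model'' differences into the expected one-step next-state value gaps that appear as terms \eqref{term:dlm_2} and \eqref{term:dlm_3}.

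Concretely, I would start from the LHS and write
\begin{align*}
\expec_{s\sim\omega}\bigl[V^{\pihat}_{\Mstar}(s) - V^{\pistar}_{\Mstar}(s)\bigr]
&= \expec_{s\sim\omega}\bigl[V^{\pihat}_{\Mstar}(s) - V^{\pihat}_{\Mhat}(s)\bigr] \\
&\quad + \expec_{s\sim\omega}\bigl[V^{\pihat}_{\Mhat}(s) - V^{\pistar}_{\Mhat}(s)\bigr] \\
&\quad + \expec_{s\sim\omega}\bigl[V^{\pistar}_{\Mhat}(s) - V^{\pistar}_{\Mstar}(s)\bigr].
\end{align*}
The middle line is already exactly term \eqref{term:dlm_1}, so no further work is needed there. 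For the first line, I would apply \pref{lem:simulation} with $\pi = \pihat$, $M = \Mstar$ as the ``true'' transition and $\Mhat$ as the reference model, which gives precisely term \eqref{term:dlm_2} with occupancy $D_{\omega,\pihat}$ and successor value $V^{\pihat}_{\Mhat}$. For the third line, I would apply \pref{lem:simulation} with $\pi = \pistar$; the $\omega$-expectation on the LHS has the opposite sign convention from the stated form of the Simulation Lemma, so after negating both sides the occupancy becomes $D_{\omega,\pistar}$ and the next-state value gap appears as $\expec_{s''\sim\Mhat}[V^{\pistar}_{\Mhat}(s'')] - \expec_{s'\sim\Mstar}[V^{\pistar}_{\Mhat}(s')]$, matching term \eqref{term:dlm_3} exactly. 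Summing and multiplying through by $(1-\gamma)$ recovers the lemma.

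There is really no hard step here; the only thing to be careful about is sign bookkeeping in the second Simulation Lemma application, since the expert-side term comes out with the $\Mhat$-successor expectation appearing \emph{first} rather than second, and one must match this ordering with the statement of the lemma. Another minor point is verifying that the Simulation Lemma applies cleanly even though $\pistar$ is not the optimal policy in $\Mhat$ — but this is fine because \pref{lem:simulation} is stated for an arbitrary policy $\pi$, so the identity is purely algebraic and no optimality assumption is needed. The whole argument is essentially a two-step telescoping through the mixed quantities $V^{\pihat}_{\Mhat}$ and $V^{\pistar}_{\Mhat}$, together with the closed-form recurrence solution already established in the proof of the Simulation Lemma.
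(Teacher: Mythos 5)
Your proposal is correct and follows essentially the same route as the paper's proof: add and subtract $V^{\pihat}_{\Mhat}$ and $V^{\pistar}_{\Mhat}$ to split the difference into the in-model performance gap plus two same-policy, different-model terms, then apply the Simulation Lemma once with $\pi=\pihat$ and once (with the sign flipped) with $\pi=\pistar$. Your remarks on the sign bookkeeping for the expert-side term and on the Simulation Lemma holding for arbitrary policies are both accurate; the only immaterial nit is that the restated version of the lemma has no $(1-\gamma)$ prefactor, so the final multiplication is only needed to recover the main-text form.
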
\begin{proof}
    We can add and subtract terms on the left hand side to get
    \begin{align*}
        &\mathbb{E}_{s \sim \omega} [V_{M^\star}^{\hat{\pi}}(s) - V_{M^\star}^{\pi^\star}(s)] = \\
        &\mathbb{E}_{s \sim \omega}\left[ (V^{\pihat}_{\Mhat}(s) - V_{\Mhat}^{\pistar}(s)) + (V^{\pihat}_{\Mstar}(s) - V^{\pihat}_{\Mhat}(s)) + (V^{\pistar}_{\Mhat}(s) - V^{\pistar}_{\Mstar}(s))\right]
    \end{align*}
    Apply the simulation lemma to the second and third terms inside the expectation above to get the result
    \begin{align*}
        \mathbb{E}_{s \sim \omega} [V_{M^\star}^{\hat{\pi}}(s) - V_{M^\star}^{\pi^\star}(s)] &= \mathbb{E}_{s \sim \omega}[V^{\pihat}_{\Mhat}(s) - V^{\pistar}_{\Mhat}(s)] \\
        &+ \frac{\gamma}{1-\gamma}\mathbb{E}_{(s, a) \sim d_{\omega, \pihat}, s' \sim \Mstar(s, a), s'' \sim \Mhat(s, a)}[V_{\Mhat}^{\pihat}(s') - V_{\Mhat}^{\pihat}(s'')] \\
        &+ \frac{\gamma}{1-\gamma}\mathbb{E}_{(s, a) \sim d_{\omega, \pistar}, s' \sim \Mstar(s, a), s'' \sim \Mhat(s, a)}[V_{\Mhat}^{\pistar}(s'') - V_{\Mhat}^{\pistar}(s')]
    \end{align*}
\end{proof}

\begin{corollary}
[Corollary~\pref{cor:tv} restate]
    For any start state distribution $\omega$, $\pi^\star$, and transition functions $\Mhat, \Mstar$, let $\hat \pi$ be the returned optimal control policy in $\hat M$ as in \pref{eq:oc}, we have,
    \begin{align*}
        \mathbb{E}_{s \sim \omega} &[V_{M^\star}^{\hat{\pi}}(s) -  V_{M^\star}^{\pi^\star}(s)] \leq \epsilon_{oc} + \\ &\frac{\gamma \hat V_{\max}}{1-\gamma}\mathbb{E}_{(s, a) \sim D_{\omega, \pihat}}\left\|\Mhat(s, a)-\Mstar(s, a)\right\|_1
        + \\&\frac{\gamma V_{\max}}{1-\gamma}\mathbb{E}_{(s, a) \sim D_{\omega, \pistar}}\left\|\Mhat(s, a)-\Mstar(s, a)\right\|_1,
    \end{align*}
where $\hat V_{\max} = \|V_{\hat M}^{\hat \pi}\|_{\infty}, V_{\max} = \|V^{\pi^\ast}_{\hat M}\|_{\infty}$.
\end{corollary}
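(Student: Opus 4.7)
The plan is to derive Corollary \ref{cor:tv} directly from the Performance Difference via Planning in Model lemma (\pref{lem:double}) whose restated version sits immediately above the target statement. The three terms on the right hand side of that lemma map naturally to the three summands in the corollary, so the proof is a term-by-term bounding exercise. First I would handle the performance difference in the learned model, i.e.\ $\mathbb{E}_{s \sim \omega}[V^{\pihat}_{\Mhat}(s) - V^{\pistar}_{\Mhat}(s)]$, which is exactly $J^\omega_{\Mhat}(\pihat) - J^\omega_{\Mhat}(\pistar)$. Since $\pistar \in \Pi$ we have $J^\omega_{\Mhat}(\pistar) \ge \min_{\pi \in \Pi} J^\omega_{\Mhat}(\pi)$, and the approximate-optimality assumption~\pref{eq:oc} on $\pihat$ immediately gives $J^\omega_{\Mhat}(\pihat) - J^\omega_{\Mhat}(\pistar) \le \epsilon_{oc}$.

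Next I would bound the two ``value difference'' terms by a single reusable application of H\"older's inequality. For any bounded function $f : \Scal \to \mathbb{R}$ and any pair of distributions $P, Q$ on $\Scal$,
\begin{equation*}
\bigl|\mathbb{E}_{s' \sim P}[f(s')] - \mathbb{E}_{s'' \sim Q}[f(s'')]\bigr| \le \|f\|_\infty \|P - Q\|_1.
\end{equation*}
Applying this pointwise in $(s,a)$ with $P = \Mstar(s,a)$, $Q = \Mhat(s,a)$, and $f = V^{\pihat}_{\Mhat}$ bounds term~\eqref{term:dlm_2} by $\hat V_{\max}\,\|\Mhat(s,a) - \Mstar(s,a)\|_1$ inside the $D_{\omega,\pihat}$ expectation; doing the same with $f = V^{\pistar}_{\Mhat}$ bounds term~\eqref{term:dlm_3} by $V_{\max}\,\|\Mhat(s,a) - \Mstar(s,a)\|_1$ inside the $D_{\omega,\pistar}$ expectation. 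Taking outer expectations, pulling the $\gamma/(1-\gamma)$ factor out, and summing the three bounds produces exactly the statement of the corollary.

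There is essentially no obstacle here: the decomposition has already done all the work in~\pref{lem:double}, and all that remains is one invocation of \pref{eq:oc} plus two applications of H\"older's inequality. The only mild care is sign-tracking inside term~\eqref{term:dlm_3}, whose integrand is $\mathbb{E}_{s''\sim \Mhat}[V^{\pistar}_{\Mhat}(s'')] - \mathbb{E}_{s'\sim \Mstar}[V^{\pistar}_{\Mhat}(s')]$ rather than the reverse, but taking absolute values before H\"older removes this concern. I would then remark that the proof exposes precisely why this bound is loose: the unknown quantity $V_{\max} = \|V^{\pistar}_{\Mhat}\|_\infty$ is introduced purely as a worst-case slack in H\"older, motivating the value-aware objective developed in Section~\ref{sec:simulation}.
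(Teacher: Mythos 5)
Your proposal is correct and follows essentially the same route as the paper's own proof: invoke the decomposition of Lemma~\ref{lem:double}, bound the first term by the approximate-optimality condition~\eqref{eq:oc} (using that $\pistar\in\Pi$ so its value upper-bounds the minimum), and bound the two value-difference terms via H\"older's inequality with $\|V^{\pihat}_{\Mhat}\|_\infty$ and $\|V^{\pistar}_{\Mhat}\|_\infty$ respectively. The only addition beyond the paper's argument is your closing remark on why the bound is loose, which matches the discussion the authors give separately in Section~\ref{sec:challenges}.
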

\begin{proof}
    By \pref{lem:double}, we have:
    \begin{align*}
        \mathbb{E}_{s \sim \omega} [V_{M^\star}^{\hat{\pi}}(s) - V_{M^\star}^{\pi^\star}(s)] &= \mathbb{E}_{s \sim \omega}[V^{\pihat}_{\Mhat}(s) - V^{\pistar}_{\Mhat}(s)] \\
        &+ \frac{\gamma}{1-\gamma}\mathbb{E}_{(s, a) \sim d_{\omega, \pihat}, s' \sim \Mstar(s, a), s'' \sim \Mhat(s, a)}[V_{\Mhat}^{\pihat}(s') - V_{\Mhat}^{\pihat}(s'')] \\
        &+ \frac{\gamma}{1-\gamma}\mathbb{E}_{(s, a) \sim d_{\omega, \pistar}, s' \sim \Mstar(s, a), s'' \sim \Mhat(s, a)}[V_{\Mhat}^{\pistar}(s'') - V_{\Mhat}^{\pistar}(s')],
    \end{align*}
Then we bound the first term by \pref{eq:oc}, and by holder's inequality, the second term is bounded by
\begin{align*}
    \frac{\gamma}{1-\gamma}\mathbb{E}_{(s, a) \sim d_{\omega, \pihat}, s' \sim \Mstar(s, a), s'' \sim \Mhat(s, a)}[V_{\Mhat}^{\pihat}(s') - V_{\Mhat}^{\pihat}(s'')] \leq \frac{\gamma}{1-\gamma} \|V_{\Mhat}^{\pihat}\|_{\infty} \EE_{s,a\sim D_{\omega, \pihat}}\left\|\Mhat(s, a)-\Mstar(s, a)\right\|_1,
\end{align*}
and apply holder's inequality to the third term similarly, we complete the proof.
\end{proof}

\subsection{Proofs for \pref{sec:simulation}}\label{sec:appendix-simulation}
In this section, we present the proof for  \pref{sec:simulation}. Let us start with the Performance Difference via Advantage in Model (PDAM) Lemma:
\begin{lemma}[PDAM (restate of \pref{lem:simulation-pdl})]
    Given any start state distribution $\omega$, policies $\pihat, \pistar$, and transition functions $\Mhat, \Mstar$ we have:
     \begin{align*}
        &J_{\Mstar}^\omega(\pihat) - J_{\Mstar}^\omega(\pistar) = \mathbb{E}_{s \sim \omega} [V_{M^\star}^{\hat{\pi}}(s) - V_{M^\star}^{\pi^\star}(s)] =\nonumber\\
        &\frac{\gamma}{1 - \gamma}\mathbb{E}_{(s, a) \sim D_{\omega, \pihat}}[\expec_{s' \sim \Mstar(s, a)}[V_{\Mhat}^{\pihat}(s')] - \expec_{s'' \sim \Mhat(s, a)}[V_{\Mhat}^{\pihat}(s'')]] + \nonumber\\
        &\frac{\gamma}{1 - \gamma}\expec_{(s, a) \sim D_{\omega, \pistar}}[\expec_{s'' \sim \Mhat(s, a)}[V^{\pihat}_{\Mhat}(s'')] - \expec_{s' \sim \Mstar(s, a)}[V^{\pihat}_{\Mhat}(s')]] + \nonumber \\
        &\frac{1}{1 - \gamma}\expec_{s \sim d_{\omega, \pistar}}[ V^{\pihat}_{\Mhat}(s) - \expec_{a \sim \pistar(s)}[Q^{\pihat}_{\Mhat}(s, a)]]
    \end{align*}
\end{lemma}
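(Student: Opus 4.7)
The plan is to mirror the proof of the standard PDPM lemma but to route the decomposition through the value $V^{\hat\pi}_{\hat M}$ of the \emph{learned} policy everywhere, so that $V^{\pi^\star}_{\hat M}$ never appears. Concretely, I would start from
\[
\mathbb{E}_{s\sim\omega}[V^{\hat\pi}_{M^\star}(s) - V^{\pi^\star}_{M^\star}(s)]
=\mathbb{E}_{s\sim\omega}[V^{\hat\pi}_{M^\star}(s) - V^{\hat\pi}_{\hat M}(s)]
+\mathbb{E}_{s\sim\omega}[V^{\hat\pi}_{\hat M}(s) - V^{\pi^\star}_{M^\star}(s)],
\]
and apply the Simulation Lemma (\pref{lem:simulation}) to the first bracket. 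This immediately yields the $D_{\omega,\hat\pi}$ model-difference term~\eqref{term:model_learned}, including the correct $\gamma/(1-\gamma)$ factor. All subsequent work is devoted to the second bracket, which I will denote $\Delta(s) := V^{\hat\pi}_{\hat M}(s) - V^{\pi^\star}_{M^\star}(s)$.

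To extract the disadvantage term~\eqref{term:policy}, I add and subtract $\mathbb{E}_{a\sim\pi^\star(s)} Q^{\hat\pi}_{\hat M}(s,a)$ inside $\Delta(s)$. The piece $V^{\hat\pi}_{\hat M}(s) - \mathbb{E}_{a\sim\pi^\star(s)} Q^{\hat\pi}_{\hat M}(s,a)$ is exactly the integrand of~\eqref{term:policy} at state $s$. For the remaining piece $\mathbb{E}_{a\sim\pi^\star(s)} Q^{\hat\pi}_{\hat M}(s,a) - V^{\pi^\star}_{M^\star}(s)$, both quantities sample the first action from $\pi^\star(s)$ and incur the same immediate cost $c(s,a)$, so those cancel and leave
\[
\gamma\,\mathbb{E}_{a\sim\pi^\star(s)}\bigl[\mathbb{E}_{s''\sim\hat M(s,a)}V^{\hat\pi}_{\hat M}(s'') - \mathbb{E}_{s'\sim M^\star(s,a)}V^{\pi^\star}_{M^\star}(s')\bigr].
\]
Adding and subtracting $\mathbb{E}_{s'\sim M^\star(s,a)}V^{\hat\pi}_{\hat M}(s')$ splits this cleanly into (i) a one-step model-difference term evaluated at $(s,a)$ with $a\sim\pi^\star$, which is the integrand of~\eqref{term:model_expert}, and (ii) a residual $\gamma\,\mathbb{E}_{a\sim\pi^\star(s),\,s'\sim M^\star(s,a)} \Delta(s')$ of exactly the same form as what we started with, now at the next-state distribution under $\pi^\star$ in $M^\star$.

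This sets up a recurrence: defining $U_h := \mathbb{E}_{s\sim d^{h}_{\omega,\pi^\star}}\Delta(s)$ (with $d^1_{\omega,\pi^\star}=\omega$), the calculation above shows $U_h = (\text{advantage part at level }h) + (\text{model-expert part at level }h) + \gamma U_{h+1}$. Iterating the recurrence and summing the geometric series in $\gamma$ produces the $1/(1-\gamma)$ factor on the disadvantage term~\eqref{term:policy} and the $\gamma/(1-\gamma)$ factor on the expert-side model term~\eqref{term:model_expert}, since both are reweighted by the discounted state occupancy $d_{\omega,\pi^\star} = (1-\gamma)\sum_{h\ge 1}\gamma^{h-1} d^h_{\omega,\pi^\star}$.

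The main obstacle I anticipate is purely bookkeeping: making sure the telescoping cleanly produces the occupancy $D_{\omega,\pi^\star}$ (as opposed to a finite-horizon truncation) and that the factors of $\gamma$ and $(1-\gamma)$ line up exactly with the statement, especially the asymmetry that the disadvantage term has a $1/(1-\gamma)$ prefactor while the two model-difference terms carry $\gamma/(1-\gamma)$. A standard tail-bound/limit argument (using boundedness of $V^{\hat\pi}_{\hat M}$ and $V^{\pi^\star}_{M^\star}$ by $1/(1-\gamma)$) will justify taking the limit $h\to\infty$ in the recurrence so that the residual $\gamma^h U_{h+1}$ vanishes.
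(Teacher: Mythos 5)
Your proposal is correct and follows essentially the same route as the paper's proof: the same initial split into $V^{\hat\pi}_{M^\star}-V^{\hat\pi}_{\hat M}$ (handled by the Simulation Lemma) plus $V^{\hat\pi}_{\hat M}-V^{\pi^\star}_{M^\star}$, the same insertion of $\expec_{a\sim\pi^\star(s)}Q^{\hat\pi}_{\hat M}(s,a)$ to peel off the disadvantage term, the same add-and-subtract of $\expec_{s'\sim M^\star(s,a)}V^{\hat\pi}_{\hat M}(s')$, and the same recurrence solved to the infinite horizon. Your explicit tail-bound justification for the vanishing residual $\gamma^h U_{h+1}$ is a detail the paper leaves implicit, but otherwise the two arguments coincide.
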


\begin{proof}
    Let's begin with the left hand side, and reformulate it as follows:
    \begin{align*}
        \mathbb{E}_{s \sim \omega}[V_{\Mstar}^{\pihat}(s) - V^{\pistar}_{\Mstar}(s)]
        = \mathbb{E}_{s \sim \omega}[V^{\pihat}_{\Mhat}(s) - V_{\Mstar}^{\pistar}(s)] + \mathbb{E}_{s \sim \omega}[V^{\pihat}_{\Mstar}(s) - V^{\pihat}_{\Mhat}(s)]
    \end{align*}
    The second term above is familiar to us, it is the left hand side of the simulation lemma in equation~\eqref{eq:simulation-lemma}. So we can apply the simulation lemma to get:
    \begin{align}
        \mathbb{E}_{s \sim \omega}[V_{\Mstar}^{\pihat}(s) - V^{\pistar}_{\Mstar}(s)] = \mathbb{E}_{s \sim \omega}[V^{\pihat}_{\Mhat}(s) - V_{\Mstar}^{\pistar}(s)] +\frac{\gamma}{1 - \gamma}\mathbb{E}_{(s, a) \sim D_{\omega, \pihat}}\left[ \mathbb{E}_{s' \sim \Mstar(s, a)}[V^{\pihat}_{\Mhat}(s')] - \mathbb{E}_{s'' \sim \Mhat(s, a)}[V^{\pihat}_{\Mhat}(s'')]\right]\label{eq:first_term}
    \end{align}

    Now all that remains is the first term which can be simplified as:
    \begin{align*}
        &\expec_{s\sim \omega} [V^{\pihat}_{\Mhat}(s) - V^{\pistar}_{\Mstar}(s)] \\
        &=\expec_{s\sim\omega}[V^{\pihat}_{\Mhat}(s) - \expec_{a \sim \pistar(s)}Q^{\pihat}_{\Mhat}(s, a) + \expec_{a\sim\pistar(s)}Q^{\pihat}_{\Mhat}(s, a) - V^{\pistar}_{\Mstar}(s)] \\
        &=\expec_{s\sim\omega}[V^{\pihat}_{\Mhat}(s) - \expec_{a \sim \pistar(s)}Q^{\pihat}_{\Mhat}(s, a)] \\
        &~~~~+\expec_{s\sim\omega}\left[\expec_{a\sim\pistar(s)}[c(s, a) + \gamma\expec_{s'' \sim \Mhat(s, a)}V^{\pihat}_{\Mhat}(s'')] \right.\\
        &~~~~~~~~- \left.\expec_{a \sim\pistar(s)}[c(s, a) + \gamma\expec_{s' \sim \Mstar(s, a)}V^{\pistar}_{\Mstar}(s')] \right] \\
        &= \expec_{s\sim\omega}[V^{\pihat}_{\Mhat}(s) - \expec_{a \sim \pistar(s)}Q^{\pihat}_{\Mhat}(s, a)] \\
        &~~~~+\gamma\expec_{(s, a) \sim D_{\omega, \pistar}^0}[\expec_{s'' \sim \Mhat(s, a)}V^{\pihat}_{\Mhat}(s'') - \expec_{s' \sim \Mstar(s, a)}V^{\pistar}_{\Mstar}(s')] \\
        &= \expec_{s\sim\omega}[V^{\pihat}_{\Mhat}(s) - \expec_{a \sim \pistar(s)}Q^{\pihat}_{\Mhat}(s, a)] \\
        &~~~~+\gamma\expec_{(s, a) \sim D_{\omega, \pistar}^0}\left[ \expec_{s'' \sim \Mhat(s, a)}V^{\pihat}_{\Mhat}(s'') - \expec_{s' \sim \Mstar(s, a)}V^{\pihat}_{\Mhat}(s')\right.\\
        &~~~~~~~~+\left. \expec_{s' \sim \Mstar(s, a)}V^{\pihat}_{\Mhat}(s') - \expec_{s' \sim \Mstar(s, a)}V^{\pistar}_{\Mstar}(s') \right] \\
        &= \expec_{s\sim\omega}[V^{\pihat}_{\Mhat}(s) - \expec_{a \sim \pistar(s)}Q^{\pihat}_{\Mhat}(s, a)] \\
        &~~~~+\gamma\expec_{(s, a) \sim D_{\omega, \pistar}^0}[ \expec_{s'' \sim \Mhat(s, a)}V^{\pihat}_{\Mhat}(s'') - \expec_{s' \sim \Mstar(s, a)}V^{\pihat}_{\Mhat}(s')] \\
        &~~~~+\gamma\expec_{(s, a) \sim D_{\omega, \pistar}^0}[\expec_{s' \sim \Mstar(s, a)}V^{\pihat}_{\Mhat}(s') - \expec_{s' \sim \Mstar(s, a)}V^{\pistar}_{\Mstar}(s')] \\
        &= \expec_{s\sim\omega}[V^{\pihat}_{\Mhat}(s) - \expec_{a \sim \pistar(s)}Q^{\pihat}_{\Mhat}(s, a)] \\
        &~~~~+\gamma\expec_{(s, a) \sim D_{\omega, \pistar}^0}[ \expec_{s'' \sim \Mhat(s, a)}V^{\pihat}_{\Mhat}(s'') - \expec_{s' \sim \Mstar(s, a)}V^{\pihat}_{\Mhat}(s')] \\
        &~~~~+\gamma\expec_{s' \sim d_{\omega, \pistar}^1}[V^{\pihat}_{\Mhat}(s') - V^{\pistar}_{\Mstar}(s')]
    \end{align*}

    Solving the above recurrence to the infinite horizon we obtain:
    \begin{align*}
        &\mathbb{E}_{s \sim \omega}[V^{\pihat}_{\Mhat}(s) - V_{\Mstar}^{\pistar}(s)] = \\ &~~~~~\frac{\gamma}{1-\gamma}\expec_{(s, a) \sim D_{\omega, \pistar}}[\expec_{s'' \sim \Mhat(s, a)}V^{\pihat}_{\Mhat}(s'') - \expec_{s' \sim \Mstar(s, a)}V^{\pihat}_{\Mhat}(s')] \\
        &~~~~+\frac{1}{1-\gamma}\expec_{s \sim d_{\omega, \pistar}}[V^{\pihat}_{\Mhat}(s) - \expec_{a \sim \pistar(s)}[Q^{\pihat}_{\Mhat}(s, a)]]
    \end{align*}
    By combining this with our previous result using Simulation Lemma in~\eqref{eq:first_term}, we can complete the proof.
\end{proof}

Now, we show the results using the exploration distribution $\nu$ and coverage coefficient $\Ccal$:
\begin{corollary}  \label{cor:simulation}
    Let $\nu$ be the exploration distribution, and let $\Ccal$ be the coverage coefficient. Given any start state distribution $\omega$, policies $\pihat, \pistar$, and transition functions $\Mhat, \Mstar$ we have:
     \begin{align*}
        &J_{\Mstar}^\omega(\pihat) - J_{\Mstar}^\omega(\pistar) = \mathbb{E}_{s \sim \omega} [V_{M^\star}^{\hat{\pi}}(s) - V_{M^\star}^{\pi^\star}(s)] \leq\nonumber\\
        &\frac{\gamma}{1 - \gamma}\mathbb{E}_{(s, a) \sim D_{\omega, \pihat}}[\expec_{s' \sim \Mstar(s, a)}[V_{\Mhat}^{\pihat}(s')] - \expec_{s'' \sim \Mhat(s, a)}[V_{\Mhat}^{\pihat}(s'')]] + \nonumber\\
        &\frac{\gamma \Ccal}{1 - \gamma}\expec_{(s, a) \sim \nu}[\expec_{s'' \sim \Mhat(s, a)}[V^{\pihat}_{\Mhat}(s'')] - \expec_{s' \sim \Mstar(s, a)}[V^{\pihat}_{\Mhat}(s')]] + \nonumber \\
        &\frac{\Ccal}{1 - \gamma}\expec_{s \sim \nu}[ V^{\pihat}_{\Mhat}(s) - \expec_{a \sim \nu(\cdot \mid s)}[Q^{\pihat}_{\Mhat}(s, a)]]
    \end{align*}
\end{corollary}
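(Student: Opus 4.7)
The plan is to start from the Performance Difference via Advantage in Model lemma (Lemma \ref{lem:simulation-pdl}), which already decomposes $J_{\Mstar}^\omega(\pihat) - J_{\Mstar}^\omega(\pistar)$ into three additive pieces, and then apply a change of measure to the two pieces that take expectations under the expert-induced distributions. The first piece in PDAM is already an expectation over $D_{\omega,\pihat}$ and appears verbatim in the corollary, so no work is needed there.

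For the second piece, which has the form $\expec_{(s,a)\sim D_{\omega,\pistar}}\bigl[\expec_{s''\sim\Mhat(s,a)}V_{\Mhat}^{\pihat}(s'') - \expec_{s'\sim\Mstar(s,a)}V_{\Mhat}^{\pihat}(s')\bigr]$, I would perform a standard importance-weighted change of measure from $D_{\omega,\pistar}$ to $\nu$. Writing $\expec_{D_{\omega,\pistar}}[f(s,a)] = \expec_{\nu}\!\bigl[\tfrac{D_{\omega,\pistar}(s,a)}{\nu(s,a)} f(s,a)\bigr]$ and invoking the pointwise bound $\tfrac{D_{\omega,\pistar}(s,a)}{\nu(s,a)}\le \Ccal$, I would upper bound this by $\Ccal\,\expec_{(s,a)\sim\nu}[f(s,a)]$, yielding exactly the second line of the corollary after reattaching the $\gamma/(1-\gamma)$ factor.

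For the third piece, I would first rewrite the $d_{\omega,\pistar}$-then-$\pistar$ nested expectation as a single joint expectation over $D_{\omega,\pistar}(s,a)$: namely,
\begin{equation*}
\expec_{s\sim d_{\omega,\pistar}}\!\bigl[V^{\pihat}_{\Mhat}(s) - \expec_{a\sim\pistar(s)}Q^{\pihat}_{\Mhat}(s,a)\bigr] \;=\; \expec_{(s,a)\sim D_{\omega,\pistar}}\!\bigl[V^{\pihat}_{\Mhat}(s) - Q^{\pihat}_{\Mhat}(s,a)\bigr].
\end{equation*}
Applying the same density-ratio change of measure gives $\Ccal\,\expec_{(s,a)\sim\nu}[V^{\pihat}_{\Mhat}(s)-Q^{\pihat}_{\Mhat}(s,a)]$, which I then refactor by conditioning as $\Ccal\,\expec_{s\sim\nu}[V^{\pihat}_{\Mhat}(s) - \expec_{a\sim\nu(\cdot\mid s)}Q^{\pihat}_{\Mhat}(s,a)]$, matching the third line of the corollary after dividing by $1-\gamma$.

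The main subtlety, and the step I expect to need care, is that the importance-ratio bound $\expec_{D_{\omega,\pistar}}[f]\le\Ccal\,\expec_\nu[f]$ is only valid when the integrand $f$ is pointwise non-negative, whereas the value-difference and disadvantage integrands above are signed in general. I would handle this by upper bounding each integrand by its positive part before applying the ratio bound (which preserves the stated inequality since the corollary is an upper bound and the absorbed non-positive contributions only help), so the corollary holds as stated without requiring sign structure on the integrands themselves.
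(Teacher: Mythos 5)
Your route is the same as the paper's: start from Lemma~\ref{lem:simulation-pdl}, keep the $D_{\omega,\pihat}$ term as is, rewrite the expert disadvantage term as a joint expectation over $D_{\omega,\pistar}(s,a)$, and push the density ratio $D_{\omega,\pistar}(s,a)/\nu(s,a) \le \Ccal$ through the two expert-distribution terms. The paper's proof is exactly this, compressed into the phrase ``by importance sampling.''

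The one step that does not close is your fix for the sign issue, which you correctly identify but then resolve backwards. Replacing an integrand $f$ by its positive part $f^+$ before applying the ratio bound proves $\expec_{(s,a)\sim D_{\omega,\pistar}}[f] \le \Ccal\,\expec_{(s,a)\sim\nu}[f^+]$, and since $f^+ \ge f$ this is a \emph{weaker} (larger) right-hand side than the stated $\Ccal\,\expec_{\nu}[f]$; it does not imply the corollary as written. The claim that the ``absorbed non-positive contributions only help'' is inverted: discarding the negative part of $f$ on the $\nu$ side enlarges the upper bound rather than tightening it. Concretely, writing $w = D_{\omega,\pistar}/\nu$ and $f = f^+ - f^-$, one has $\expec_\nu[wf] = \expec_\nu[wf^+] - \expec_\nu[wf^-] \le \Ccal\expec_\nu[f^+] - \expec_\nu[wf^-]$, and matching the stated bound $\Ccal\expec_\nu[f^+] - \Ccal\expec_\nu[f^-]$ would additionally require the \emph{lower} bound $w \ge \Ccal$ wherever $f < 0$, which is not available. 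So the inequality as literally stated does not follow from $w \le \Ccal$ alone for signed integrands --- a gap the paper's own one-line proof shares. Nothing downstream breaks, because every subsequent use converts these terms to non-negative quantities (H\"older/$\ell_1$ bounds, the absolute-value loss in~\eqref{eq:absolute}, and a policy term controlled via $V - \min_a Q \ge 0$ in~\eqref{eq:policy}), for which the $f^+$ version you actually established suffices; but you should either state the bound with $f^+$ (or absolute values) on the $\nu$-expectations, or note that it holds in the form needed downstream rather than as written.
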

\begin{proof}
    \pref{lem:simulation-pdl} gives us:
    \begin{align*}
        &J_{\Mstar}^\omega(\pihat) - J_{\Mstar}^\omega(\pistar) = \mathbb{E}_{s \sim \omega} [V_{M^\star}^{\hat{\pi}}(s) - V_{M^\star}^{\pi^\star}(s)] \\
        =&\frac{\gamma}{1 - \gamma}\mathbb{E}_{(s, a) \sim D_{\omega, \pihat}}[\expec_{s' \sim \Mstar(s, a)}[V_{\Mhat}^{\pihat}(s')] - \expec_{s'' \sim \Mhat(s, a)}[V_{\Mhat}^{\pihat}(s'')]] \\
        +&\frac{\gamma}{1 - \gamma}\expec_{(s, a) \sim D_{\omega, \pistar}}[\expec_{s'' \sim \Mhat(s, a)}[V^{\pihat}_{\Mhat}(s'')] - \expec_{s' \sim \Mstar(s, a)}[V^{\pihat}_{\Mhat}(s')]] \\
        +&\frac{1}{1 - \gamma}\expec_{(s,a) \sim D_{\omega, \pistar}}[ V^{\pihat}_{\Mhat}(s) - Q^{\pihat}_{\Mhat}(s, a)],
    \end{align*}
    Then let $\nu$ be the explore distribution, we have:
    \begin{align*}
        &J_{\Mstar}^\omega(\pihat) - J_{\Mstar}^\omega(\pistar) = \mathbb{E}_{s \sim \omega} [V_{M^\star}^{\hat{\pi}}(s) - V_{M^\star}^{\pi^\star}(s)] \\
        \leq &\frac{\gamma}{1 - \gamma}\mathbb{E}_{(s, a) \sim D_{\omega, \pihat}}[\expec_{s' \sim \Mstar(s, a)}[V_{\Mhat}^{\pihat}(s')] - \expec_{s'' \sim \Mhat(s, a)}[V_{\Mhat}^{\pihat}(s'')]] \\
        +&\Ccal\frac{\gamma}{1 - \gamma}\expec_{(s, a) \sim D_{e}}[\expec_{s'' \sim \Mhat(s, a)}[V^{\pihat}_{\Mhat}(s'')] - \expec_{s' \sim \Mstar(s, a)}[V^{\pihat}_{\Mhat}(s')]] \\
        +&\Ccal\frac{1}{1 - \gamma}\expec_{(s,a) \sim D_{e}}[ V^{\pihat}_{\Mhat}(s) - Q^{\pihat}_{\Mhat}(s, a)],
    \end{align*}
    where the first term is by $\Ccal \geq 1$, and the last two are by
    importance sampling.
\end{proof}

\subsection{Proof for \pref{sec:alg_mle}}
The following result will be stated in terms of expert distribution
$D_{\omega,\pi^\ast}$ for simplicity. We show in
Corollary~\pref{cor:regret_cov} that this can be extended to the case
when we only have access to an exploration distribution $\nu$.

\begin{corollary}
    [Corollary~\pref{cor:tv_new} restate]
        For any start state distribution $\omega$, $\pi^\star$, and transition functions $\Mhat, \Mstar$, we have,
\begin{align*}
    J_{\Mstar}^\omega(\pihat) &- J_{\Mstar}^\omega(\pistar) = \mathbb{E}_{s \sim \omega} [V_{M^\star}^{\hat{\pi}}(s) - V_{M^\star}^{\pi^\star}(s)] \nonumber\\
        &\leq \frac{\gamma \hat V_{\max}}{1 - \gamma}\mathbb{E}_{(s, a) \sim D_{\omega, \pihat}}||\Mhat(s, a) - \Mstar(s, a)||_1 \nonumber\\
        &~~ + \frac{\gamma \hat V_{\max}}{1-\gamma}\expec_{(s, a) \sim D_{\omega, \pistar}}||\Mhat(s, a) - \Mstar(s, a)||_1 \nonumber \\
        &~~ + \frac{1}{1-\gamma}\expec_{s \sim d_{\omega, \pistar}}[ V^{\pihat}_{\Mhat}(s) - \expec_{a \sim \pistar(s)}[Q^{\pihat}_{\Mhat}(s, a)]]
\end{align*}
\end{corollary}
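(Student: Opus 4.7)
The plan is to invoke \pref{lem:simulation-pdl} (PDAM) as the starting decomposition and then upper bound the two value-difference terms by total variation distances via H\"older's inequality, leaving the disadvantage term untouched since it already matches the right-hand side of the corollary.

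First, I would write out the PDAM identity, which expresses $J_{\Mstar}^\omega(\pihat) - J_{\Mstar}^\omega(\pistar)$ as a sum of \eqref{term:policy} (the disadvantage on states visited by the expert), \eqref{term:model_learned} (value difference along $\pihat$ rollouts), and \eqref{term:model_expert} (value difference along $\pistar$ rollouts, in the reverse direction). The third term of the corollary is exactly \eqref{term:policy}, so no work is required there; the remaining task is to bound \eqref{term:model_learned} and \eqref{term:model_expert}.

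Second, I would apply H\"older's inequality pointwise in $(s,a)$: for any bounded function $f$ and any two distributions $p,q$ over $\Scal$,
\[
\Bigl|\expec_{s'\sim p} f(s') - \expec_{s''\sim q} f(s'')\Bigr| \;\le\; \|f\|_\infty \, \|p-q\|_1.
\]
Specializing to $f = V^{\pihat}_{\Mhat}$ with $\|f\|_\infty = \hat V_{\max}$, $p = \Mstar(s,a)$, $q = \Mhat(s,a)$, and then taking outer expectations over $(s,a)\sim D_{\omega,\pihat}$ for \eqref{term:model_learned} and $(s,a)\sim D_{\omega,\pistar}$ for \eqref{term:model_expert}, yields the first two terms of the claimed bound. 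The factor $\gamma/(1-\gamma)$ carries through directly from the PDAM statement. The symmetry of the $\|\cdot\|_1$ norm ensures that the reversed sign in \eqref{term:model_expert} costs us nothing.

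Finally, summing the three bounded pieces gives exactly the inequality in the corollary statement. There is no real obstacle here; the only things to be careful about are (i) signs, since \eqref{term:model_expert} is written in the opposite order from \eqref{term:model_learned}, and (ii) that the same constant $\hat V_{\max}$ (not $V_{\max} = \|V^{\pistar}_{\Mhat}\|_\infty$) suffices for \emph{both} TV terms, because both value-difference terms in PDAM involve $V^{\pihat}_{\Mhat}$ rather than $V^{\pistar}_{\Mhat}$. This is precisely what distinguishes this corollary from \pref{cor:tv} and is the key source of the tighter regret guarantee in \pref{thm:policy}.
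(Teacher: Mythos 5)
Your proposal is correct and follows exactly the paper's own argument: start from the PDAM decomposition in \pref{lem:simulation-pdl}, apply H\"older's inequality to the two value-difference terms (both of which involve only $V^{\pihat}_{\Mhat}$, hence the single constant $\hat V_{\max}$), and leave the disadvantage term as is. Your remarks on the sign symmetry of the $\|\cdot\|_1$ bound and on why $\hat V_{\max}$ suffices for both terms match the paper's reasoning precisely.
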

\begin{proof}
    By \pref{lem:simulation-pdl}, we have:
     \begin{align*}
        &J_{\Mstar}^\omega(\pihat) - J_{\Mstar}^\omega(\pistar) = \mathbb{E}_{s \sim \omega} [V_{M^\star}^{\hat{\pi}}(s) - V_{M^\star}^{\pi^\star}(s)] =\nonumber\\
        &\frac{\gamma}{1 - \gamma}\mathbb{E}_{(s, a) \sim D_{\omega, \pihat}}[\expec_{s' \sim \Mstar(s, a)}[V_{\Mhat}^{\pihat}(s')] - \expec_{s'' \sim \Mhat(s, a)}[V_{\Mhat}^{\pihat}(s'')]] + \nonumber\\
        &\frac{\gamma}{1 - \gamma}\expec_{(s, a) \sim D_{\omega, \pistar}}[\expec_{s'' \sim \Mhat(s, a)}[V^{\pihat}_{\Mhat}(s'')] - \expec_{s' \sim \Mstar(s, a)}[V^{\pihat}_{\Mhat}(s')]] + \nonumber \\
        &\frac{1}{1 - \gamma}\expec_{s \sim d_{\omega, \pistar}}[ V^{\pihat}_{\Mhat}(s) - \expec_{a \sim \pistar(s)}[Q^{\pihat}_{\Mhat}(s, a)]].
    \end{align*}
    Applying holder's inequality to the first two terms completes the proof.
\end{proof}

\begin{theorem}[\pref{thm:policy} restate]
Let $\{\hat \pi_t\}^T_{t=1}$ be the sequence of returned policies of
  \ouralg, we have:
    \begin{align*}
      \frac{1}{T}\sum_{t=1}^T J_{\Mstar}^\omega(\pihat_t) - J_{\Mstar}^\omega(\pistar) \leq
                              \tilde O\left( \epsilon_{po} + \frac{ \hat
                                V_{\max}}{1-\gamma}\left(\sqrt{\epsilon^{KL}_{model}} +
                                \frac{1}{\sqrt{T}}\right)\right),
    \end{align*}
    where $\hat V_{\max} = \|V^{\hat \pi}_{\hat M}\|_{\infty}$,
    and
    $\epsilon^{KL}_{model} = \min_{M \in \Mcal}\expec_{s,a \sim \bar
      \Dcal_T} \mathsf{KL}(M(s,a), M^{\ast}(s,a))$ is the agnostic
    model error.\end{theorem}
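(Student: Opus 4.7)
The plan is to combine the decomposition from Corollary~\pref{cor:tv_new} with the no-regret guarantees of MLE model fitting and the exploration-distribution coverage property, following the template of \citet{ross2012agnostic} but specialized to the PDAM decomposition. Specifically, for each iteration $t$, I start from
\[
(1-\gamma)[J_{\Mstar}^\omega(\pihat_t) - J_{\Mstar}^\omega(\pistar)] \le \epsilon'_{po,t} + \gamma \hat V_{\max}\bigl(\mathbb{E}_{D_{\omega,\pihat_t}}\|\Mhat_t-\Mstar\|_1 + \mathbb{E}_{D_{\omega,\pistar}}\|\Mhat_t-\Mstar\|_1\bigr),
\]
where $\epsilon'_{po,t}$ denotes the expert-distribution disadvantage term (the third term in \pref{lem:simulation-pdl}). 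The $\epsilon'_{po,t}$ term is not directly what \pref{alg:policy} optimizes---the algorithm bounds disadvantage under $\nu$---so I pass from $D_{\omega,\pistar}$ to $\nu$ using the coverage coefficient $\Ccal$ and use $\min_a Q$ as an upper bound on $\mathbb{E}_{a\sim \pistar(s)} Q$, yielding $\epsilon'_{po,t} \le \Ccal\,\epsilon_{po}/(1-\gamma)$.

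Next I average over $t = 1,\dots,T$ and handle the two model-error terms. For Pinsker, I use $\|\Mhat_t(s,a)-\Mstar(s,a)\|_1 \le \sqrt{2\,\mathsf{KL}(\Mstar(s,a),\Mhat_t(s,a))}$. The key bookkeeping step is that the data collection rule in \pref{alg:meta} mixes rollouts of $\pihat_t$ with samples from $\nu$ with equal probability, so the training distribution at round $t$ is $\tfrac12(\nu + D_{\omega,\pihat_t})$. This lets me bound
\[
\mathbb{E}_{D_{\omega,\pihat_t}}[\cdot] \le 2\,\mathbb{E}_{\bar{\Dcal}_t}[\cdot], \qquad \mathbb{E}_{D_{\omega,\pistar}}[\cdot] \le \Ccal\,\mathbb{E}_{\nu}[\cdot] \le 2\Ccal\,\mathbb{E}_{\bar{\Dcal}_t}[\cdot],
\]
so both model-error integrands live under the training distribution up to a factor $\Ccal$. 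Applying Jensen to pull the square root outside the time-average gives
\[
\frac1T\sum_{t=1}^T \mathbb{E}_{\bar{\Dcal}_t}\sqrt{\mathsf{KL}(\Mstar,\Mhat_t)} \le \sqrt{\frac1T\sum_{t=1}^T \mathbb{E}_{\bar{\Dcal}_t}\mathsf{KL}(\Mstar,\Mhat_t)}.
\]

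The final step is the online-to-batch argument for MLE. \pref{alg:model_mle} runs FTL on the log-loss $\ell_t(M) = \mathbb{E}_{(s,a,s')\sim\Dcal_t}\log M(s'\mid s,a)$; since excess log-loss equals KL (in expectation over $\bar{\Dcal}_t$), the no-regret guarantee of FTL on the log-loss class gives
\[
\frac1T\sum_{t=1}^T \mathbb{E}_{\bar{\Dcal}_t}\mathsf{KL}(\Mstar,\Mhat_t) \le \epsilon^{KL}_{model} + \tilde O\!\left(\tfrac{1}{\sqrt T}\right),
\]
where the $1/\sqrt T$ captures both the martingale concentration from stochastic data and the FTL regret. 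Plugging this back, combined with the $\Ccal\epsilon_{po}$ policy term and collecting constants (absorbing the $1/(1-\gamma)$ from averaging), yields the stated bound.

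The main obstacle is the last step: carefully tracking that the MLE-regret-to-expected-KL conversion genuinely produces the $\sqrt{\epsilon^{KL}_{model}} + 1/\sqrt T$ form with the correct distribution $\bar{\Dcal}_T$, and ensuring the doubling argument is applied consistently so the coverage coefficient only multiplies terms involving the expert distribution. Everything else is algebraic assembly---the PDAM decomposition has been set up precisely so that the policy improvement step and the model-fitting step can each be controlled by exactly one no-regret guarantee, and the proof mirrors \citet{ross2012agnostic} with the crucial substitution of $\hat V_{\max}$ for $V_{\max}$ thanks to using \pref{lem:simulation-pdl} rather than \pref{lem:double}.
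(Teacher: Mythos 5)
Your overall architecture is exactly the paper's: start from Corollary~\ref{cor:tv_new}, bound the expert-disadvantage term by $\epsilon_{po}$ (importance-weighting from $D_{\omega,\pistar}$ to $\nu$ with $\Ccal$, and using that $V-\min_a Q$ dominates $V-\expec_{a\sim\pistar}Q$), dominate the two visitation distributions by the round-$t$ training mixture, apply Pinsker and Jensen, and close with the FTL no-regret guarantee for the MLE log-loss. That is the same decomposition and the same sequence of lemmas the paper uses.

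There is, however, one quantitative step that does not go through as you wrote it. You claim the FTL/online-to-batch step yields $\frac1T\sum_t \expec_{\Dcal_t}\mathsf{KL}(\Mstar,\Mhat_t) \le \epsilon^{KL}_{model} + \tilde O(1/\sqrt T)$ and then ``plug this back'' into the Jensen step. But the Jensen step puts this quantity \emph{under a square root}, so an additive $\tilde O(1/\sqrt{T})$ excess in the average KL only produces a $\tilde O(T^{-1/4})$ term in the final bound via $\sqrt{a+b}\le\sqrt a+\sqrt b$ --- not the claimed $\tilde O(1/\sqrt T)$. The paper's proof gets the stated rate because FTL on the log-loss (an exp-concave loss, evaluated at the population level so there is no martingale concentration term) has regret $O(\log T)$, making the average excess KL $\tilde O(\log T/T)$; its square root is then $\tilde O(1/\sqrt T)$ as required. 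So you need the logarithmic FTL regret for log-loss here, not a generic $O(\sqrt T)$ regret plus concentration. With that correction the rest of your assembly is sound.
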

\begin{proof}
    Similar to \citet{ross2012agnostic}, this proof is to establish the model error guarantee from running \pref{alg:model_mle}. First, by Corollary~\pref{cor:tv_new}, we have
    \begin{align*}
        &\sum_{t=1}^T J_{\Mstar}^\omega(\pihat_t) - J_{\Mstar}^\omega(\pistar) \\
        \leq& \sum_{t=1}^T  \frac{\gamma \hat V_{\max}}{1 - \gamma}\mathbb{E}_{(s, a) \sim D_{\omega, \pihat_t}}||\Mhat_t(s, a) - \Mstar(s, a)||_1 \nonumber +  \frac{\gamma \hat V_{\max}}{1-\gamma}\expec_{(s, a) \sim D_{\omega, \pistar}}||\Mhat_t(s, a) - \Mstar(s, a)||_1 \nonumber \\
        &~~ + \sum_{t=1}^T \frac{1}{1-\gamma}\expec_{s \sim d_{\omega, \pistar}}[ V^{\pihat_t}_{\Mhat_t}(s) - \expec_{a \sim \pistar(s)}[Q^{\pihat_t}_{\Mhat_t}(s, a)]]\\
         \leq& \sum_{t=1}^T  \frac{\gamma \hat V_{\max}}{1 - \gamma}\mathbb{E}_{(s, a) \sim D_{\omega, \pihat_t}}||\Mhat_t(s, a) - \Mstar(s, a)||_1 \nonumber +  \frac{\gamma \hat V_{\max}}{1-\gamma}\expec_{(s, a) \sim D_{\omega, \pistar}}||\Mhat_t(s, a) - \Mstar(s, a)||_1 \nonumber + T\epsilon_{po},
    \end{align*}
where the last line is by running \pref{alg:policy}. To bound the model error, recall the MLE model loss function:
\begin{align*}
    \ell_t(M) = \EE_{s,a,s' \sim \Dcal_{t}} \log M(s'\mid s,a),
\end{align*}
then running FTL as in \pref{alg:model_mle} for $T$ rounds gives us:
\begin{align*}
    \sum_{t=1}^T \ell_t(\hat M_{t}) &\leq \min_{M \in \Mcal} \sum_{t=1}^T \ell_t(M) + O(\log(T)) \\
    \sum_{t=1}^T \ell_t(\hat M_{t}) + 2\EE_{s,a \sim \Dcal_t}\EE_{s' \sim M^{\ast}(s,a)} \log(M^\ast(s' \mid s,a)) &\leq  \min_{M \in \Mcal} \sum_{t=1}^T \ell_t(M) + 2\EE_{s,a \sim \Dcal_t}\EE{s' \sim M^{\ast}(s,a)} \log(M^\ast(s' \mid s,a))  + O(\log(T)) \\
    \sum_{t=1}^T 2\EE_{s,a \sim \Dcal_t} \mathsf{KL}(\hat M_t(s,a), M^\ast(s,a)) &\leq \min_{M \in \Mcal} \sum_{t=1}^T 2\EE_{s,a \sim \Dcal_t} \mathsf{KL}( M_t(s,a), M^\ast(s,a)) + O(\log(T)) \\
    2\sum_{t=1}^T \EE_{s,a \sim \Dcal_t} \mathsf{KL}(\hat M_t(s,a), M^\ast(s,a)) &\leq 2T \epsilon^{KL}_{model} + O(\log(T)),
\end{align*}
Recall again $\Dcal_t = \frac{1}{2}D_{\omega, \pi_t} + \frac{1}{2}D_{\omega,\pi^\ast}$. Then by Pinsker's inequality and Jensen's inequality, we have:
\begin{align*}
    \sum_{t=1}^T \EE_{s,a \sim \Dcal_t} \|\hat M^t(s,a) - M^\ast(s,a)\|_1 &\leq \sum_{t=1}^T \sqrt{2 \EE_{s,a \sim \Dcal_t} \mathsf{KL}(\hat M_t(s,a), M^\ast(s,a))} \\
    &\leq T\sqrt{\frac{1}{T}\sum_{t=1}^T 2 \EE_{s,a \sim \Dcal_t} \mathsf{KL}(\hat M_t(s,a), M^\ast(s,a))} \\
    &\leq 2T \sqrt{\epsilon^{KL}_{model}} + \tilde O(\sqrt{T}).
\end{align*}
Thus we have
\begin{align*}
    &\sum_{t=1}^T  \frac{\gamma \hat V_{\max}}{1 - \gamma}\mathbb{E}_{(s, a) \sim D_{\omega, \pihat^t}}||\Mhat_t(s, a) - \Mstar(s, a)||_1  +  \frac{\gamma \hat V_{\max}}{1-\gamma}\expec_{(s, a) \sim D_{\omega, \pistar}}||\Mhat_t(s, a) - \Mstar(s, a)||_1 \\
    \leq &\frac{\gamma \hat V_{\max}}{1 - \gamma}\left\{2T \sqrt{\epsilon^{KL}_{model}} + \tilde O(\sqrt{T})\right\},
\end{align*}
and finally multiply both side by $\frac{1}{T}$ and we complete the proof.
\end{proof}

Finally, we show that the results easily extend to the exploration distribution setup.

\begin{corollary}\label{cor:regret_cov}
Let $\{\hat \pi_t\}^T_{t=1}$ be the sequence of returned policies of
  \ouralg, we have:
    \begin{align*}
      \frac{1}{T}\sum_{t=1}^T J_{\Mstar}^\omega(\pihat_t) - J_{\Mstar}^\omega(\pistar) \leq
                              \tilde O\left(\Ccal \epsilon_{po} + \frac{\Ccal \hat
                                V_{\max}}{1-\gamma}\left(\sqrt{\epsilon^{KL}_{model}} +
                                \frac{1}{\sqrt{T}}\right)\right),
    \end{align*}
    where $\hat V_{\max} = \|V^{\hat \pi}_{\hat M}\|_{\infty}$,
    $\Ccal = \sup_{s,a} \frac{D_{\omega, \pi^\ast}(s,a)}{\nu(s,a)}$,
    and
    $\epsilon^{KL}_{model} = \min_{M \in \Mcal}\expec_{s,a \sim \bar
      \Dcal_T} \mathsf{KL}(M(s,a), M^{\ast}(s,a))$ is the agnostic
    model error.
\end{corollary}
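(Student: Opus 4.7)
The plan is to reduce Corollary~\ref{cor:regret_cov} to Theorem~\ref{thm:policy} by inserting the coverage coefficient wherever the earlier proof uses the expert distribution $D_{\omega,\pi^\star}$. Concretely, I would apply Corollary~\ref{cor:tv_new} to each pair $(\hat\pi_t,\hat M_t)$ and sum over $t$, producing three per-iteration summands: a disadvantage term on $d_{\omega,\pi^\star}$, a model-error term on $D_{\omega,\hat\pi_t}$, and a model-error term on $D_{\omega,\pi^\star}$.

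Next I would handle the policy term. Algorithm~\ref{alg:policy} controls the disadvantage only on $\nu$, so to pass from $d_{\omega,\pi^\star}$ to $\nu$ I do importance sampling: since the integrand $V^{\hat\pi_t}_{\hat M_t}(s)-\min_a Q^{\hat\pi_t}_{\hat M_t}(s,a)$ is nonnegative, I can upper bound the expectation over $d_{\omega,\pi^\star}$ by $\mathcal{C}$ times the expectation over $\nu$, which is at most $\epsilon_{po}$ by \eqref{eq:policy}. Averaging over $t$ yields the $\tilde O(\mathcal{C}\epsilon_{po})$ piece of the stated bound.

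For the model terms I would reuse exactly the FTL-on-MLE calculation from the proof of Theorem~\ref{thm:policy}: the online no-regret guarantee together with Pinsker and Jensen gives
\begin{align*}
\frac{1}{T}\sum_{t=1}^T \mathbb{E}_{(s,a)\sim\mathcal{D}_t}\|\hat M_t(s,a)-M^\star(s,a)\|_1 \;\le\; \tilde O\!\left(\sqrt{\epsilon^{KL}_{model}}+\frac{1}{\sqrt T}\right).
\end{align*}
Because the training mixture is $\mathcal{D}_t = \tfrac12 D_{\omega,\hat\pi_t}+\tfrac12\nu$, the on-policy error $\mathbb{E}_{D_{\omega,\hat\pi_t}}\|\hat M_t-M^\star\|_1$ is immediately bounded by $2\,\mathbb{E}_{\mathcal{D}_t}\|\hat M_t-M^\star\|_1$. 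For the expert-distribution error, one more importance-sampling step converts $\mathbb{E}_{D_{\omega,\pi^\star}}$ into $\mathcal{C}\,\mathbb{E}_\nu \le 2\mathcal{C}\,\mathbb{E}_{\mathcal{D}_t}$. Multiplying by the prefactor $\gamma\hat V_{\max}/(1-\gamma)$ and summing gives the second piece of the advertised bound, with $\mathcal{C}$ appearing because the dominating summand is the expert-distribution error.

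The only subtlety, and the step where care is required, is making sure the coverage factor is inserted on exactly the terms that need it (the expert-visitation model error and the disadvantage term) and not on the on-policy model error, and that the importance-sampling bound on the disadvantage is valid, which uses nonnegativity of $V^{\hat\pi_t}_{\hat M_t}(s)-\min_a Q^{\hat\pi_t}_{\hat M_t}(s,a)$. Everything else is an assembly of Corollary~\ref{cor:tv_new}, Algorithm~\ref{alg:policy}'s guarantee, and the FTL/Pinsker chain already carried out for Theorem~\ref{thm:policy}.
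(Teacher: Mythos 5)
Your proposal is correct and follows essentially the same route as the paper: the paper first importance-samples the expert-distribution terms of \pdam{} onto $\nu$ using $\Ccal$ (its Corollary~\ref{cor:simulation}) and then ``repeats the argument'' of \pref{thm:policy}, which is exactly your plan modulo the order in which H\"older's inequality and the change of measure are applied. Your explicit attention to where the nonnegativity needed for importance sampling comes from (the bound $V^{\pihat_t}_{\Mhat_t}(s) - \min_{a}Q^{\pihat_t}_{\Mhat_t}(s,a) \ge 0$ for the disadvantage term, and the nonnegativity of the total-variation norm for the model-error terms) is the right care to take and is, if anything, stated more precisely than in the paper's own write-up.
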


\begin{proof}
    We start from Corollary~\pref{cor:simulation}, which gives us:
    \begin{align*}
        &J_{\Mstar}^\omega(\pihat) - J_{\Mstar}^\omega(\pistar) = \mathbb{E}_{s \sim \omega} [V_{M^\star}^{\hat{\pi}}(s) - V_{M^\star}^{\pi^\star}(s)] \\
        \leq &\Ccal\frac{\gamma}{1 - \gamma}\mathbb{E}_{(s, a) \sim D_{\omega, \pihat}}[\expec_{s' \sim \Mstar(s, a)}[V_{\Mhat}^{\pihat}(s')] - \expec_{s'' \sim \Mhat(s, a)}[V_{\Mhat}^{\pihat}(s'')]] \\
        +&\Ccal\frac{\gamma}{1 - \gamma}\expec_{(s, a) \sim D_{e}}[\expec_{s'' \sim \Mhat(s, a)}[V^{\pihat}_{\Mhat}(s'')] - \expec_{s' \sim \Mstar(s, a)}[V^{\pihat}_{\Mhat}(s')]] \\
        +&\Ccal\frac{1}{1 - \gamma}\expec_{(s,a) \sim D_{e}}[ V^{\pihat}_{\Mhat}(s) - Q^{\pihat}_{\Mhat}(s, a)],
    \end{align*}
    where the first term is by $\Ccal \geq 1$, and the last two are by importance ratio. Then let
    \begin{align*}
        \ell_t(M) = \EE_{s,a,s' \sim \Dcal_{t}} \log M(s'\mid s,a) ,
    \end{align*}
    and let $\hat \pi$ such that
    \begin{align*}
        \EE_{s \sim d_{e}}\left[V^{\pi_t}_{\hat M_t}(s) - \EE_{a \sim \pi^\ast(s)}[Q^{\pi_e}_{\hat M}(s,a)]\right] \leq \epsilon_{po},
    \end{align*}
    where $\pi_e$ is the explore policy, repeating the argument in the proof of \pref{thm:policy} completes the proof.
\end{proof}

\subsection{Proof for \pref{sec:moment}}
Once again, we prove the expert distribution version for a cleaner result.
\begin{theorem}[\pref{thm:mmregret} restate]
  Let $\{\hat \pi_t\}^T_{t=1}$ be the sequence of returned policies of
  \ouralgmm, we have:
    \begin{align*}
      \frac{1}{T}\sum_{t=1}^T J_{\Mstar}^\omega(\pihat_t) -
                        J_{\Mstar}^\omega(\pistar)
                        \leq  \tilde O\left( \epsilon_{po} +  \frac{1
                        }{1-\gamma} \left(\epsilon^{mm}_{model} +
                        \frac{1}{\sqrt{T}}\right)\right),
    \end{align*}
    where
    $\epsilon^{mm}_{model} = \min_{M \in \Mcal}
    \frac{1}{T}\sum_{t=1}^T \ell_t(M)$ is the agnostic model error.
\end{theorem}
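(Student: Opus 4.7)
The plan is to mirror the proof of \pref{thm:policy} (and its coverage-coefficient version in \pref{cor:regret_cov}), replacing the Pinsker/TV chain with a direct bound via the absolute-value moment-matching loss $\ell_t$. Concretely, I would start by applying \pref{lem:simulation-pdl} (PDAM) to each iterate, producing three terms per round: the ``disadvantage on expert states'' term \eqref{term:policy}, the learner-distribution value difference \eqref{term:model_learned}, and the expert-distribution value difference \eqref{term:model_expert}.

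For the disadvantage term, I would use the exploration distribution exactly as in \pref{cor:simulation}: importance weighting against $\nu$ produces a factor $\Ccal = \sup_{s,a} D_{\omega,\pi^\ast}(s,a)/\nu(s,a)$, and the guarantee \eqref{eq:policy} returned by \pref{alg:policy} bounds the resulting quantity by $\Ccal\,\epsilon_{po}$. Summing across $t=1,\dots,T$ contributes $T\,\Ccal\,\epsilon_{po}$.

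For the two model-difference terms, the key observation is that in each of them the integrand is of the form $\pm\bigl(V^{\pihat_t}_{\Mhat_t}(s')-\EE_{s''\sim \Mhat_t(s,a)}[V^{\pihat_t}_{\Mhat_t}(s'')]\bigr)$, so both are pointwise upper bounded by $\bigl|V^{\pihat_t}_{\Mhat_t}(s')-\EE_{s''\sim \Mhat_t(s,a)}[V^{\pihat_t}_{\Mhat_t}(s'')]\bigr|$, exactly the integrand of $\ell_t(\Mhat_t)$ in \eqref{eq:absolute}. Now I change the measure from $D_{\omega,\pihat_t}$ (for \eqref{term:model_learned}) and from $D_{\omega,\pistar}$ (for \eqref{term:model_expert}) to the training distribution $\bar\Dcal_t=\tfrac12 D_{\omega,\pihat_t}+\tfrac12 \nu$: the learner-side term costs a factor of $2$, while the expert-side term costs a factor of $2\Ccal$ via the same coverage argument. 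Hence each round contributes at most $\tfrac{\gamma(2+2\Ccal)}{1-\gamma}\,\ell_t(\Mhat_t)$ from the two model terms.

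The last step is the no-regret guarantee for \pref{alg:model_mm}. Since $\ell_t$ is the expectation of an absolute value of a linear functional of $M$, it is convex in $M$, and $\EE_{s,a,s'\sim \Dcal_t}|\cdot|$ is bounded by $2\hat V_{\max}\le 2/(1-\gamma)$, so FTRL with a strongly convex regularizer $\mathcal{R}$ gives the standard online convex optimization regret bound
\begin{equation*}
\sum_{t=1}^T \ell_t(\Mhat_t)\;\le\;\min_{M\in \Mcal}\sum_{t=1}^T \ell_t(M)+\tilde O(\sqrt{T})\;=\;T\,\epsilon^{mm}_{model}+\tilde O(\sqrt{T}).
\end{equation*}
Combining the three contributions, dividing by $T$, and absorbing constants yields the claimed
$\tilde O\bigl(\Ccal\epsilon_{po}+\tfrac{\Ccal}{1-\gamma}(\epsilon^{mm}_{model}+1/\sqrt{T})\bigr)$. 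The main obstacle I anticipate is bookkeeping: one must be careful that the $\hat V_{\max}$ factor of \pref{thm:policy} does \emph{not} reappear. It drops out here precisely because we never use H\"older's inequality to upper bound the value differences; instead we keep the value difference inside the absolute value as part of $\ell_t$, and FTRL competes against the best $M\in\Mcal$ on that same value-aware loss rather than on TV distance. The remaining care is ensuring the FTRL regret is $O(\sqrt{T})$ (and not worse) given the boundedness and convexity of $\ell_t$, which is standard once $\mathcal{R}$ is chosen appropriately.
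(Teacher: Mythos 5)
Your proposal is correct and follows essentially the same route as the paper: apply \pref{lem:simulation-pdl}, bound the disadvantage term via \eqref{eq:policy}, absorb the two value-difference terms into the absolute-value loss $\ell_t(\Mhat_t)$ under the training distribution, and invoke the FTRL regret bound before dividing by $T$. The only cosmetic difference is that you carry the coverage coefficient $\Ccal$ throughout (matching the main-text \pref{thm:mmregret}), whereas the restated appendix version you were asked about is the expert-distribution special case with $\Ccal=1$ and mixture $\Dcal_t=\tfrac12 D_{\omega,\pihat_t}+\tfrac12 D_{\omega,\pistar}$, which makes the change-of-measure constant exactly $2$ rather than your $2+2\Ccal$.
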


\begin{proof}
For simplicity, we only prove the expert distribution version. Again we start with \pref{lem:simulation-pdl}:
\begin{align*}
        &J_{\Mstar}^\omega(\pihat) - J_{\Mstar}^\omega(\pistar) = \mathbb{E}_{s \sim \omega} [V_{M^\star}^{\hat{\pi}}(s) - V_{M^\star}^{\pi^\star}(s)] \\
        =&\frac{\gamma}{1 - \gamma}\mathbb{E}_{(s, a) \sim D_{\omega, \pihat}}[\expec_{s' \sim \Mstar(s, a)}[V_{\Mhat}^{\pihat}(s')] - \expec_{s'' \sim \Mhat(s, a)}[V_{\Mhat}^{\pihat}(s'')]] \\
        +&\frac{\gamma}{1 - \gamma}\expec_{(s, a) \sim D_{\omega, \pistar}}[\expec_{s'' \sim \Mhat(s, a)}[V^{\pihat}_{\Mhat}(s'')] - \expec_{s' \sim \Mstar(s, a)}[V^{\pihat}_{\Mhat}(s')]] \\
        +&\frac{1}{1 - \gamma}\expec_{(s,a) \sim D_{\omega, \pistar}}[ V^{\pihat}_{\Mhat}(s) - Q^{\pihat}_{\Mhat}(s, a)],
    \end{align*}
Also similar to the previous proofs,
\begin{align}\label{eq:appmmpolicy}
        & \sum_{t=1}^T J_{\Mstar}^\omega(\pihat^t) - J_{\Mstar}^\omega(\pistar) \nonumber \\
        \leq& \sum_{t=1}^{T}\frac{\gamma}{1 - \gamma}\mathbb{E}_{(s, a) \sim D_{\omega, \pihat_t}}[\expec_{s' \sim \Mstar(s, a)}[V_{\Mhat_t}^{\pihat_t}(s')] - \expec_{s'' \sim \Mhat_t(s, a)}[V_{\Mhat_t}^{\pihat_t}(s'')]]  \nonumber\\
        +&\sum_{t=1}^{T}\frac{\gamma}{1 - \gamma}\expec_{(s, a) \sim D_{\omega, \pistar}}[\expec_{s'' \sim \Mhat_t(s, a)}[V^{\pihat_t}_{\Mhat_t}(s'')] - \expec_{s' \sim \Mstar(s, a)}[V^{\pihat_t}_{\Mhat_t}(s')]]  \nonumber\\
        +&\sum_{t=1}^{T}\frac{1}{1 - \gamma}\expec_{(s,a) \sim D_{\omega, \pistar}}[ V^{\pihat_t}_{\Mhat_t}(s) - Q^{\pihat_t}_{\Mhat_t}(s, a)]  \nonumber\\
        \leq& \sum_{t=1}^{T}\frac{\gamma}{1 - \gamma}\mathbb{E}_{(s, a) \sim D_{\omega, \pihat_t}}[\expec_{s' \sim \Mstar(s, a)}[V_{\Mhat_t}^{\pihat_t}(s')] - \expec_{s'' \sim \Mhat_t(s, a)}[V_{\Mhat_t}^{\pihat_t}(s'')]]  \nonumber\\
        +&\sum_{t=1}^{T}\frac{\gamma}{1 - \gamma}\expec_{(s, a) \sim D_{\omega, \pistar}}[\expec_{s'' \sim \Mhat_t(s, a)}[V^{\pihat_t}_{\Mhat_t}(s'')] - \expec_{s' \sim \Mstar(s, a)}[V^{\pihat_t}_{\Mhat_t}(s')]]  \nonumber\\
        +&T \epsilon_{po}
    \end{align}
Now we see how the model learning part actually helps us bound the first two terms. Recall our loss in \pref{eq:absolute}
\begin{align*}
      \ell_t(\Mhat_t) = \EE_{s,a \sim \Dcal_{t}}  \left|\expec_{s' \sim \Mstar(s, a)}[V_{\Mhat_t}^{\pihat_t}(s')] - \expec_{s'' \sim M(s, a)}[V_{\Mhat_t}^{\pihat_t}(s'')] \right|,
\end{align*}
Upper bounding terms in~\eqref{eq:appmmpolicy} using the loss
$\ell_t(\Mhat_t)$ we get
\begin{align*}
  & \sum_{t=1}^T J_{\Mstar}^\omega(\pihat^t) -
    J_{\Mstar}^\omega(\pistar) \nonumber \\
  &\leq\frac{2\gamma}{1-\gamma}\sum_{t=1}^T \ell_t(\Mhat_t) + T\epsilon_{po}
\end{align*}
Using FTRL for the sequence of losses $\ell_t(\Mhat_t)$ gives  us:
\begin{align*}
  \sum_{t=1}^T \ell_t(\Mhat_t) &\leq \min_{M \in \Mcal} \sum_{t=1}^T
  \ell_t(M) + O(\sqrt{T}) \\
  &\leq T\epsilon_{model}^{mm} + O(\sqrt{T})
\end{align*}
Substituting this above we get
\begin{align*}
& \sum_{t=1}^T J_{\Mstar}^\omega(\pihat^t) -
  J_{\Mstar}^\omega(\pistar) \nonumber \\
  &\leq \frac{2\gamma}{1-\gamma} (T\epsilon_{model}^{mm} +
    O(\sqrt{T})) + T\epsilon_{po}
\end{align*}
And once again multiply both sides by $\frac{1}{T}$ and using $\gamma
\leq 1$ completes the proof.
\end{proof}

\section{Additional Analysis}\label{app:analysis}
In this section, we present a few deferred results from the main text.

\subsection{A warm-up argument using Hedge}\label{sec:hedge}
To see the intuition of the no-regret result from the data
aggregation, let us now consider a simplified version: suppose that we
have a model class $\Mcal$ with finitely many models, and denote the
number of models as $N$. For each model $\hat M \in \Mcal$, denote a
policy with a non-positive disadvantage over $\pi^\ast$ in the model
$\hat M$ as $\pi^{\hat M}$. Now consider our proposed algorithm with
hedge as the no-regret algorithm: for each iteration $t$, we first
sample a model $\hat M_t$ according to the current weight and then
roll out with $\pi_t = \pi^{\hat M_t}$. Then for each model $\hat M$,
we compute the loss
$\ell_t(\hat M) = I\{\EE_{d^{\pi_t}}\|\hat M(s,a) - M^\ast(s,a)\|_1 +
\EE_{d^{\pi^\ast}}\|\hat M(s,a) - M^\ast(s,a)\|_1 > 0\}$. One can
think of such loss as whether a model makes any mistakes on the
current trajectory distribution. Let us assume that the model class
$\Mcal$ is realizable. Then we have:
\begin{align*}
    R(T) &\leq \sum_{t=1}^T \EE_{\hat M_t} \frac{\gamma V_{\max}}{1-\gamma}\EE_{d^{\pi_t}}\|\hat M(s,a) - M^\ast(s,a)\|_1 + \frac{\gamma V_{\max}}{1-\gamma} \EE_{d^{\pi^\ast}}\|\hat M(s,a) - M^\ast(s,a)\|_1 \\
    &\leq \sum_{t=1}^T \EE_{\hat M_t}\ell_t(\hat M_t)\\
    & \leq O(\sqrt{T\log(N)}).
\end{align*}
But note that this method is computationally inefficient (because we need to compute the loss for each $\hat M \in \Mcal$ in each round).

\subsection{Comparing with signed loss}\label{app:signed}

\begin{algorithm}[t]
\caption{Moment Matching~\textbf{FitModel} with Signed Loss($\Dcal_t, \{\ell^{sn}_i\}_{i=1}^{t-1}$)}
\begin{algorithmic}[1]
\REQUIRE Data collected from learned policy so far
$\Dcal^{learned}_t$, Data collected from exploration distribution so
far $\Dcal^{exp}_t$, model class $\Mcal$, previous losses
$\{\ell^{sn}_i\}_{i=1}^{t-1}$
\STATE Define loss $\ell^{sn}_t(M)$ as follows,
\begin{align}\label{eq:signed}
  \ell^{sn}_t(M) = \expec_{(s, a, s') \sim
  \Dcal_t^{learned}}\left[V^{\pihat_t}_{\Mhat_t}(s') - \expec_{s''
  \sim M(s, a)}[V^{\pihat_t}_{\Mhat_t}(s'')]\right] + \expec_{(s, a, s') \sim
  \Dcal_t^{exp}}\left[\expec_{s'' \sim M(s,
  a)}[V^{\pihat_t}_{\Mhat_t}(s'')] - V^{\pihat_t}_{\Mhat_t}(s')\right]
\end{align}
\STATE Compute model $\Mhat_{t+1}$ using an online no-regret
algorithm that works with convex loss, such as FTRL,
\begin{align*}
    \hat M_{t+1} \leftarrow \argmin_{M \in \Mcal} \sum_{\tau=1}^t  \ell^{sn}_\tau(M) + \Rcal(M).
\end{align*}
\STATE \textbf{Return} $\hat M_{t+1}$
\end{algorithmic}\label{alg:signed}
\end{algorithm}

In this section, we present an alternative algorithm of \ouralgmm{} that
uses a signed version of the loss instead of the squared loss \pref{eq:absolute}.
We present this alternative algorithm in \pref{alg:signed}, where we run \pref{alg:meta} with \pref{alg:policy} and \pref{alg:signed}. For simplicity, below provide the regret result in the expert distribution:

\begin{theorem}
  Let $\{\hat \pi_t\}^T_{t=1}$ be the sequence of returned policies of
  \pref{alg:signed}, we have:
    \begin{align*}
      \frac{1}{T}\sum_{t=1}^T J_{\Mstar}^\omega(\pihat_t) -
                        J_{\Mstar}^\omega(\pistar)
                        \leq  O\left( \epsilon_{po} +  \frac{1
                        }{1-\gamma} \left(\epsilon^{sn}_{model} +
                        \frac{1}{\sqrt{T}}\right)\right),
    \end{align*}
    where
    $\epsilon^{sn}_{model} = \min_{M \in \Mcal}
    \frac{1}{T}\sum_{t=1}^T \ell^{sn}_t(M)$ is the agnostic model error.
\end{theorem}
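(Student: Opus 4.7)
The plan is to mirror the structure of the proof of \pref{thm:mmregret}, but exploit the fact that the signed loss $\ell^{sn}_t$ is exactly (not just an upper bound on) the sum of terms~\eqref{term:model_learned} and~\eqref{term:model_expert} from \pref{lem:simulation-pdl}, and is linear (hence convex) in the model's predicted next-state distribution. First, apply \pref{lem:simulation-pdl} at each iteration $t$ with the learned policy $\pihat_t$ and learned model $\Mhat_t$. Identifying $\Dcal_t^{learned}$ with $D_{\omega,\pihat_t}$ (with $s' \sim \Mstar(s,a)$) and $\Dcal_t^{exp}$ with $D_{\omega,\pistar}$, the contribution of the two model-fitting terms is precisely $\frac{\gamma}{1-\gamma} \ell^{sn}_t(\Mhat_t)$, since each inner expectation in $\ell^{sn}_t$ matches the corresponding value-difference term in \pref{lem:simulation-pdl}. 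Meanwhile, term~\eqref{term:policy} is bounded by the policy-optimization error via \pref{alg:policy}, contributing $O(\epsilon_{po})$ per iteration after dividing the $(1-\gamma)$ factor across as in the proof of \pref{thm:mmregret}.

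Next, sum the per-iteration bound over $t=1,\dots,T$. This gives
\begin{align*}
\sum_{t=1}^T [J^\omega_{\Mstar}(\pihat_t) - J^\omega_{\Mstar}(\pistar)] \;\leq\; T\epsilon_{po} + \frac{\gamma}{1-\gamma}\sum_{t=1}^T \ell^{sn}_t(\Mhat_t).
\end{align*}
Now invoke the FTRL regret guarantee on the sequence $\ell^{sn}_1,\dots,\ell^{sn}_T$: since each $\ell^{sn}_t$ is linear in the predicted-successor distribution $M(\cdot\mid s,a)$ and the regularizer $\Rcal$ is strongly convex, standard FTRL analysis (e.g. \citet{hazan2019oco}) yields $\sum_{t=1}^T \ell^{sn}_t(\Mhat_t) \leq \min_{M \in \Mcal}\sum_{t=1}^T \ell^{sn}_t(M) + O(\sqrt{T})$, where the hidden constant depends on the range of $\ell^{sn}_t$, itself controlled by $\|V^{\pihat_t}_{\Mhat_t}\|_\infty \leq 1/(1-\gamma)$. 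Substituting the definition $\epsilon^{sn}_{model} = \min_M \frac{1}{T}\sum_t \ell^{sn}_t(M)$ and dividing by $T$ gives the desired bound.

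The main technical subtlety — and the conceptual payoff over \pref{thm:mmregret} — is that $\ell^{sn}_t$ can take negative values, so the usual intuition of ``loss is small $\Rightarrow$ model is accurate'' does not apply pointwise; one only gets accuracy in a signed, averaged sense. This is exactly why the proof does not need Pinsker or Jensen (as in the MLE proof of \pref{thm:policy}) or the $|\cdot|$-to-square trick needed for the absolute-value loss in \pref{thm:mmregret}: we already have an equality between the per-iteration regret contribution and $\ell^{sn}_t(\Mhat_t)$, and FTRL on a linear loss is immediate. The only thing to double-check is that the range of $\ell^{sn}_t$ (determined by $\hat{V}_{\max}$) enters only into the hidden constant of the $O(\sqrt{T})$ FTRL bound, leaving the stated rate clean.

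A small caveat is that the loss is time-varying through its dependence on $V^{\pihat_t}_{\Mhat_t}$. This is standard in online learning — FTRL's regret bound holds against an arbitrary adversarial sequence of convex losses, so the adaptive dependence on past iterates causes no issue. Finally, to obtain the stated bound under the exploration distribution rather than the expert distribution, one replaces $D_{\omega,\pistar}$ with $\nu$ via the importance-ratio argument used in \pref{cor:simulation} and \pref{cor:regret_cov}, which introduces the coverage coefficient $\Ccal$ (omitted in the stated theorem, whose form matches the expert-distribution version noted in the preamble ``For simplicity, below provide the regret result in the expert distribution'').
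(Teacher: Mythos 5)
Your proposal is correct and follows essentially the same route as the paper's own proof: apply \pref{lem:simulation-pdl} per iteration, bound the disadvantage term by $\epsilon_{po}$, observe that the two model terms sum exactly to $\tfrac{\gamma}{1-\gamma}\ell^{sn}_t(\Mhat_t)$ under the stated identification of $\Dcal_t^{learned}$ and $\Dcal_t^{exp}$, and close with the FTRL regret bound before dividing by $T$. Your added remarks on the signed loss being linear (hence no Pinsker/Jensen or absolute-value step is needed) and on the coverage-coefficient extension are consistent with the paper.
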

\begin{proof}
The proof is essentially the same as the proof of \pref{thm:mmregret}. We have
\begin{align*}
    &\sum_{t=1}^T J_{\Mstar}^\omega(\pihat^t) - J_{\Mstar}^\omega(\pistar) \\
    \leq& \sum_{t=1}^{T}\frac{\gamma}{1 - \gamma}\mathbb{E}_{(s, a) \sim D_{\omega, \pihat_t}}[\expec_{s' \sim \Mstar(s, a)}[V_{\Mhat_t}^{\pihat_t}(s')] - \expec_{s'' \sim \Mhat_t(s, a)}[V_{\Mhat_t}^{\pihat_t}(s'')]]  \nonumber\\
        &+\sum_{t=1}^{T}\frac{\gamma}{1 - \gamma}\expec_{(s, a) \sim D_{\omega, \pistar}}[\expec_{s'' \sim \Mhat_t(s, a)}[V^{\pihat_t}_{\Mhat_t}(s'')] - \expec_{s' \sim \Mstar(s, a)}[V^{\pihat_t}_{\Mhat_t}(s')]]  \nonumber\\
        &+T \epsilon_{po}\\
    =& \sum_{t=1}^T \ell^{sn}_t(\hat M_t) + T \epsilon_{po}.
\end{align*}
and using FTRL gives us:
\begin{align*}
    \sum_{t=1}^T \ell_t^{sn}(\hat M_t) &\leq \min_{M \in \Mcal}\sum_{t=1}^T \ell^{sn}_t(M) + O(\sqrt{T}) \nonumber \\
    &\leq T\epsilon_{model}^{mm} + O(\sqrt{T})).
\end{align*}
And taking $\frac{1}{T}$ on both sides completes the proof.
\end{proof}
We remark that here we see that both loss function gives us a $\tilde O(\frac{1}{\sqrt{T}})$ regret rate.

\subsection{Finite Sample Analysis of \ouralgmm}\label{sec:finite}

In this section, we perform a finite sample analysis of \pref{alg:model_mm} using the online-to-batch technique \citep{cesa2004generalization}. First, let's introduce a new function class. This function class is constructed with the model class $\Mcal$, and it takes state, action, and value function triplets as inputs. Denote $\Xcal = \Scal \times \Acal \times \Vcal$, where $\Vcal$ is the function class,
\begin{align*}
    \Hcal = \biggl\{h: \Xcal \to \RR \mid \exists M \in \Mcal
    ~~~\text{s.t.}~~~ \forall (s,a,v) \in \Xcal, h(s,a,v) = \int M(s'\mid s,a)v(s') \;d(s')\biggr\}.
\end{align*}
Denote random variable $x_t = (s_t,a_t,v_t)$, we note the generation of the random variable $y_t$ where
\begin{align*}
    y_t = v_t(s'_t), ~~~~ s'_t \sim M^\ast(s_t,a_t).
\end{align*}
Denote $\Fcal_t = \{(X_1, Y_1), \dots, (X_{t-1}, Y_{t-1})\}$, and at each round, we use the loss function
\begin{align*}
    \hat \ell_t(h) = |h(s_t,a_t,v_t) - y_t| + |h(\tilde s_t,\tilde a_t,v_t) - \tilde y_t)|,
\end{align*}
where $(s,a) \sim \Dcal_t, s' \sim M^\ast(s,a)$ and $(\tilde s,\tilde a) \sim \Dcal_{\pi^\ast}, \tilde s' \sim M^\ast(\tilde s,\tilde a)$. Further, define
\begin{align*}
    Z_t = (\hat \ell(\hat h_t) - \ell(\hat h_t)) -  (\hat \ell(h^\ast) - \ell(h^\ast)).
\end{align*}
Note that $Z_t$ is a martingale difference sequence adapted to the filtration $\Fcal_t$, such that
\begin{align*}
    \EE[Z_t \mid \Fcal_t] = 0.
\end{align*}
Meanwhile, we also have that $|Z_t| \leq \frac{4}{1-\gamma}$. Then by \pref{lem:azuma}, we have with probability $1-\delta$,
\begin{align*}
    \sum_{t=1}^T {Z_t} \leq \sqrt{\frac{32T\log(1/\delta)}{(1-\gamma)^2}},
\end{align*}
then taking a union bound on $\Hcal$, we have for any $h$,
\begin{align*}
    \sum_{t=1}^T {Z_t} \leq \sqrt{\frac{32T\log(|\Hcal|/\delta)}{(1-\gamma)^2}}.
\end{align*}
Then by \pref{lem:conversion} and realizability, we have
\begin{align*}
\sum_{t=1}^T \ell_t(\hat h_t) &\leq R(T) +  \sum_{t=1}^T \ell_t(h^\ast) + Z_t \\
&\lesssim \frac{1}{1-\gamma}\sqrt{ T\log(|\Hcal|/\delta) },
\end{align*}
since $R(T) = O(\sqrt{T})$ as well. For simplicity, let's further assume that $\epsilon_{po} \leq 0$, then we have the following regret bound:
\begin{align*}
    \frac{1}{T}\sum_{t=1}^T J_{\Mstar}^\omega(\pihat_t) - J_{\Mstar}^\omega(\pistar) &\leq \frac{1}{T}\sum_{t=1}^T \ell_t(\hat h_t) \\
    &\leq \tilde O\left( \frac{T^{1/2}\log(|\Hcal|/\delta)^{1/2}}{(1-\gamma)} \right),
\end{align*}
Converting to sample complexity we have, by taking
\begin{align*}
    T = \tilde O\left(\frac{\log(|\Hcal|/\delta)}{(1-\gamma)^2\epsilon^2}\right),
\end{align*}
we have with probability $1-\delta$,
\begin{align*}
   \frac{1}{T} \sum_{t=1}^T J_{\Mstar}^\omega(\pihat_t) - J_{\Mstar}^\omega(\pistar) \leq \epsilon.
\end{align*}
Here we add a few remarks. First is that this result does not directly compare to the traditional MBRL sample complexity because a tight bound on the size of $\Hcal$ is instance-dependent. Second, this result does not contradict to the difficulties mentioned in \pref{sec:moment} because the issue of sampling from the learned model is implicitly addressed by the construction of $\Hcal$. 
\subsection{Auxiliary Lemmas}

\begin{lemma}[Hoeffding-Azuma Inequality]\label{lem:azuma}
Suppose $X_1, \dots, X_T$ is a martingale difference sequence where $|X_t| \leq R$. Then for all $\epsilon > 0$ and all positive integer T, we have
\begin{align*}
    P \left( \sum_{t=1}^T X_i \geq \epsilon \right) \leq \exp \left( \frac{-\epsilon^2}{2TR^2} \right).
\end{align*}
\end{lemma}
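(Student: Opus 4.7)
The plan is to prove this classical concentration inequality via the standard Chernoff bounding technique combined with a conditional version of Hoeffding's lemma. First I would fix an arbitrary $\lambda > 0$ and use the exponential Markov (Chernoff) inequality to write
\begin{equation*}
    P\left(\sum_{t=1}^T X_t \geq \epsilon\right) \leq e^{-\lambda \epsilon}\, \mathbb{E}\!\left[\exp\!\left(\lambda \sum_{t=1}^T X_t\right)\right].
\end{equation*}
The goal then reduces to bounding the moment generating function of the partial sum, which I would handle by peeling off one term at a time using the martingale difference structure.

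The key step is a conditional Hoeffding-type bound: for each $t$, since $X_t$ is $\mathcal{F}_t$-measurable with $\mathbb{E}[X_t \mid \mathcal{F}_{t-1}] = 0$ and $|X_t| \leq R$, I would show that $\mathbb{E}[\exp(\lambda X_t) \mid \mathcal{F}_{t-1}] \leq \exp(\lambda^2 R^2 / 2)$. The standard argument here uses convexity of $x \mapsto e^{\lambda x}$ to bound $e^{\lambda X_t}$ above by the affine interpolant between its values at $\pm R$, takes conditional expectations using the zero-mean property, and then applies the elementary inequality $\tfrac{1}{2}(e^{\lambda R} + e^{-\lambda R}) \leq \exp(\lambda^2 R^2/2)$ that follows from comparing Taylor series coefficients.

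Armed with the per-step bound, I would iterate by the tower property: conditioning on $\mathcal{F}_{T-1}$ peels the last factor, giving
\begin{equation*}
    \mathbb{E}\!\left[\exp\!\left(\lambda \sum_{t=1}^T X_t\right)\right] \leq \exp\!\left(\tfrac{\lambda^2 R^2}{2}\right) \mathbb{E}\!\left[\exp\!\left(\lambda \sum_{t=1}^{T-1} X_t\right)\right],
\end{equation*}
and induction on $T$ yields the overall bound $\exp(T \lambda^2 R^2 / 2)$. Substituting back into the Chernoff inequality leaves a free parameter $\lambda$, and optimizing the exponent $-\lambda \epsilon + T \lambda^2 R^2 / 2$ over $\lambda > 0$ gives the minimizer $\lambda^\star = \epsilon/(T R^2)$, which produces the claimed bound $\exp(-\epsilon^2 / (2 T R^2))$.

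I expect the only delicate part of the argument to be the conditional Hoeffding lemma itself, since it requires the zero-mean constraint to be used in combination with the boundedness, and the sub-Gaussian MGF bound is typically the cleanest place to make a small algebraic slip. Everything else (Markov's inequality, telescoping via the tower property, and the one-dimensional optimization over $\lambda$) is routine.
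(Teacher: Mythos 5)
Your proposed argument is correct and complete in outline: the Chernoff bound, the conditional Hoeffding lemma $\mathbb{E}[e^{\lambda X_t} \mid \mathcal{F}_{t-1}] \leq e^{\lambda^2 R^2/2}$ (which follows from Hoeffding's lemma applied to a zero-mean variable supported on $[-R,R]$), the tower-property telescoping, and the optimization $\lambda^\star = \epsilon/(TR^2)$ all check out and reproduce the stated constant exactly. The paper itself states this lemma as a classical auxiliary result without proof, so there is no in-paper argument to compare against; what you have written is the standard textbook derivation of Azuma--Hoeffding and is the right way to fill that gap.
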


\begin{lemma}[Online-to-batch Conversion]\label{lem:conversion}
   Consider a sequential function estimation problem with function class $\Hcal$. Let $\Xcal$ be the input space and $\Ycal$ be the target space. Assume each the inputs and targets $(x_t,y_t)$ are generated i.i.d., where $x_t \sim \rho(x_1,y_1,\dots,x_{t-1},y_{t-1})$, $y \sim p^\ast(\cdot \mid x_t)$. Let $\hat h_t$ be the return of an online learning algorithm $A$, taking inputs $\{x_1,y_1,\hat \ell_1,\dots,x_{t-1},y_{t-1},\hat \ell_{t-1}\}$, where $\hat \ell_t$ is the empirical version of loss function $\ell_t$ at round $t$. Define
   \begin{align*}
    Z_t = (\hat \ell(\hat h_t) - \ell(\hat h_t)) -  (\hat \ell(h^\ast) - \ell(h^\ast)),
\end{align*}
where $h^\ast = \min_{h \in \Hcal} \sum_{t=1}^T \ell_t(h)$, we have
\begin{align*}
    \sum_{t=1}^T \ell_t(\hat h_t) \leq R(T) + \sum_{t=1}^T \ell_t(h^\ast) + Z_t,
\end{align*}
where $R(T)$ is the regret of running $A$ at round T.
\end{lemma}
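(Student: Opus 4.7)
The plan is to prove this as a purely algebraic identity combined with the online regret guarantee; no probabilistic argument is needed inside the lemma itself (the martingale/concentration use of $Z_t$ sits outside, in the calling proof). Concretely, I would start from the left-hand side $\sum_{t=1}^T \ell_t(\hat h_t)$ and insert $\pm \hat \ell_t(\hat h_t)$ and $\pm \hat \ell_t(h^\ast)$ so that the differences $\ell_t - \hat \ell_t$ can later be collected into the $Z_t$ terms.

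The key steps, in order, would be: first, write
\begin{align*}
\sum_{t=1}^T \ell_t(\hat h_t) = \sum_{t=1}^T \hat \ell_t(\hat h_t) + \sum_{t=1}^T \bigl(\ell_t(\hat h_t) - \hat \ell_t(\hat h_t)\bigr).
\end{align*}
Second, invoke the definition of the regret of algorithm $A$ against the comparator $h^\ast$, which gives $\sum_{t=1}^T \hat \ell_t(\hat h_t) \le \sum_{t=1}^T \hat \ell_t(h^\ast) + R(T)$; substituting yields
\begin{align*}
\sum_{t=1}^T \ell_t(\hat h_t) \le R(T) + \sum_{t=1}^T \hat \ell_t(h^\ast) + \sum_{t=1}^T \bigl(\ell_t(\hat h_t) - \hat \ell_t(\hat h_t)\bigr).
\end{align*}
Third, add and subtract $\sum_t \ell_t(h^\ast)$ on the right and regroup the remaining empirical-versus-population differences into the $Z_t$ pattern $(\hat \ell_t(\hat h_t)-\ell_t(\hat h_t)) - (\hat \ell_t(h^\ast)-\ell_t(h^\ast))$, obtaining the stated inequality with $\sum_t Z_t$ (up to an overall sign, which is immaterial when one subsequently applies Azuma--Hoeffding symmetrically as in the finite-sample analysis above).

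There is essentially no hard step: the argument is a textbook online-to-batch decomposition. The only thing to keep careful track of is the bookkeeping of signs when matching the definition of $Z_t$ given in the lemma to the telescoping one obtains after adding and subtracting the empirical and population losses for both $\hat h_t$ and $h^\ast$. I would also remark that $\hat h_t$ is $\Fcal_t$-measurable by construction (it depends only on data through round $t-1$), which is the property needed elsewhere to establish the martingale difference condition $\expec[Z_t \mid \Fcal_t]=0$ and the boundedness $|Z_t|$ needed for the Azuma step; but this observation is used by the caller of the lemma, not inside the present proof.
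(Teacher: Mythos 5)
Your proposal is correct, and it is the standard online-to-batch decomposition: split $\ell_t(\hat h_t)$ into $\hat\ell_t(\hat h_t)$ plus the empirical-vs-population gap, apply the regret guarantee of $A$ against the comparator $h^\ast \in \Hcal$ (which holds for any fixed element of $\Hcal$, in particular the population minimizer), and regroup the residual gaps into the $Z_t$ terms. The paper itself states \cref{lem:conversion} as an auxiliary lemma without proof, so there is no in-paper argument to compare against; your derivation is exactly what would be supplied. Your sign bookkeeping is also right to flag: with $Z_t$ defined as $(\hat\ell_t(\hat h_t)-\ell_t(\hat h_t))-(\hat\ell_t(h^\ast)-\ell_t(h^\ast))$, the algebra actually produces $-\sum_{t=1}^T Z_t$ on the right-hand side rather than $+\sum_{t=1}^T Z_t$, i.e.\ the lemma as stated has the sign of $Z_t$ reversed (or equivalently the displayed bound should carry $\sum_t Z_t$ with the roles of $\ell_t$ and $\hat\ell_t$ swapped in the definition); since $-Z_t$ is also a bounded martingale difference sequence, the subsequent Azuma--Hoeffding step in the finite-sample analysis is unaffected, exactly as you observe.
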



\section{Experiment details} \label{app:exp}

\subsection{Environment Details}
In this section, we provide details on the environments we used in
\pref{sec:experiment}, epecially the non-standard benchmarks such as
the helicopter and WideTree.

\subsubsection{Helicopter}\label{app:helicopter}
The helicopter domain is first proposed in
\citet{abbeel2005exploration} and is also used in
\citet{ross2012agnostic}. In this paper we focus on the \textit{hover}
task. The environment has a 20-dimensional state space and
4-dimensional action space.

The dynamics of the system are nonlinear and are parameterized using
mass and inertial quantities as a $20$-dimensional vector. The model
class is $\mathbb{R}^{20}$ and corresponds to the parameter vector
used to define the dynamics.
The cost function is
\begin{align*}
  c(x,u) = \sum_{h=1}^H x^\top_h Q x_h + u^\top_h R u_h + x^\top_H Q_H x_H,
\end{align*}
i.e., we penalize any deviation from the origin, and any control
effort expended.

To understand why a single backward pass on the desired trajectory
would be equivalent to~\pref{alg:policy}, let us revisit the objective
in~\pref{alg:policy}:
$$ \expec_{s \sim \nu}\left[V^{\pihat_t}_{\Mhat_t}(s) - \min_{a \in
    \Acal}[Q^{\pihat_t}_{\Mhat_t}(s,a)]\right] \leq \epsilon_{po} $$
In the above objective, $\nu$ is the exploration distribution which in
this case, is simply the desired trajectory that keeps the trajectory
at hover ($\{s_{hover}, u_{hover}, \ldots, s_{hover}\}$). $\Mhat_t$ is
the model we are optimizing in, and the above objective states that we
need to find a policy $\pihat_t$ that is as good as the optimal policy
\textit{only} on the desired trajectory. Thus, to computet this we
linearize the nonlinear dynamics of $\Mhat_t$ around the desired
trajectory (forward pass) and then compute the optimal LQR controller
for the linearized dynamics (backward pass.) This gives us a policy
$\pihat_t$ that is as good as optimal \textit{only} along the desired
trajectory. Note that this requires a single backward pass while
achieving~\eqref{eq:oc} requires multiple iterations of iLQR involving
multiple bcakward passes. This highlights the computational advantage
of~\pref{alg:policy} over traditional optimal planning methods.

\subsubsection{WideTree}\label{app:widetree}
This MDP is a variant of the one showed in~\pref{fig:toy}. It is
described in~\pref{fig:widetree}.

\begin{figure}[h]
  \centering
  \includegraphics[width=0.5\linewidth]{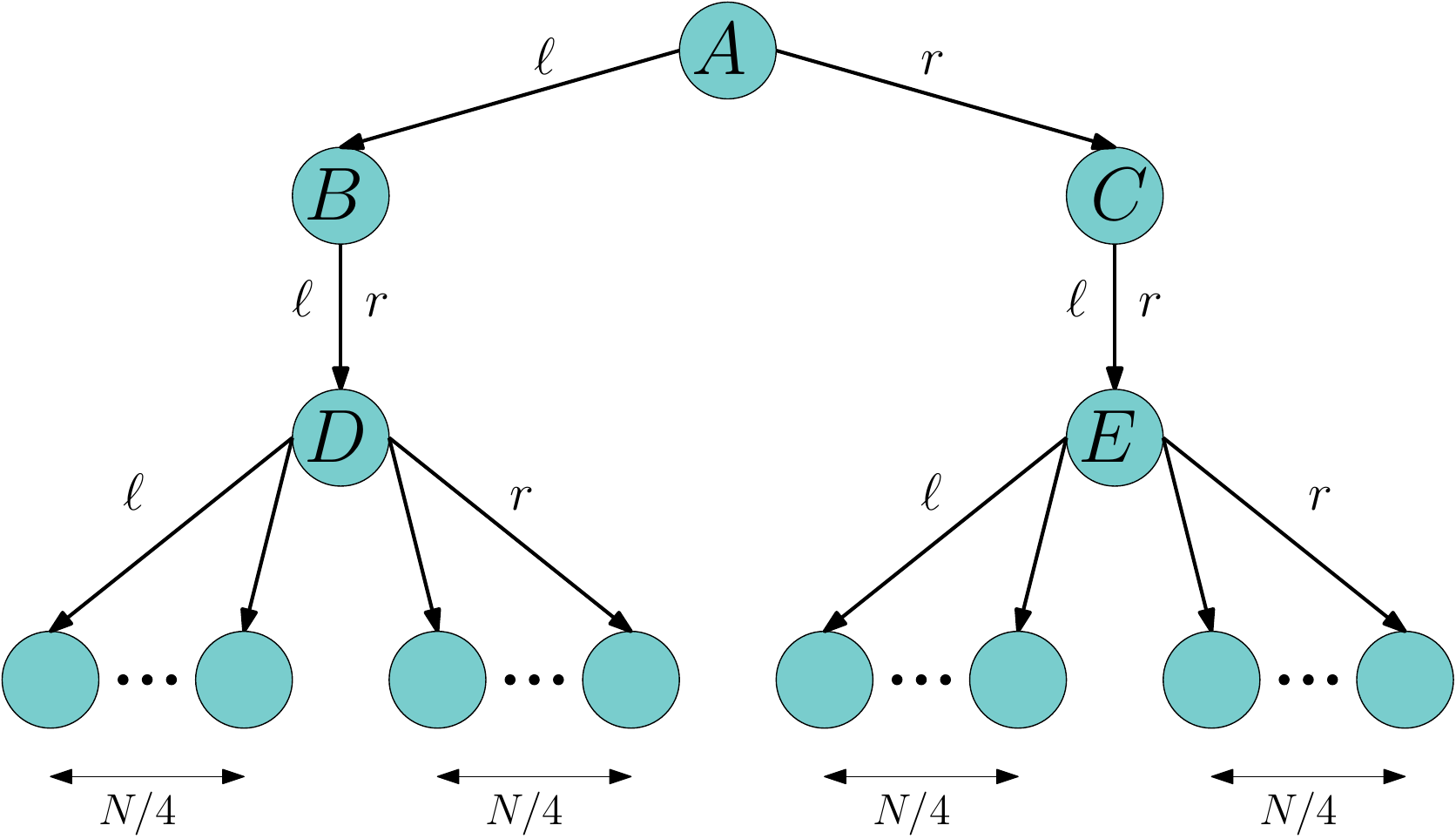}
  \caption{The Widetree domain used in experiments. We have an MDP
    with $N+5$ states where $N$ of them are terminal states (or
    leaves.) Each state has two actions $\ell$ and $r$. At states $B$
    and $C$, both actions lead to the same state $D$ and $E$
    respectively. At states $D$ and $E$, both actions stochastically
    transition the state to one of $N/4$ terminal states with uniform
    probability. The true dynamics $\Mstar$ is shown in the
    figure. The cost $c(s,a) = \epsilon<<1$ at any $s \neq B$ and
    $c(B, a) = 1$, for any action $a$. Thus, the action taken at $A$
    is critical. Model class $\Mcal$ contains only two models:
    $M^{\mathsf{good}}$ which captures dynamics at $A$ correctly but
    makes mistakes everywhere else, while $M^{\mathsf{bad}}$ makes
    mistakes only at $A$ but captures true dynamics everywhere else.}
  \label{fig:widetree}
\end{figure}

We implement~\pref{alg:model_mm} using Hedge~\cite{freund1997hedge} by
maintaining a discrete distribution $(p, 1-p)$ over the two models
$M^{\mathsf{good}}$ and $M^{\mathsf{bad}}$. We use $\epsilon=0.9$ for
the hedge update.

\subsubsection{Linear Dynamical System}
\label{app:lds}
In this experiment, the task is to control a linear dynamical system
where the true dynamics are time-varying but the model class only
contains time-invariant linear dynamical models. The system has a 5-D
state, a 1-D control input, and we are tasked with controlling it for
a horizon of $100$ timesteps. The true dynamics
evolves according to $x_{t+1} = A_tx_t + B_tu_t$ where,
\begin{align*}
A_t =
  \begin{bmatrix}
    0.5 & 0 \\ 0 & 0.5
  \end{bmatrix}
\end{align*}
when $t$ is even and,
\begin{align*}
  A_t =
  \begin{bmatrix}
    1.5 & 0 \\ 0 & 1.5
  \end{bmatrix}
\end{align*}
when $t$ is odd. The model class $\Mcal$ consists of linear dynamical
models of the form $\{x_{t+1} = Ax_t + Bu_t\}$ and thus cannot model
the true dynamics exactly making it an agnostic model class.

The cost function is quadratic as follows,
\begin{align*}
  c(\mathbf{x}, \mathbf{u}) = \sum_{t=0}^{99} u_t^Tu_t + x_{100}^Tx_{100}
\end{align*}
where $\mathbf{x}, \mathbf{u}$ represent the entire state and control
trajectory. Note that the cost function only penalizes the control
input at every timestep and penalizes the state only at the final
timestep.

Given a model $(A, B) \in \Mcal$, we can compute the optimal policy
and its value function in closed form by using the finite horizon
discrete ricatti solution~\cite{bertsekas2005dynamic}. The value
function is represented using matrices $P_t \in \mathbb{R}^{5 \times 5}$
where $V_t(x) = x^TP_tx$ denotes the cost to go from time $t$ until
the end of horizon. Thus, we can construct the loss for any model $M =
(A, B)$
in~\pref{alg:model_mm} for \ouralgmm{} as,
\begin{align*}
  \ell_t(A, B) = \expec_{(x_h, u_h, x_{h+1}) \sim \Dcal_t}
  (x_{h+1}^TP_{h+1}x_{h+1} - (Ax_h + Bu_h)^TP_{h+1}(Ax_h + Bu_h))^2
\end{align*}
whereas the MLE loss for \sysid{} is simply,
\begin{align*}
  \ell_t(A, B) = \expec_{(x_h, u_h, x_{h+1}) \sim \Dcal_t} \|x_{h+1} -
  (Ax_h + Bu_h)\|_2^2
\end{align*}

\subsubsection{Maze}\label{app:maze}
\begin{figure}
  \centering
  \includegraphics[width=0.3\linewidth]{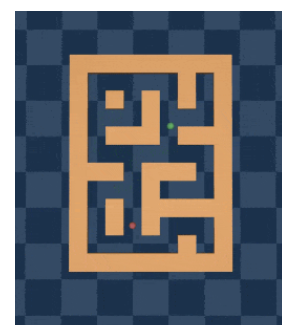}
  \caption{The PointMaze (large) environment. Picture taken
    from \url{https://sites.google.com/view/d4rl/home}.}
  \label{fig:pointmaze}
\end{figure}
For the maze environment, we adopt the PointMaze (large) task from
D4RL \citep{fu2020d4rl}. We present a visualization of the task in
\pref{fig:pointmaze}. While the original offline dataset contains
4000000 samples, we only take 10000 and 50000 samples in our
experiment.

\subsection{Additional Mujoco Experiment} \label{app:failure} In this
section, we show an additional mujoco experiment. In this case, our
algorithm is outperformed by \mbposysid{} initially and reaches the
same performance given enough data. We hypothesize that the main cause
for this is that the explore distribution does not
have high quality in this case, which suggests that
\pref{alg:policy} is more sensitive to the quality of the exploration
distribution than \mbposysid{}, as described in~\pref{sec:alg_mle}.

\begin{figure}
  \centering
  \includegraphics[width=0.5\linewidth]{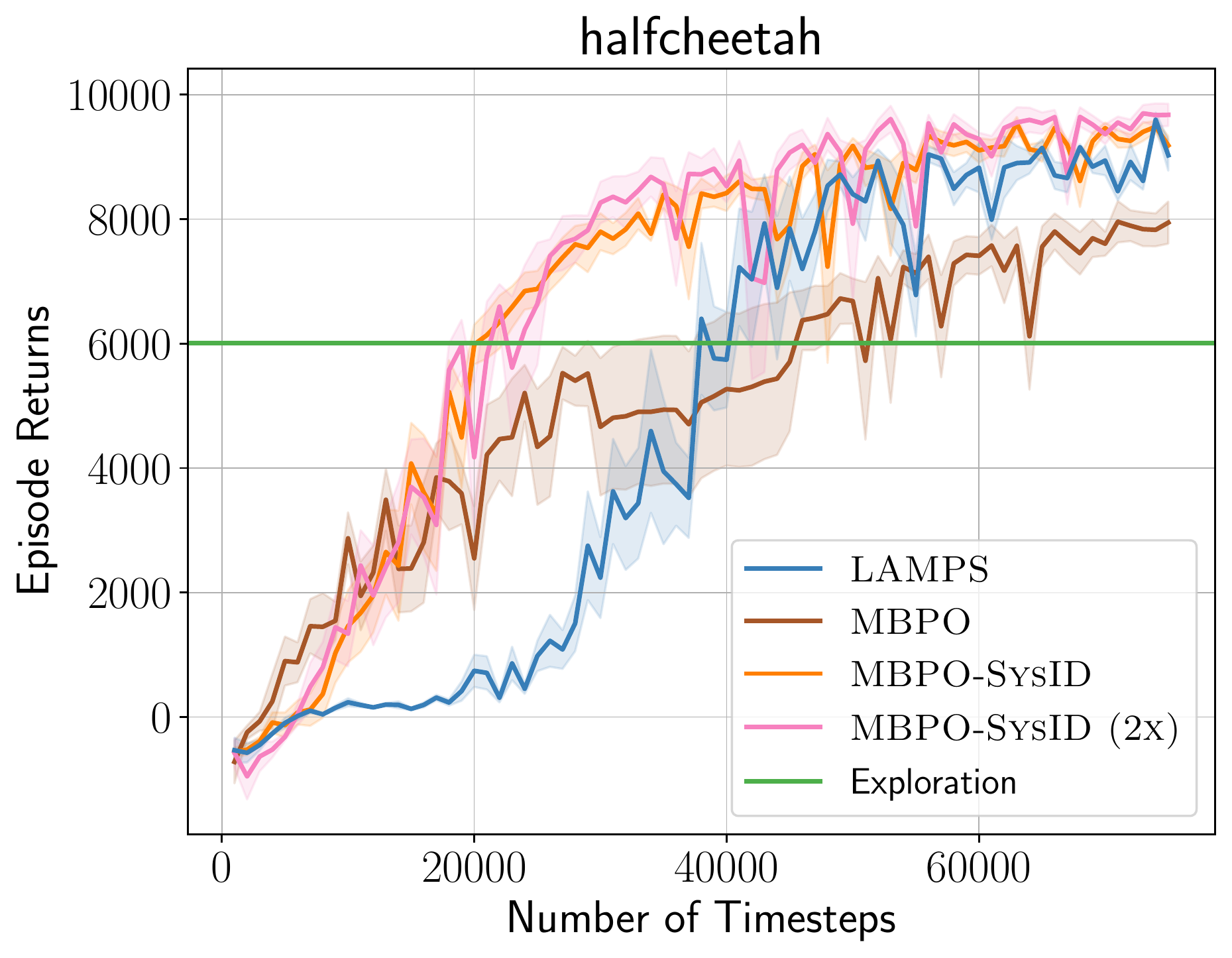}
  \caption{Result on the Halfcheetah benchmark. The results are
    average over 5 random seeds and the shaded area denotes the
    standard error. We use 10000 exploration samples. Note that in
    this case, \ouralg{} takes more sample to reach the asymptotic
    performance of \mbposysid{}.}
  \label{fig:cheetah}
\end{figure}

\subsection{Implementation Details and Hyperparameters}\label{app:mujoco}
In this section, we provide the implementation details and
hyperparameter we used in our
experiments in \pref{sec:gym} and \pref{sec:maze}. As mentioned in the
main text, our
implementation for \mbpo{} is based on \citet{pineda2021mbrl}, so does
\mbposysid{} and \ouralg{}. For model training, all baselines use the same ensemble
of models as in the original \mbpo, while \mbposysid{} and \ouralg{} use both
the data aggregation and exploration data to train the model.
For the policy training,
\mbposysid{} uses the same branch update as in \mbpo{} with Soft Actor-Critic (SAC), and
\ouralg{} uses a different objective by replacing $Q$-value with the disadvantage term
during the actor update step (note that this implies that the actor update is also
only based
on the exploration distribution.) We present the detailed algorithm in \pref{alg:deep}.

{\begin{algorithm}[t]
\caption{Deep \ouralg}
\begin{algorithmic}[1]
\REQUIRE Number of iterations $T$, model ensemble $M_\theta$, policy
$\pi_\phi$, value function $q_\psi$, exploration dataset $\nu$, policy rollout step $H$, model rollout $E$, model rollout horizon $k$.
\STATE Initialize data aggregation $\Dcal = \emptyset$.
\FOR{$t = 1, \dots, T$}
\STATE Train model ensemble $M_\theta$ with MLE with $\rho = \frac{1}{2}\nu + \frac{1}{2}\Dcal$: with $\frac{1}{2}$ probability sampling from $\nu$ and $\frac{1}{2}$ probability sampling from $\Dcal$.
\FOR{$h = 1, \dots, H$}
\STATE Collect data in $\Mstar$ by rolling out $\pi_\phi$
\STATE Initialize model buffer $\Dcal_{model} = \emptyset$.
\FOR{$e = 1, \dots, E$}
\STATE Sample state $s$ uniformly from $\rho$, rollout $k$ step with $\pi_\theta$ and add trajectory to $\Dcal_{model}$
\STATE Update soft $Q$-value function $q_\psi$ with $\Dcal_{model}$.
\STATE Update policy $\pi_\phi$ with
\begin{align*}
    \Jcal(\pi_\phi,\nu,q_\psi) = \EE_{s,a \sim \nu}\left[ \EE_{\tilde a \sim \pi_\phi(s)} \log(\pi_\phi(\tilde a \mid s)) - (q_\psi(s,\tilde a) - q_\psi(s,a))\right]
\end{align*}
\ENDFOR
\ENDFOR
\ENDFOR
\STATE \textbf{Return} Sequence of policies $\{\pihat_t\}_{t=1}^{T+1}$
\end{algorithmic}\label{alg:deep}
\end{algorithm}}

We use the default
hyperparameter for most case, but we present them for
completeness. Note that the hyperparameters are the same for all
baselines, but \mbposysidcompute{} uses double the number indicated with
the hyperparameter ends with $(\ast)$.

\begin{table}[h]
  \caption{Hyperparameters for HalfCheetah}
  \centering
  \begin{tabular}{cc}
    \toprule
    &\; Value  \\
    \hline
    Exploration sample size                                  &\; 10000  \\
    Ensemble size                                            &\; 7      \\
    Ensemble elite number                                    &\; 5      \\
    Model learning rate                                      &\; 0.001   \\
    Model batch size                                         &\; 256     \\
    Rollout step in learned model ($\ast$)                   &\; 400     \\
    Rollout  length                                          &\; $1 \to 1$     \\
    Number policy updates ($\ast$)                           & 20       \\
    Policy type                                              & Stochastic Gaussian Policy \\
    \toprule
  \end{tabular}\label{table:halfcheetah}\end{table}

\begin{table}[h]
  \caption{Hyperparameters for Ant}
  \centering
  \begin{tabular}{cc}
    \toprule
    &\; Value  \\
    \hline
    Exploration sample size                                  &\; 10000  \\
    Ensemble size                                            &\; 7      \\
    Ensemble elite number                                    &\; 5      \\
    Model learning rate                                      &\; 0.0003   \\
    Model batch size                                         &\; 256     \\
    Rollout step in learned model ($\ast$)                   &\; 400     \\
    Rollout  length                                          &\; $1 \to 25$     \\
    Number policy updates ($\ast$)                           & 20       \\
    Policy type                                              & Stochastic Gaussian Policy \\
    \toprule
  \end{tabular}\label{table:ant}\end{table}

\begin{table}[h]
  \caption{Hyperparameters for Hopper}
  \centering
  \begin{tabular}{cc}
    \toprule
    &\; Value  \\
    \hline
    Exploration sample size                                  &\; 10000  \\
    Ensemble size                                            &\; 7      \\
    Ensemble elite number                                    &\; 5      \\
    Model learning rate                                      &\; 0.001   \\
    Model batch size                                         &\; 256     \\
    Rollout step in learned model ($\ast$)                   &\; 400     \\
    Rollout  length                                          &\; $1 \to 15$     \\
    Number policy updates ($\ast$)                           & 40       \\
    Policy type                                              & Stochastic Gaussian Policy \\
    \toprule
  \end{tabular}\label{table:hopper}\end{table}

\begin{table}[h]
  \caption{Hyperparameters for Humanoid}
  \centering
  \begin{tabular}{cc}
    \toprule
    &\; Value  \\
    \hline
    Exploration sample size                                  &\; 10000  \\
    Ensemble size                                            &\; 7      \\
    Ensemble elite number                                    &\; 5      \\
    Model learning rate                                      &\; 0.0003   \\
    Model batch size                                         &\; 256     \\
    Rollout step in learned model ($\ast$)                   &\; 400     \\
    Rollout  length                                          &\; $1 \to 25$     \\
    Number policy updates ($\ast$)                           & 20       \\
    Policy type                                              & Stochastic Gaussian Policy \\
    \toprule
  \end{tabular}\label{table:humanoid}\end{table}

\begin{table}[h]
  \caption{Hyperparameters for Walker}
  \centering
  \begin{tabular}{cc}
    \toprule
    &\; Value  \\
    \hline
    Exploration sample size                                  &\; 10000  \\
    Ensemble size                                            &\; 7      \\
    Ensemble elite number                                    &\; 5      \\
    Model learning rate                                      &\; 0.001   \\
    Model batch size                                         &\; 256     \\
    Rollout step in learned model ($\ast$)                   &\; 400     \\
    Rollout  length                                          &\; $1 \to 1$     \\
    Number policy updates ($\ast$)                           & 20       \\
    Policy type                                              & Stochastic Gaussian Policy \\
    \toprule
  \end{tabular}\label{table:walker}\end{table}

\begin{table}[h]
  \caption{Hyperparameters for PointMaze}
  \centering
  \begin{tabular}{cc}
    \toprule
    &\; Value  \\
    \hline
    Exploration sample size                                  &\; 10000/50000  \\
    Ensemble size                                            &\; 7      \\
    Ensemble elite number                                    &\; 5      \\
    Model learning rate                                      &\; 0.001   \\
    Model batch size                                         &\; 256     \\
    Rollout step in learned model ($\ast$)                   &\; 400     \\
    Rollout  length                                          &\; $1 \to 1$     \\
    Number policy updates ($\ast$)                           & 20       \\
    Policy type                                              & Deterministic Policy \\
    \toprule
  \end{tabular}\label{table:maze}\end{table}



\end{document}